%% file: main.tex
\documentclass{amsart}

\input{preamble}

\begin{document}

\title{Bayesian learning for the stochastic shortest path problem}
\author{Chon Wai Ho$^1$, Sumeetpal S. Singh$^2$, Jiaqi Guo$^1$}
\email{cwh38@cam.ac.uk}
\email{sumeetpals@uow.edu.au}
\email{jg851@cam.ac.uk}

\address{$^1$Department of Engineering, University of Cambridge, UK}
\address{$^2$School of Mathematics and Physics, University
 of Wollongong, Wollongong, Australia}

\begin{abstract}
    \input{sections/abstract}
\end{abstract}

\keywords{Improper policies; Markov chain Monte Carlo; Reinforcement learning; Stochastic shortest path; Thompson sampling; Uncertainty quantification.}
\maketitle

\input{sections/intro}

\input{sections/prelim}

\input{sections/learning}

\input{sections/theoretical_results}

\input{sections/numerical}

\input{sections/conclude}

\section*{Acknowledgements}
S.S.~Singh holds the Tibra Foundation professorial chair and gratefully acknowledges research funding as follows: ``This material is based upon work supported by the Air Force Office of Scientific Research under award number FA2386-23-1-4100''. C.W. Ho was supported by the UK Engineering and Physical Sciences Research Council (EPSRC) grant EP/T517847/1 for the University of Cambridge Doctoral Training Programme. J. Guo was supported by the China Scholarship Council for the PhD programme.

\bibliographystyle{apalike}
\bibliography{bib}

  \appendix
 \input{sections/appendix}

\end{document}

%% file: preamble.tex
\usepackage{graphicx}
\usepackage{microtype}
\usepackage{indentfirst}
\usepackage{csquotes}
\usepackage[dvipsnames]{xcolor}

\usepackage{amssymb,amsthm,amsmath}
\usepackage{amsfonts}
\usepackage{mathtools}
\usepackage{bbm}
\usepackage{thmtools, thm-restate}
\usepackage[mathscr]{euscript}
 \let\mathscr\relax
\usepackage[scr]{rsfso}

\usepackage{amsaddr}
\usepackage{enumitem}
\usepackage{booktabs}
\usepackage{caption} 
\usepackage{subcaption}
\usepackage{float}
\usepackage[plain,noend]{algorithm2e}
\usepackage{tikz}
\usetikzlibrary{arrows.meta, positioning}
\usepackage{accsupp}
\usepackage{needspace}

\usepackage{natbib}
\usepackage{bookmark}
\usepackage{hyperref,fancyhdr,etoolbox}
\hypersetup{ colorlinks=true, linkcolor=black, filecolor=black, urlcolor=black }
\usepackage[noabbrev]{cleveref}
\crefformat{equation}{#2#1#3}
\crefrangeformat{equation}{#3#1#4 to~#5#2#6}

\newtheorem{theorem}{Theorem}[]
\newtheorem{definition}{Definition}
\newtheorem{example}{Example}
\newtheorem{lemma}{Lemma}
\newtheorem{proposition}{Proposition}
\newtheorem{corollary}{Corollary}
\newtheorem{assumption}{Assumption}
\newtheorem{remark}{Remark}

\renewcommand{\cdot}[1][.5]{%
  \mathbin{\vcenter{\hbox{\scalebox{#1}{$\bullet$}}}}%
}

\DeclareMathOperator*{\argmax}{arg\,max}

\newcommand{\depth}{d}
\newcommand{\pabc}{\hat{p}_{\epsilon}}

%% file: sections/abstract.tex
 Sequential decision-making problems are often modelled as a Markov decision process (MDP). We focus on the stochastic shortest path (SSP) problem, which is an infinite-horizon undiscounted MDP with absorbing terminal states. We develop a Bayesian framework to learn the optimal decision strategy through interactions with the decision-making task. Specifically, we learn the optimal action-value function $Q^*$, but unlike many existing Bayesian approaches, we do not rely on unrealistic modelling assumptions and ad-hoc approximations. Our approach is to directly construct the posterior beliefs for $Q^*$ through  Bellman's optimality equations. For deterministic rewards, we characterise the posterior as a distribution with a manifold density. To facilitate simpler inference, we relax the likelihood so that a Lebesgue density exists. The flip side is to create unidentifiability issues. Specifically, the relaxed posterior can have
significant mass on improper decision rules, while the exact posterior will not. 
 We also calculate the exact posterior probabilities for optimal action selections for the tabular parametrisation of $Q^*$, a Gaussian likelihood relaxation and a Gaussian prior, which is useful in benchmarking studies. Numerical studies on variants of the Deep Sea benchmark verify our findings. We demonstrate that our framework faithfully quantifies uncertainty and, compared to other temporal-difference-based Bayesian methodologies, is more data efficient. We conclude with recommendations for future work.

%% file: sections/intro.tex
\section{Introduction}
In many sequential decision-making problems, the aim is to find and execute the best decisions based on some quantifiable objective, such as the rewards accrued over the series of decisions. However, not all the information necessary to find the best choices is available \textit{a priori}. Consequently, interaction with the problem's environment---by executing decisions and gathering more information---is needed. This should be done sparingly though, as frequent interactions may be computationally expensive or even impractical.

A suitable strategy for interacting with the environment 
is required. A purely {\it exploitative} strategy is one that executes the ``best guess" of the optimal decision based on the available information.  Alternatively, a strategy that incorporates \emph{exploration} is one that does not use this best guess all the time, often randomising the choice  in some way, as doing so could reveal new information that eventually leads to better decisions. Ideally, one should use a strategy  that is data-efficient, which strikes the right balance between exploration and exploitation. In particular, it should produce a sequence of decisions that rapidly converges to the optimal choices.

An example of a strategy that balances exploration and exploitation can be found in the context of multi-armed bandit (MAB) problems \citep{tstutorial,suttonbartobook}, which is a well-studied class of decision-making problems. Thompson sampling (TS)---also known as posterior sampling \citep{ts1933,tstutorial}---is a Bayesian strategy that makes decisions according to the posterior probability that the chosen action is optimal. Thompson sampling has been proven to be near-optimal compared to other exploration-exploitation strategies \citep{tsagrawal2013contextual,tsshidong2018contextualbayesregret}. 

Thompson sampling has been applied to a larger class of decision-making problems, namely Markov decision processes (MDPs) \citep{tsdeardenbayesq,tsstren,randomisedvaluefunction}. Unlike the MAB problem, in an MDP, the available decisions and rewards are determined by a time-varying internal state process that is Markovian \citep{putermanrl,suttonbartobook}; see Section \ref{sec:mdp} for more details. Our focus is on an infinite-horizon undiscounted MDP with an absorbing terminal state, which is a class of problems known specifically as \emph{stochastic shortest path} (SSP) problems \citep{bertseka1991,putermanrl}. The SSP formulation is more general than the usual discounted infinite-horizon MDP. The absence of a discount factor requires an absorbing terminal state for the objective to be well-defined. We focus on the finite state and action space setting, which can be a very challenging high-dimensional learning problem. 

In this work, to balance exploitation and exploration,  we adopt Thompson sampling. This raises two important challenges to be addressed. Firstly, how do we construct a Bayesian framework that meaningfully quantifies the uncertainty of an action being optimal? Secondly, how do we access the resulting posterior distribution? Both of these challenges are discussed further below as the modelling and inference challenges.

\textbf{Modelling.} Our aim is to find the optimal decision-making rule for an SSP problem, and formulating this as a Bayesian learning problem is not straightforward. The decision maker only observes the state and, upon choosing the action, the immediate reward, which constitutes the data set. In more ``standard" Bayesian inference problems, like the MAB problem, the quantity to be inferred is usually explicitly observed in noise. However, immediate rewards are not noisy observations of the long-term optimality of specific decision-making rules.

Our approach is to learn the optimal \emph{action-value function}, denoted $Q^*$, which is the expected cumulative rewards when following the state transition dynamics under the optimal policy. It characterises the optimality of actions and uniquely satisfies a set of simultaneous equations known as the Bellman optimality equations (BOEs). 

We introduce a likelihood function that constrains the parametric approximation of $Q^*$ to be consistent with the subset of the BOEs implied by the state-action pairs in the data set. We characterise the properties of the likelihood when the adopted parametrisation of $Q^*$ creates \textit{improper policies}, which generate recurrent non-goal states, and is difficult to avoid in SSP problems. For deterministic rewards, we relax the likelihood to ensure effective Monte Carlo sampling. This relaxation is treated in our methodology as a further layer of approximation, and we investigate, mathematically, its impact on learning the optimal decision-making rule.

In contrast to our approach, many Bayesian formulations (discussed in Section \ref{sec:relatedwork}) that learn $Q^*$ stem from the $Q$-learning algorithm, which is a stochastic approximation algorithm \citep{bertsekas2019} that incrementally updates the approximation of $Q^*$ using the BOEs \citep{qlearning1992}. Specifically, in these works, the chosen likelihood function is motivated by a stochastic approximation procedure. Additionally, some also rely on unjustified and/or implicit assumptions. Thus, the resulting Bayesian formulation is highly nuanced, lacks interpretability, and may not faithfully quantify the residual uncertainty after data assimilation. This could diminish the effectiveness of TS too---which we note in our numerical studies---as it relies on this posterior distribution to make the action choices.

\textbf{Inference.} With an appropriate Bayesian formulation in hand, under general assumptions, we give a formula for the posterior density of $Q^\ast$ for deterministic rewards; this is not a Lebesgue density but rather a ``manifold density'' with respect to the Hausdorff measure. Under further specific but mild assumptions on the SSP problem, we then show that our characterisation is valid for the {\it tabular} parametrisation of $Q^*$ and our prior choice. To avoid the challenge of sampling from the manifold density, we relax the likelihood so that a Lebesgue density exists. The flip side, as we show, is to create unidentifiability issues. Specifically, the relaxed posterior can have significant mass on improper decision rules, while the exact posterior will not. Furthermore, we find the exact posterior probabilities (up to Gaussian integrals) for selecting optimal actions for noisy Gaussian rewards, or deterministic rewards with a relaxed likelihood. Although this characterisation could serve as a useful benchmark, its computation does not scale well with the size of the state and action spaces, and so we also pursue Monte Carlo methods in the numerical studies. We present extensive numerical experiments on the Deep Sea benchmark problem \citep{randomisedvaluefunction}. We demonstrate that our framework does faithfully quantify uncertainty, verify our insights above, and highlight the exploration benefits of our posterior, particularly that it is more data-efficient than some competing Bayesian methods.

To our knowledge, these are all new insights, and results like ours have not been covered in the related literature by works that 
similarly focus on fundamental and methodological aspects of Bayesian reinforcement learning. For example, there are results on exploration strategies \citep{osband2013moreefficient,psrlinfhorizon,bayesoptiamlpolicy,psrl-ssp} and their learning rates \citep{whypsrlbetterthanoptimism,rlsvi_regret_russo,provablyefficientpossampling}; approximations for efficient inference \citep{bootstraposband,randomisedpriorwithensembleosband,randomisedvaluefunction,dropout,bdqn_azizz,tdsmc,qlearn_sgld}; and modelling assumptions for learning the action-value function for fixed policies \citep{gpsarsa_engel,kalmantd,bayesianbellmanoperator}.

The paper is structured as follows. Section \ref{sec:prelim} introduces the general SSP problem and gives sufficient conditions to guarantee the uniqueness of the solution to the BOEs. Our Bayesian framework to learn the optimal action-value function is presented in Section \ref{sec:bayesianlearning}. This is followed by a discussion of exploration via posterior sampling in Section \ref{sec:possamplingexploration}. Section \ref{sec:relatedwork} discusses in detail related works and other existing Bayesian approaches. Our theoretical results, including the posterior characterisation for noiseless rewards, derivation of posterior probabilities for tabular models, and mathematical analysis of the likelihood, are presented in Section \ref{sec:theory}. Experiments are presented in Section \ref{sec:experiment}. Finally, limitations, unresolved challenges, and future directions are discussed in Section \ref{sec:conclude}.

\subsection{Notations}

For a set $\mathcal{X}$, let $\mathscr{P}(\mathcal{X})$ denote the set of all probability distributions over $\mathcal{X}$. For any $x,x^\prime \in \mathcal{X}$, let $\delta_{x^\prime}(x)$ be the Dirac delta function centred at $x^\prime$. For a distribution $p \in \mathscr{P}(\mathcal{X})$ associated with a random variable $X$ taking values in $\mathcal{X}$,  and a real-valued function $f:\mathcal{X} \rightarrow \mathbb{R}$, let $\mathbb{E}_{X \sim p(\cdot)}[f(X)] = \int f(x) p(x) \mathrm{d}x$. If  $\mathcal{X}$ is discrete, use $\sum_{x \in \mathcal{X}}$ in place of $\int_{\mathcal{X}} \mathrm{d} x$ in this definition. The expected value is also denoted as $\mathbb{E}_p[f(X)]$, or $\mathbb{E}[f(X)]$ when unambiguous. Let $\mathcal{Y}$ be another set and $Y$ be a random variable taking values in $\mathcal{Y}$. The conditional probability density function (pdf), or probability mass function (pmf) if discrete, of $Y$ given $X=x$ is denoted by $p(\cdot|x) \in \mathscr{P}({\mathcal{Y}})$. Let $p(y|x)$ be its value for a specific $(x,y)$ pair. $\mathcal{N}(\cdot;\mu,\epsilon^2)$ denotes the univariate Gaussian density with mean $\mu \in \mathbb{R}$ and variance $\epsilon^2$. Similar notation is used for the multivariate Gaussian case.

For a real-valued function $x \mapsto f(x)$, its support is defined to be $\mathrm{supp}(f)=\{x \mid f(x)\neq 0\}$. If $\mathcal{X} \subseteq \mathbb{R}^n$ for some  $n>1$, the $i$-th component of $x \in \mathcal{X}$ is denoted by $x_i$. Similarly, for integers $i<j$, we denote the vector $(x_i,x_{i+1},\dots,x_j)^{\top}$ by $x_{i:j}$. Let $\mathbb{Z}_{\geq 0}$ denote the set of all non-negative integers. For a function $f:\mathbb{R}^m \rightarrow \mathbb{R}^n$, let $Df(x) \in \mathbb{R}^{n\times m}$ denote the matrix of partial derivatives at $x \in \mathbb{R}^m$. In particular, when $m > n$, denote $Jf(x)=\sqrt{\det{(Df(x)Df(x)^\top)}}$.

%% file: sections/prelim.tex
\section{Preliminaries}
\label{sec:prelim}
This section presents key definitions and results for MDPs and for the stochastic shortest path problem in particular.

\subsection{Introduction to MDPs}
\label{sec:mdp}
A discrete-time infinite-horizon MDP is denoted by 
$\mathcal{M}=(\mathcal{S}, \mathcal{A}, p^S, p^R, \rho)$ \citep{putermanrl} where $\mathcal{S}$ is the state space; $\mathcal{A}$ is the action space; $\mathcal{A}_s$ is the set of admissible actions for state $s \in \mathcal{S}$, and thus $\mathcal{A} = \cup_{s \in \mathcal{S}} \mathcal{A}_s$. For any admissible state-action pair $(s,a)$, specifically $a \in \mathcal{A}_s$, the distribution of the MDP's next state is $p^S(\cdot|s,a) \in \mathscr{P}(\mathcal{S})$. Similarly, the distribution of the rewards is $p^R(\cdot|s,a) \in \mathscr{P}(\mathbb{R})$. Finally, $\rho \in \mathscr{P}(\mathcal{S})$ is the initial state distribution.

Let $\Pi = \{\pi: \mathcal{S} \rightarrow \mathscr{P}(\mathcal{A}) \mid \text{supp}(\pi(\cdot|s)) \subseteq \mathcal{A}_s \forall s \in \mathcal{S}\}$ be the set of Markovian decision rules that make choices for the actions based solely on the present state. A {\it policy} is a collection of Markov decision rules, $(\pi_t)_{t \in \mathbb{Z}_{\geq 0}}$, and defines how actions are to be chosen at all times $t$. The policy is  {\it stationary} if it deploys the same  Markov decision rule $(\pi,\pi,\ldots)$ at all times. We denote the stationary policy by its decision rule $\pi$. Furthermore, the decision rule $\pi \in \Pi$ is deterministic if $\pi(\cdot|s)$ is supported on one action only for all $s \in \mathcal{S}$. In this case, the simplified notation $\pi(s)$ denotes the action $a$ to be applied, which lies in $\mathcal{A}_s$, for state $s$.

We now describe the dynamics of the MDP, assuming that the agent is using a stationary policy $\pi \in \Pi$. The MDP commences at time $t=0$ by sampling the MDP's initial state $s_0$ from $\rho$. At any time $t\geq0$, with  $s_t$ denoting the present state, the agent samples the stationary policy  $\pi(\cdot|s_t)$ to get the action $A_t$ to be applied. If $A_t=a_t$, then the reward $R_t$ is drawn from $p^R(\cdot|s_t,a_t)$, followed by the next state $S_{t+1}$ from $ p^S(\cdot|s_t,a_t)$. Continuing with policy $\pi$ until time $\tau$ gives rise to the sequence of random variables $(S_0,A_0,R_0,S_1,\dots,S_{\tau-1},A_{\tau-1},R_{\tau-1},S_\tau)$ with probability distribution
\begin{equation*}
p^\pi_{S_{0:\tau},A_{0:\tau-1},R_{0:r-1}}(s_{0:\tau},a_{0:\tau-1},r_{0:\tau-1}) = \rho(s_0)\prod_{t=0}^{\tau-1} [\pi(a_t|s_t)p^R(r_t|s_t,a_t)p^S(s_{t+1}|s_t,a_t)].
\end{equation*}
To simplify the notation, we drop the subscripts when it is unambiguous and denote this probability distribution by  $p^\pi(s_{0:\tau},a_{0:\tau-1},r_{0:\tau-1})$. The same simplified notation is used for its marginals and conditional probabilities, such as $p^\pi(s_{0:\tau},a_{0:\tau-1}) := p^\pi_{S_{0:\tau},A_{0:\tau-1}}(s_{0:\tau},a_{0:\tau-1})$. $\mathbb{E}^\pi$ will denote expectation over $p^\pi$. 

The goal is to find the optimal policy that maximises the expected cumulative reward function. The Markovian structure of the problem ensures that it is sufficient to consider Markov decision rules since history-dependent policies do not improve the optimality criterion  \cite[Sec. 5.5, 7.1]{putermanrl}. Furthermore, it is sufficient to focus only on Markov policies that are stationary \cite[Sec. 6.2, 7.1]{putermanrl}. For a stationary policy $\pi$, and discount factor $0\leq \gamma < 1$, let
\begin{equation*}
    Q^\pi(s,a) := \mathbb{E}^\pi\Bigg[\sum_{t=0}^{\infty} \gamma^t R_t \Bigg| S_0=s, A_0=a\Bigg],\quad s \in \mathcal{S}, a \in \mathcal{A}_s.
\end{equation*}
The function $Q^\pi:\mathcal{S} \otimes \mathcal{A} \rightarrow \mathbb{R}$, where $\mathcal{S} \otimes \mathcal{A} := \bigcup_{s \in \mathcal{S}} \{s\} \times \mathcal{A}_s$, is known as the {\it action-value} function of policy $\pi$.

Assume that $|\mathcal{A}|$ is finite. Let $(s,a) \mapsto Q^*(s,a)$ be the {\it optimal} action-value function,  $Q^*(s,a) := \sup_{\pi \in \Pi}Q^\pi(s,a)$, and define the operator $\mathcal{B}^*_q$ on $\{Q \mid Q:\mathcal{S} \otimes \mathcal{A} \rightarrow \mathbb{R}\}$ to be
\begin{equation}
    \mathcal{B}_q^*(Q)(s,a) := \mathbb{E} \Big[R_0+\gamma \max_{a_1 \in \mathcal{A}_{S_1}}Q(S_1, a_1) \Big|S_0=s, A_0=a\Big]. \label{eqn:bellmanoptim}
\end{equation} Then $Q^*$ satisfies the Bellman optimality equations (BOEs) $\mathcal{B}_q^*(Q^*) = Q^*$ \cite[Sec. 4.2]{bertsekas2019}. Furthermore, define the {\it value} functions $V^\pi$ and $V^*$ to be $V^\pi(s) := \mathbb{E}^{\pi}[Q^\pi(S_0,A_0)|S_0=s]$ and $V^*(s) = \max_{a \in \mathcal{A}_s}Q^*(s,a)$. A policy $\pi^* \in \Pi$ is optimal if its value function $V^\pi$ coincides with the optimal value function $V^\ast$, i.e., $V^{\pi^*} = V^*$ \cite[Sec. 5.4]{putermanrl}.

\subsection{Uniqueness of solutions to Bellman optimality equations}
In this work, we focus on the undiscounted infinite horizon criterion ($\gamma =1$) for the MDP. This is the more general case compared to the discounted setting $0 \leq \gamma<1$. Furthermore, we assume that $|\mathcal{S}|$ and $|\mathcal{A}|$ are finite. We now state sufficient conditions for the undiscounted infinite horizon criterion to have an optimal policy, which is also stationary: we denote its decision rule by $\pi^{\ast} \in \Pi$.

\begin{assumption}
    The discount factor $\gamma=1$ and there exists a unique, zero-reward and absorbing goal-state $s^g\in\mathcal{S}$. Specifically, absorbing implies  $\mathcal{A}_{s^g}=\{a^g\}$, $p^S(\cdot|s^g,a^g)=\delta_{s^g}(\cdot)$; and rewardless means $p^R(\cdot|s^g,a^g)=\delta_{0}(\cdot)$.\label{ass:boeunique1}
\end{assumption}
If there is more than one zero-reward and absorbing goal state, they can all be lumped together and treated as a single entity. An MDP that satisfies Assumption \ref{ass:boeunique1} is known as a stochastic shortest path model (SSP). A deterministic stationary policy $\mu \in \Pi$ is \textit{proper} if $\lim\limits_{t \rightarrow \infty} p^\mu(S_t=s^g|S_0=s_0) = 1$ for all $s_0 \in \mathcal{S}$, otherwise it is \textit{improper}.
\begin{assumption}
 A proper deterministic stationary policy exists. Furthermore, for any improper deterministic stationary policy $\mu \in \Pi$, there exists a state $s \in \mathcal{S}$ such that $V^\mu(s) = -\infty$.  \label{ass:boeunique2}
\end{assumption}

Results in this work are proved under Assumption \ref{ass:boeunique1} or both Assumption \ref{ass:boeunique1} and \ref{ass:boeunique2}. Under Assumptions \ref{ass:boeunique1} and \ref{ass:boeunique2}, we have the following characterisations for $V^*$ and $\pi^*$.

\begin{theorem}[\cite{bertseka1991}]
    Let Assumptions \ref{ass:boeunique1} and \ref{ass:boeunique2} hold. Let $\bar{r}(s,a):=\mathbb{E}[R_t|S_t=s,A_t=a]$, and let $\mathcal{B}_v^*$ be the {\it Bellman} operator on $\{V\mid V:\mathcal{S} \rightarrow \mathbb{R}\}$,
    \begin{equation*}
    \mathcal{B}_v^*(V)(s) := \max_{a \in \mathcal{A}_s} \bar{r}(s,a)+ \sum_{s^\prime \in \mathcal{S}} V(s^\prime) p^S(s^\prime|s,a),\quad s \in \mathcal{S}. 
    \end{equation*}Then, 
    \begin{enumerate}
        \item 
        $V^*$ is the unique fixed point of $\mathcal{B}_v^*$ in $\{V:\mathcal{S} \rightarrow \mathbb{R} \mid V(s^g) = 0\}$.

    \item There exists an optimal stationary policy $\pi^* \in \Pi$, which is deterministic and proper, and given b (with tie breaking if necessary)
    \begin{equation*}
        \pi^*(a|s) = \mathbbm{1}\bigg(a \in \argmax_{a^\prime \in \mathcal{A}_s} \bar{r}(s,a^\prime) + \sum_{s^\prime \in \mathcal{S}} V^*(s^\prime)p^S(s^\prime|s,a^\prime) \bigg) .
    \end{equation*}
    \end{enumerate}
\label{thmvunique}
\end{theorem}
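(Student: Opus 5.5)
This is the classical Bertsekas--Tsitsiklis SSP theorem, here in the reward-maximisation convention. My plan is to follow their argument: operator monotonicity plus the characterisation of proper policies through a contraction-like property of the restricted transition matrix.

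\textbf{Notation and preliminary facts.} Write $\mathcal{S}' = \mathcal{S}\setminus\{s^g\}$ and regard every $V$ as a vector on $\mathcal{S}'$ with $V(s^g)=0$. For a deterministic stationary $\mu$, let $P_\mu$ be the substochastic matrix of $p^S(\cdot|s,\mu(s))$ restricted to $\mathcal{S}'$. Let $r_\mu(s)=\bar r(s,\mu(s))$, and set $T_\mu V = r_\mu + P_\mu V$. Then $\mathcal{B}_v^* V = \max_\mu T_\mu V$, the maximum being taken componentwise and attained by a single $\mu$ because the action sets are finite.

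I will establish three facts.
\begin{enumerate}
\item $\mu$ is proper iff $P_\mu^t\to 0$.
\item If $\mu$ is proper, then $I-P_\mu$ is invertible, $V^\mu=(I-P_\mu)^{-1}r_\mu$ is the unique fixed point of $T_\mu$, and $T_\mu^k V\to V^\mu$ for every $V$.
\item If $\mu$ is improper and $V\ge T_\mu V$ holds for some real $V$, we get a contradiction. Indeed, iterating gives $V\ge T_\mu^k V = \sum_{t<k}P_\mu^t r_\mu + P_\mu^k V$. The last term is bounded, and by Assumption \ref{ass:boeunique2} some state has $\sum_{t<k}P_\mu^t r_\mu\to-\infty$, which is impossible.
\end{enumerate}
Making fact 3 rigorous is the main subtlety. $V^\mu(s)=-\infty$ has to mean that the partial sums diverge to $-\infty$ rather than oscillate. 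I would handle this by decomposing the recurrent classes of $P_\mu$ within $\mathcal{S}'$: on a closed recurrent class $C$, the average reward per stage must be negative. Otherwise $V^\mu$ would not be $-\infty$ on $C$, because a nonnegative average reward gives partial sums bounded below or growing linearly. A negative average reward then forces linear divergence.

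\textbf{Existence of a fixed point.} I run policy iteration. Start from a proper $\mu_0$, which exists by Assumption \ref{ass:boeunique2}. Given a proper $\mu_k$ with value $V_k$, choose $\mu_{k+1}$ greedy, so that $T_{\mu_{k+1}}V_k=\mathcal{B}_v^*V_k\ge T_{\mu_k}V_k=V_k$.

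First, $\mu_{k+1}$ is proper. If it were improper, the chain $T^m_{\mu_{k+1}}V_k$ would be monotone nondecreasing. Its lower bound $V_k$ contradicts the divergence to $-\infty$ on the negative-average recurrent class from fact 3, applied in the form $\sum_{t<m}P^t r + P^m V_k \ge V_k$ with the last term bounded.

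Second, properness together with fact 2 gives $V_{k+1}=\lim_m T_{\mu_{k+1}}^m V_k\ge V_k$. Since there are finitely many policies and values are nondecreasing, the iteration terminates at some $\mu^*$ with $\mathcal{B}_v^* V_{\mu^*} = V_{\mu^*}$.

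\textbf{Uniqueness and identification with $V^*$.} Let $V$ be any real fixed point with $V(s^g)=0$, and let $\mu$ be greedy for $V$, so $T_\mu V=V$. By fact 3, $\mu$ is proper, and then $V=V^\mu\le V^*$. Conversely, for any proper $\nu$ we have $T_\nu V\le \mathcal{B}_v^*V=V$, so iterating gives $V^\nu=\lim T_\nu^kV\le V$. Improper deterministic $\nu$ have $V^\nu(s)=-\infty$ somewhere, and optimising over stationary Markov policies reduces to deterministic ones by the sufficiency results cited from Puterman. However, to compare $V$ with $V^\nu$ at every state for an improper $\nu$, I would instead use directly that $T_\nu^kV\le V$ and that $\mathbb{E}^\nu[\sum_{t<k}R_t] = T_\nu^k V - P_\nu^k V \le V + \|V\|_\infty$. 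This gives $\limsup$ of the partial sums $\le V$ for any policy.

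Hence $V=\sup_\pi V^\pi=V^*$, which proves part 1. Part 2 follows because the greedy $\mu$ for $V^*$ is proper with $V^\mu=V^*$, so it is optimal, deterministic and stationary. Its defining argmax is exactly the displayed indicator formula, with ties broken arbitrarily; any greedy choice is proper by fact 3.
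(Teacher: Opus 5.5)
The paper does not prove this theorem; it simply cites \citet{bertseka1991}. Your skeleton is essentially their argument: proper $\iff P_\mu^t\to0$, evaluation of proper policies, policy iteration for existence, and properness of greedy policies of fixed points. Two points need repair, and the second is a genuine gap.

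\emph{Fact 3 has its inequality reversed.} From $V\ge T_\mu^kV=\sum_{t<k}P_\mu^tr_\mu+P_\mu^kV$, partial sums tending to $-\infty$ produce no contradiction. For instance, in Example~\ref{ex:2dmdp} with $\mu(s^1)=a^1$ one has $T_\mu V(s^1)=V(s^1)-1\le V(s^1)$ for every $V$. The correct statement is that $T_\mu V\ge V$ is impossible for improper $\mu$, because it gives $\sum_{t<k}P_\mu^tr_\mu\ge V-P_\mu^kV\ge-2\|V\|_\infty$ for all $k$. This is the direction you actually use in the existence step. It also covers the greedy-for-a-fixed-point step, where equality holds, so the slip does not propagate. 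Moreover, this lower bound contradicts $V^\mu(s)=-\infty$ directly at the state supplied by Assumption~\ref{ass:boeunique2}, whether the partial sums diverge or oscillate. So no recurrent-class decomposition is needed. Note that Assumption~\ref{ass:boeunique2} applied to $\mu$ alone gives one bad state, not negative average reward on every closed class.

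\emph{The gap: bounding $V^\pi\le V$ for improper or randomised $\pi$.} Your estimate gives only $\mathbb{E}^\nu[\sum_{t<k}R_t]\le V-P_\nu^kV\le V+\|V\|_\infty$. Since $P_\nu^kV\not\to0$ when $\nu$ is improper, ``$\limsup\le V$'' does not follow. Hence neither $V\ge V^*$ nor your uniqueness claim, which you route through $V=V^*$, is established. A repair in the spirit of Bertsekas--Tsitsiklis proceeds in steps:
\begin{enumerate}
\item Get uniqueness directly. If $V,V'$ are fixed points with greedy, hence proper, policies $\mu,\mu'$, your proper-policy comparison gives $V'=V^{\mu'}\le V$, and symmetrically $V\le V'$.
\item Set $U=V+\|V\|_\infty\mathbbm{1}_{\mathcal{S}'}\ge0$. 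Since $p^S(\mathcal{S}'|s,a)\le1$, we have $\mathcal{B}_v^*U\le U$, while $U\ge V$ gives $(\mathcal{B}_v^*)^kU\ge V$.
\item Hence $(\mathcal{B}_v^*)^kU$ decreases to a fixed point, which by uniqueness is $V$.
\item For any policy, $\mathbb{E}^\pi[\sum_{t<k}R_t]\le\mathbb{E}^\pi[\sum_{t<k}R_t+U(S_k)]\le(\mathcal{B}_v^*)^kU\to V$.
\end{enumerate}
This last step handles improper, randomised and even non-stationary policies at once, and makes the appeal to Puterman unnecessary. Alternatively, take a closed non-goal class of an improper $\nu$, and apply Assumption~\ref{ass:boeunique2} to the policy that agrees with $\nu$ on that class and with a proper policy elsewhere. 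This shows every such class has negative average reward, so $V^\nu=-\infty$ wherever $P_\nu^kV\not\to0$. That route, however, only covers deterministic stationary policies.
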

The uniqueness result of Theorem \ref{thmvunique} can be translated to $Q^*$, and $Q^*$ similarly characterises $\pi^\ast$.
\begin{corollary}
     Under the same assumption as Theorem \ref{thmvunique}, $V^*$ is the unique fixed point of $\mathcal{B}_v^*$ in $\{V:\mathcal{S} \rightarrow \mathbb{R}\mid V(s^g)=0\}$ if and only if $Q^*$ is the unique fixed point of $\mathcal{B}^*_q$ in $\{Q:\mathcal{S} \otimes \mathcal{A} \rightarrow \mathbb{R}\mid Q(s^g,a^g)=0\}$. Furthermore, the optimal policy (with tiebreaker) is also given by $Q^*$, specifically, $\pi^*(a|s) = \mathbbm{1}(a \in \argmax_{a \in \mathcal{A}_s}Q^*(s,a)).$\label{cor:qvuniqueness}
\end{corollary}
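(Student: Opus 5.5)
The natural route is to pass between the two operators through the maps $Q \mapsto V_Q$, where $V_Q(s) := \max_{a \in \mathcal{A}_s} Q(s,a)$, and $V \mapsto Q_V$, where $Q_V(s,a) := \bar{r}(s,a) + \sum_{s^\prime} V(s^\prime) p^S(s^\prime|s,a)$. With $\gamma=1$, linearity of expectation in \eqref{eqn:bellmanoptim} gives $\mathcal{B}^*_q(Q) = Q_{V_Q}$ directly. Likewise the definition of $\mathcal{B}^*_v$ gives $\mathcal{B}^*_v(V) = V_{Q_V}$. The proof then consists of showing that these maps put the fixed points of the two operators in bijection, with the normalisations $V(s^g)=0$ and $Q(s^g,a^g)=0$ corresponding to each other.

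\textbf{Step 1 (fixed points correspond).} Suppose $Q$ is a fixed point of $\mathcal{B}^*_q$. Then $Q = Q_{V_Q}$, so $V_Q = V_{Q_{V_Q}} = \mathcal{B}^*_v(V_Q)$, and $V_Q$ is a fixed point of $\mathcal{B}^*_v$. Since $\mathcal{A}_{s^g}=\{a^g\}$, we get $V_Q(s^g) = Q(s^g,a^g)$, so the normalisations match. Conversely, suppose $V = \mathcal{B}^*_v(V) = V_{Q_V}$. Put $Q := Q_V$. Then $\mathcal{B}^*_q(Q) = Q_{V_{Q_V}} = Q_V = Q$. Under Assumption \ref{ass:boeunique1}, $Q(s^g,a^g) = 0 + V(s^g) = V(s^g)$.

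\textbf{Step 2 (injectivity on fixed points).} I check that the two maps are mutually inverse on fixed points. For a fixed point $Q$, $Q_{V_Q} = \mathcal{B}^*_q(Q) = Q$. For a fixed point $V$, $V_{Q_V} = \mathcal{B}^*_v(V) = V$. Hence there is a bijection between the normalised fixed points of $\mathcal{B}^*_v$ and those of $\mathcal{B}^*_q$. It follows that one operator has a unique normalised fixed point if and only if the other does.

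\textbf{Step 3 (identification and policy).} I also need that the fixed point obtained is $Q^*$ itself, so that the statement is about $Q^*$ rather than just some fixed point. Theorem \ref{thmvunique} provides the unique fixed point $V^*$ of $\mathcal{B}^*_v$, and by Step 1 the corresponding fixed point of $\mathcal{B}^*_q$ is $Q_{V^*}$. It remains to show $Q^* = Q_{V^*}$, where $Q^*(s,a) = \sup_\pi Q^\pi(s,a)$. For any $\pi$, conditioning on the first transition gives $Q^\pi(s,a) = \bar{r}(s,a) + \sum_{s^\prime} p^S(s^\prime|s,a)V^\pi(s^\prime) \le Q_{V^*}(s,a)$. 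The optimal $\pi^*$ of Theorem \ref{thmvunique} attains equality, so $Q^* = Q_{V^*}$.

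The main obstacle is in this step: justifying the first-step decomposition and the bound $V^\pi \le V^*$ for possibly improper or randomised $\pi$ in the undiscounted setting, where $V^\pi$ may be $-\infty$. I would handle this by appealing to the fact, stated in Section \ref{sec:mdp}, that optimal values are attained among stationary policies, and by treating $-\infty$ values as harmless, since they only lower $Q^\pi$.

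Finally, the policy claim follows by substituting $Q^* = Q_{V^*}$ into the argmax formula of Theorem \ref{thmvunique}(2): that argmax is exactly $\argmax_{a} Q^*(s,a)$.
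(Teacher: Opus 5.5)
Your proposal is correct, and it takes a different route from the paper. The paper's (sketched) proof constructs an augmented MDP, following Bertsekas, whose optimal value function coincides with $Q^*$. It then argues that Assumptions \ref{ass:boeunique1} and \ref{ass:boeunique2} carry over to this augmented problem and applies Theorem \ref{thmvunique} to it. That single application gives both the uniqueness of $Q^*$ as the normalised fixed point and the greedy-policy characterisation.

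You stay inside the original MDP. The factorisations $\mathcal{B}^*_q(Q)=Q_{V_Q}$ and $\mathcal{B}^*_v(V)=V_{Q_V}$ give a normalisation-preserving bijection between the two fixed-point sets. Uniqueness therefore transfers in both directions by algebra alone, using only Assumption \ref{ass:boeunique1} to match $V(s^g)=0$ with $Q(s^g,a^g)=0$, and no properness argument. Theorem \ref{thmvunique} enters only to identify the distinguished fixed point, $Q^*=Q_{V^*}$, through first-step analysis and the optimal proper policy $\pi^*$.

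What your route buys: it avoids the step the paper leaves implicit, namely checking that every improper policy of the augmented MDP, which has a richer policy class, still has a state with value $-\infty$. It also makes clear that the equivalence of the two uniqueness claims is structural. What the paper's route buys: the identification of $Q^*$ as the optimal value of a genuine SSP comes directly from the construction, so the policy statement follows without a separate argument.

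On the technical point you flag, you do not need the appeal to optimality of stationary policies. For stationary $\pi$, the bound $V^\pi\le V^*$ follows directly from the paper's definitions, since $V^\pi(s)=\sum_a\pi(a|s)Q^\pi(s,a)\le\max_a Q^*(s,a)=V^*(s)$. Because $V^*$ is finite by Theorem \ref{thmvunique}, every $V^\pi(s')$ lies in $[-\infty,\infty)$. So the identity $Q^\pi(s,a)=\bar r(s,a)+\sum_{s'}p^S(s'|s,a)V^\pi(s')$ never involves $\infty-\infty$, and treating $-\infty$ values as harmless is justified at the level of rigour of the paper's definition of $Q^\pi$.
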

\begin{proof}

 This can be proven by constructing an augmented MDP \citep[p. 186]{bertsekas2019} whose optimal value function coincides with the optimal action-value function of the original MDP $\mathcal{M}$. Then, showing that if $\mathcal{M}$ satisfies Assumptions \ref{ass:boeunique1} and \ref{ass:boeunique2}, so does the augmented MDP.
\end{proof}
Other sufficient conditions can be found, e.g., in \citet{bertseka1991,putermanrl,bertsekas2019,ssppolyhedral}. Under  Assumptions \ref{ass:boeunique1} and \ref{ass:boeunique2}, $Q^\ast$ can be found using a Picard-type iterative scheme, which is known as {\it value iteration}. Applications of the SSP problem include search problems \citep{ssp_eagle}. The SSP formulation, and some related convergence results for computing $Q^\ast$, has been extended to the partially observed setting where the state is observed indirectly \citep{ssp_patek, ssp_appl_singh}.

%% file: sections/learning.tex
\section{Bayesian modelling of the optimal action-value function} 
\label{sec:bayesianlearning}

To turn the problem of learning $Q^*$ into a Bayesian problem, we introduce a parametric approximation $Q_\theta:\mathcal{S} \otimes \mathcal{A} \rightarrow \mathbb{R}$, $\theta \in \Theta \subseteq \mathbb{R}^{d_\Theta}$,  a prior distribution  $p^\Theta$ on $\Theta$, and then find the posterior distribution of $\theta$ given the observed interactions with the environment. The difficulty is that $Q^*$ is not observed directly. However, as shown below, it can be inferred through the BOEs. With the posterior distribution of $\theta$ in hand, we then proceed in Section \ref{sec:hausdorff} to analyse its properties for the challenging situation where the rewards are noiseless.

The data set is a sequence of state-action-reward triplets, $(s_t,a_t,r_t)$, $t=0,\ldots,\tau$, where a smaller $t$ indicates older data. The data could be from a single episode, which means ending in the absorbing state, or terminating the collection before absorption. Or even multiple episodes, which means restarting the data collection from a new starting state. We only observe samples by interacting with the MDP.

To emphasise the dependence of the reward on the state-action pair, we write $R(s,a)$ to mean a sample from $p^R(\cdot|s,a)$, and $r(s,a)$ for its realisation. For brevity, we write the expected reward (introduced earlier in Theorem \ref{thmvunique}) with subscripts instead, $\bar{r}_{s,a} := \mathbb{E}[R_t|S_t=s,A_t=a]$, which is time-independent by the model's homogeneity. The scale parameter of the reward's additive noise term is to be inferred; that is
\begin{equation}
    R(s,a) = \bar{r}_{s,a} + \eta(\psi)W,\quad (s,a)\in\mathcal{S}\otimes\mathcal{A},
\end{equation} where $W \sim p^W(\cdot|s,a)$ is a zero-mean noise with known distribution $p^W$, and $\eta:\Psi \rightarrow \mathbb{R}$ is the scaling function parametrised by $\psi \in \Psi \subseteq \mathbb{R}^{d_\Psi}$. 

For $(s,a) \in \mathcal{S} \otimes \mathcal{A}$, we define  $g_{s,a}: \Theta \rightarrow \mathbb{R}$  to be
\begin{equation}\label{eqn:def_of_g}
g_{s,a}(\theta) := Q_\theta(s,a) - \mathbb{E}\Big[\max_{a^\prime \in \mathcal{A}_{S_{1}}}Q_\theta(S_{1}, a^\prime)\Big| S_0=s, A_0=a\Big].
\end{equation}
In the rest of the paper, we assume that the expectation in $g_{s,a}(\theta)$ can be evaluated or approximated whenever $(s,a)$ is encountered in the data; see Section \ref{sec:relatedwork} for further discussion. We may express the likelihood function $p$ of  $Q_\theta$ using the conditional density of $R_t=r_t$ given $S_t=s_t$, $A_t=a_t$ and the BOEs, which imposes $\bar{r}_{s_t,a_t}=g_{s_t,a_t}(\theta)$, as
\begin{equation}
p(r_t|\theta,\psi,s_t,a_t) := \eta(\psi)^{-1}p^W(\eta(\psi)^{-1}(r_t - g_{s_t,a_t}(\theta))|s_t,a_t). \label{eqn:llhpstarnoisy}
\end{equation}
Then, the overall likelihood for the data set $(s_{0:\tau}$, $a_{0:\tau}$, $r_{0:\tau})$ is

\begin{equation}
    p(r_{0:\tau}|\theta,\psi,s_{0:\tau},a_{0:\tau}) := \prod_{t=0}^\tau p(r_t|\theta,\psi,s_t,a_t),
    \label{eqn:llhpstarnoisyfull}
\end{equation}
which has a product-form by the conditional independence of the $R_t$'s given $S_t$'s and $A_t$'s. The posterior distribution is 
\begin{equation}
p^{\Theta,\Psi}(\theta,\psi|r_{0:\tau},s_{0:\tau},a_{0:\tau}) 
\propto p^\Theta(\theta) p^\Psi(\psi) \prod_{t=0}^\tau p(r_t|\theta,\psi,s_t,a_t),
\label{eqn:pstarnoisyfull}
\end{equation}
with $p^\Psi$ denoting the prior distribution on $\Psi$. Let $\mathcal{D}_{\tau} =\{(s_t,a_t,r_t)\}_{t=0}^{\tau}$ denote the data in Equation \ref{eqn:pstarnoisyfull}. Henceforth, $\mathcal{D}_{\tau}$ will be used instead of $r_{0:\tau},s_{0:\tau},a_{0:\tau}$ to indicate conditioning on this data: the posterior distribution of the $\theta$-component of the full distribution in Equation \ref{eqn:pstarnoisyfull} is thus denoted by  $p^\Theta(\cdot|\mathcal{D}_\tau)$.

\subsection{Posterior sampling for exploration}\label{sec:possamplingexploration}

The posterior in Equation \ref{eqn:pstarnoisyfull}, which is also a posterior over the optimal action-value function, can be used for exploration in MDPs \cite{tsstren}. Specifically, given a parametrisation $Q_\theta$ ($\theta\in\Theta$) of $Q^*$, we can define the posterior distribution over the set of admissible optimal deterministic, stationary, Markovian policies as
\begin{equation}\mathbb{P}(\mu \text{ is an optimal}| \mathcal{D}_\tau)
:=p^\Theta \bigg(
\bigg\{
\theta\in \Theta  \,\bigg|\,  \mu(s) \in \argmax_{a \in \mathcal{A}_s} Q_\theta(s,a), \; s \in \mathcal{S}
\bigg\} 
\bigg| \mathcal{D}_\tau 
\bigg),
\label{eq:ts_mdp}
\end{equation}
assuming that $\argmax_{a \in \mathcal{A}_s} Q_\theta(s,a)$ is $p^\Theta(\cdot|\mathcal{D}_\tau)$-almost-surely unique. Uniqueness prevents `double counting' of $\theta$, so that this \emph{posterior probability} over $\mu$ sums to one, and a tie breaking assignment can be used when it does not hold.

At time $\tau +1$, the strategy is to play a policy $\mu$ according to this posterior probability that has the largest expected cumulative reward. Evaluating Equation \ref{eq:ts_mdp} can be computationally expensive, if not intractable; see Sections \ref{sec:hausdorff} and \ref{sec:tabhausdorff} for theoretical characterisations of the posterior over $\theta$. In practice, sampling a policy from the posterior for deployment is equivalent to sampling  $\theta \sim p^\Theta(\cdot|\mathcal{D}_{\tau})$ and then acting according to its greedy policy, specifically $\mu(s) \in \argmax_{a \in \mathcal{A}_s} Q_\theta(s,a)$.
This sampling method is known as  \textit{posterior sampling for reinforcement learning} \citep{tsstren,osband2013moreefficient}, which is an extension of Thompson sampling (TS) \citep{ts1933} for multi-armed bandit problems \citep{tstutorial}, and is utilised as part of our numerical study in Section \ref{sec:experiment}.

\subsection{Related works}\label{sec:relatedwork}

We now discuss how our work is distinct from numerous other related contributions. We commence with a comparison of the Bayesian problem formulation itself. While our choice of likelihood follows naturally from the problem exposition in Section \ref{sec:bayesianlearning}, many works that adopt a Bayesian perspective to learn $Q^*$ are built upon classical non-Bayesian algorithms like $Q$-learning \citep{qlearning1992,tsdeardenbayesq}, which is an alternative to our formulation. This leads to different likelihood choices. As we discuss below, in some cases, the likelihood choice is itself questionable and leads to time-inconsistent posterior definitions. The advantages of using a Bayesian learning approach over an optimisation counterpart---such as posterior driven policy exploration in Equation \ref{eq:ts_mdp}---are thereby diminished.

Modern $Q$-learning employing deep neural networks \citep{dqn_humanlevel,qlearningtheory, td_optimperspective} to approximate $Q^*$ can be viewed as stochastic gradient descent step for the following sequence of mean-squared temporal difference (TD) error minimisation objectives,
$$ L^{\mathrm{TD}}(\theta;\theta^t)  := \mathbb{E}_{\substack{S,A \sim d(\cdot) \\ S^\prime \sim p^S(\cdot|S,A)}} \bigg[\bigg(R(S,A) + \max_{a^\prime \in \mathcal{A}_{S^\prime}} Q_{\theta^t} (S^\prime,a^\prime) - Q_\theta(S,A)\bigg)^2\bigg],$$
where $Q_{\theta^t}$ is the present approximation to $Q^*$, and $d(\cdot)$ is a user-defined state-action sampling distribution; see also related pre-neural-network works, for example \citet{residualgradient_baird1995,fittedqiter}. Let $\theta^{t+1}$ be the minimising argument, or an approximately minimising one, of $L^{\mathrm{TD}}(\theta;\theta^t)$. In addition to the absence of an overall optimisation objective, in practice, the $Q$-learning update is performed with various additional ``tricks'' and safeguards for better stability: by breaking the temporal correlations via data-subsampling from a data buffer (also known as experience replay); and/or freezing $\theta_t$ in the optimisation criterion for multiple optimisation steps \citep{neuralfittedqiter, dqn_humanlevel, doubledqn}.

To learn $Q^*$ in a Bayesian way, a common approach is to recast the sequence of minimisation problems, together with the associated stability tricks, as a sequence of  Bayesian regression problems with the following sequence of  Gaussian likelihoods: $L(\theta;\theta^t) = \prod_{i=0}^t \mathcal{N}(r_i+\max_{a^\prime \in \mathcal{A}_{s_{i+1}}}Q_{\theta^t}(s_{i+1},a^\prime);Q_{\theta}(s_i,a_i),\epsilon^2)$. Here  $\epsilon^2$ is the variance and  $\theta^t$ is a point estimate of the parameter (for the true $Q^*$), which is chosen from the posterior of the previous Bayesian regression problem; for example, see \citet{dropout,bootstraposband,randomisedpriorwithensembleosband,randomisedvaluefunction, tdsmc}. In addition to the validity of replacing $\theta$ with $\theta^t$ being unclear, there are other concerns. A major one is the randomness of $\theta^t$, which is not accounted for in the posterior distribution. Also, $R_i + \max_{a^\prime \in \mathcal{A}_{S_{i+1}}} Q_{\theta^t} (S_{i+1},a^\prime)$ is a non-linear transformation of the reward $R_i\sim p^R(\cdot|s_t,a_t)$ and the state $S_{i+1}\sim p^S(\cdot \vert s_i,a_i)$ random variables. Thus, its distribution is not Gaussian as assumed by the temporal difference likelihood above. The combination of these factors results in an uninterpretable posterior, therefore its uncertainty quantification as well.

 An alternative to the temporal difference criterion in the optimisation literature is the mean-squared error of the Bellman residual (BR), 
$$ L^{\mathrm{BR}}(\theta) := \mathbb{E}_{S,A \sim d(\cdot)}\bigg[\bigg(R(S,A)+ \mathbb{E}_{S^\prime \sim p^S(\cdot|S,A)}\bigg[\max_{a^\prime \in \mathcal{A}_{S^\prime}} Q_\theta(S^\prime,a^\prime)\bigg]-Q_\theta(S,A)\bigg)^2\bigg].$$
Unlike the $L^{\mathrm{TD}}$, this objective's global minimum is the MDP's optimal action-value function $Q^*$ when the latter is the unique solution of the BOEs \citep{residualgradient_baird1995,lstdlinear,  policyevaluationsurvey}. Analogous to TD-based methods, Bellman residuals can be incorporated within the likelihood function in a Bayesian framework. In fact, Bellman residuals can be found within our likelihood for $Q^*$ as defined in Equation \ref{eqn:llhpstarnoisy}. Unlike TD-based approaches, using a BR-based likelihood does not change the definition of the posterior by using point estimates.

While we focus on learning the optimal action-value function $Q^\ast$, many BR-based works have focused on learning the posterior distribution of the action-value function $Q^\pi$ for a specific policy $\pi$; for example, \citet{gprl-rasmussen, gpsarsa_engel, kalmantd, singh_whitney_chopin_acm, bayesianbellmanoperator, sellandsingh}. The Bellman equations that characterise $Q^\pi$ have no maximisation inside the expectation, which makes the problem more amenable to approximations. For example, \citet{gprl-rasmussen} learn a Gaussian process approximation to $Q^\pi$: combining a Gaussian process prior with a Gaussian model for the dynamics $p^S$ results in a tractable Gaussian process solution to the BOEs for $Q^\pi$; also see related studies \citep{gpsarsa_engel, kalmantd}. Techniques for sampling the posterior distribution of the chosen problem formulation include conjugate tractable approaches \citep{tsdeardenbayesq,gprl-rasmussen,gpsarsa_engel,psrl-ssp}; variational inference methods \citep{dropout,bdqn_azizz}; Kalman filtering \citep{kalmantd}; ensemble-based methods \citep{bootstraposband,randomisedpriorwithensembleosband,randomisedvaluefunction,  tdsmc}; MCMC \citep{sellandsingh,qlearn_sgld}. Finally, we note that if the inner expectation within the Bellman residual (and therefore our likelihood) is not tractable, which is a common problem shared by all such BR-based methods, it is possible to estimate the inner expectation using Monte Carlo.

With a posterior definition and sampling technique in hand, many works have focused on characterising the {\it learning rate} of the resulting Thompson sampling (or posterior sampling) algorithm. The performance of posterior sampling with a TD-based likelihood is studied in \citet{rlsvi_regret_russo,provablyefficientpossampling}. The performance for posterior sampling is also studied in \citet{psrl_modelbased_regret_agrawal,whypsrlbetterthanoptimism, psrl-ssp}, but 
these works adopt a different definition of a posterior than ours. They maintain a tractable posterior over the transition dynamics (instead of $Q^*$) for finite state and action spaces and solve for the optimal policy using an iterative scheme like dynamic programming.

%% file: sections/theoretical_results.tex
\section{Theoretical results}
\label{sec:theory}

We now present theoretical results, beginning with a characterisation of the posterior for noiseless rewards---deterministic rewards are a pillar of the MDP framework, with many potential applications \citep{putermanrl,bertsekas2012}. We show that the posterior of the optimal action-value function lies in a low-dimensional manifold of the original (high-dimensional) space (Theorems \ref{thm:hausdorff_maintext} and \ref{thm:hausdorff_maintext_tabular}). As sampling a manifold posterior is challenging, we relax the likelihood so that the resulting \textit{approximate} posterior admits a Lebesgue density on the original space, which is more suitable for standard Monte Carlo sampling (see the numerical work in Section \ref{sec:experiment}). Furthermore, when combined with a Gaussian prior, the posterior is a mixture of truncated-Gaussians (Proposition \ref{prop:psrl_new}). However, we show how the relaxed likelihood incurs unidentifiability (Lemma \ref{lem:loopthm}), which has negative implications for inference (Lemma \ref{lem:propernou} and Theorem \ref{thm:unboundedlh}). The section ends with some suggestions for mitigation.

\begin{remark}
The state-action components of the data $\mathcal{D}_{\tau} =\{(s_t,a_t,r_t)\}_{t=0}^{\tau}$ (see  Equation \ref{eqn:pstarnoisyfull}), $\{(s_t,a_t)\}_{t=0}^{\tau}$, may contain multiple visits to a state and the same action selected for that state more than once. Our results in this section will be framed in the context of unique state-action pairs, excluding the goal state. Let
$\mathcal{D}^{\mathcal{S},\mathcal{A}}_{\tau} =\{(s_{t_i},a_{t_i})\}_{i=1}^n$ such that $0=t_1<\ldots<t_n\leq\tau$;  
$(s_{t_j},a_{t_j})\neq (s_{t_i},a_{t_i})$, for $i<j$; and $(s_{t_j},a_{t_j}) \neq (s^g,a^g)$. The unique state-action pairs of $\mathcal{D}_{\tau}$ is the largest such set $\mathcal{D}^{\mathcal{S},\mathcal{A}}_{\tau}$ that can be found.
Repeated state-action pairs are unnecessary for deterministic rewards. Furthermore, the goal state is excluded from $\mathcal{D}^{\mathcal{S},\mathcal{A}}_{\tau}$ since, by Assumption \ref{ass:boeunique1}, $g_{s^g,a^g}(\theta)=0$ and we will select parametrisations such that $Q_\theta(s^g,a^g)=0$ for all $\theta \in \Theta$. For simplicity, we re-index the subsequence so that
$\mathcal{D}^{\mathcal{S},\mathcal{A}}_{\tau} =\{(s_{i},a_{i})\}_{i=1}^n$.
Finally, for noiseless rewards, there is no $\psi$-component to the inferred, only $\theta$ is to be learnt.

\label{rem:dataset}
\end{remark}

\subsection{Hausdorff characterisation of posterior for deterministic rewards and relaxation} \label{sec:hausdorff}
When the rewards are noiseless, each state-action-reward triplet imposes an equality constraint (in contrast with Equation \ref{eqn:llhpstarnoisy}) for $\theta$ via the BOEs. In this case, $\theta$'s region of uncertainty contracts from the prior belief to the subset of $\Theta$ where the corresponding set of observed BOEs is satisfied. In other words, the support of the posterior distribution of $\theta$ is a low-dimensional manifold of $\Theta$, and no longer admits a density with respect to the Lebesgue measure on  $\Theta$ as the prior did. Our first result in Theorem \ref{thm:hausdorff_maintext} characterises the posterior through its density, expressed with respect to the Hausdorff measure.

 For noiseless rewards, we cannot use the posterior characterisation in Equation \ref{eqn:pstarnoisyfull}. Instead, the posterior is defined rigorously using the notion of a {\it regular conditional probability }\citep{rcp_def_parthasarathy}. We then find the specific expression for this regular conditional probability, which is expressed as a density, when conditioning is restricted to the possible set of data we can observe, which is expressed as specific values of the rewards in Equation \ref{eqn:hausdorff} below. The expression for the density in Equation \ref{eqn:hausdorff} uses the co-area formula technique proposed in \citet{coarea-diaconis} for finding manifold posteriors.

The statement of Theorem \ref{thm:hausdorff_maintext} concerns the posterior of $\theta$ conditioned on the data in $\mathcal{D}_\tau$ defined in Remark \ref{rem:dataset}. Let the corresponding reward vector be $\bar{r}:=(\bar{r}_{s_1,a_1},\dots,\bar{r}_{s_n,a_n})^\top$. For a differentiable function $G:\mathbb{R}^{d_\Theta}\rightarrow \mathbb{R}^n$, let $DG(\theta) \in \mathbb{R}^{n \times d_\Theta}$ denote its Jacobian matrix evaluated at $\theta$. Finally, $\mathcal{H}^{d}$ is the $d$-dimensional Hausdorff measure.
\begin{theorem}\label{thm:hausdorff_maintext}
Let $\mathcal{M}$ be an MDP that satisfies Assumption \ref{ass:boeunique1}, and has deterministic rewards $\bar{r}_{s,a}$. Let $\Theta =\mathbb{R}^{d_\Theta}$, and let---the constraint function---$G_\tau:\Theta\rightarrow \mathbb{R}^n$ be $G_\tau(\theta)_i=g_{s_i,a_i}(\theta)$. 

Assume the preimage $\mathcal{O}^{\bar{r}}_\tau = \{\theta \in \Theta \mid G_\tau(\theta)=\bar{r}\} \subseteq \text{supp}(p^\Theta)$, and is not empty. Furthermore, 
assume $G_\tau$ is Lipschitz continuous, and assume $JG_\tau(\theta)=\sqrt{\det(DG_\tau(\theta) DG_\tau(\theta)^\top)} > 0$ wherever it is differentiable on $\mathcal{O}_\tau^{\bar{r}}$. Let $N^c=\{\theta \in \Theta \mid  DG_\tau(\theta) \text{ exists}\}$. Then, the posterior density of $\theta$ conditional on $G_\tau(\theta)=\bar{r}$ is the integrand below:
\begin{equation}\label{eqn:hausdorff}
     P^\prime(E|G_\tau(\theta)=\bar{r}) :=  \int_{E} \frac{1}{m(\bar{r})} \frac{p^\Theta(\theta) \mathbbm{1}(\theta \in \mathcal{O}_\tau^{\bar{r}}\cap N^c)}{ JG_\tau(\theta)} \mathcal{H}^{d_\Theta - n}(\mathrm{d}\theta),\quad E \subseteq \mathbb{R}^{d_\Theta}, 
\end{equation}
where $m(\bar{r}):=\int_{\mathcal{O}_\tau^{\bar{r}}\cap N^c} p^\Theta(\theta)(JG_\tau (\theta))^{-1} \mathcal{H}^{d_\Theta-n}(\mathrm{d}\theta)$,
provided $0<m(\bar{r}) < \infty$.
\end{theorem}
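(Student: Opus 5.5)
The plan is to follow the co-area approach of \citet{coarea-diaconis}: verify that the kernel $P^\prime(\cdot\mid G_\tau(\theta)=y)$, defined for each $y$ as in Equation \ref{eqn:hausdorff} with $\bar r$ replaced by $y$, is a version of the regular conditional probability of $\theta$ given $G_\tau(\theta)$. The defining property to check is that for all Borel $E\subseteq\mathbb{R}^{d_\Theta}$ and $F\subseteq\mathbb{R}^n$,
\begin{equation*}
\int_{F} P^\prime(E\mid G_\tau(\theta)=y)\, p^{G}(y)\,\mathrm{d}y = \int_{\mathbb{R}^{d_\Theta}} \mathbbm{1}(\theta\in E)\,\mathbbm{1}(G_\tau(\theta)\in F)\, p^\Theta(\theta)\,\mathrm{d}\theta ,
\end{equation*}
where $p^G$ is the Lebesgue density of the pushforward of $p^\Theta$ under $G_\tau$. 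The key steps are: first show that this pushforward has density $p^G(y)=m(y)$; then verify the displayed identity; finally evaluate at $y=\bar r$.

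\textbf{Step 1 (Rademacher and the null set).} Since $G_\tau$ is Lipschitz, Rademacher's theorem gives that $N:=\Theta\setminus N^c$ is Lebesgue-null. So $\int_\Theta f\,p^\Theta\,\mathrm{d}\theta=\int_{N^c} f\,p^\Theta\,\mathrm{d}\theta$ for any integrable $f$. On the part of $N^c$ where $JG_\tau=0$, the co-area formula shows that the image of that set under $G_\tau$ is Lebesgue-null in $\mathbb{R}^n$. Hence it does not affect the conditional law for almost every $y$. At the specific level $\bar r$, the assumption $JG_\tau>0$ on $\mathcal{O}^{\bar r}_\tau\cap N^c$ lets us divide by $JG_\tau$ there.

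\textbf{Step 2 (co-area formula).} Apply Federer's co-area formula for Lipschitz maps $\mathbb{R}^{d_\Theta}\to\mathbb{R}^n$ with $d_\Theta\ge n$. For nonnegative measurable $h$,
\begin{equation*}
\int_{\mathbb{R}^{d_\Theta}} h(\theta)\,JG_\tau(\theta)\,\mathrm{d}\theta=\int_{\mathbb{R}^n}\int_{G_\tau^{-1}(y)} h(\theta)\,\mathcal{H}^{d_\Theta-n}(\mathrm{d}\theta)\,\mathrm{d}y .
\end{equation*}
Take $h=\mathbbm{1}_E(\theta)\mathbbm{1}_F(G_\tau(\theta))\,p^\Theta(\theta)\,\mathbbm{1}_{N^c}(\theta)\,\mathbbm{1}(JG_\tau(\theta)>0)/JG_\tau(\theta)$. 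The left side is then $P(\theta\in E,\,G_\tau(\theta)\in F)$, up to the null set handled in Step 1. The right side equals
\begin{equation*}
\int_F \int_{E\cap G_\tau^{-1}(y)\cap N^c} \frac{p^\Theta}{JG_\tau}\,\mathrm{d}\mathcal{H}^{d_\Theta-n}\,\mathrm{d}y .
\end{equation*}
Setting $E=\mathbb{R}^{d_\Theta}$ identifies $p^G(y)=m(y)$. Then dividing by $m(y)$ wherever $0<m(y)<\infty$ gives exactly the kernel in Equation \ref{eqn:hausdorff} and verifies the disintegration identity. Measurability of $y\mapsto\int_{E\cap G_\tau^{-1}(y)}\cdots$ also follows from the co-area theorem.

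\textbf{Step 3 (the specific level $\bar r$).} A regular conditional probability is only defined up to $p^G$-null sets of $y$, but $\bar r$ is a single point. I would follow \citet{coarea-diaconis} and adopt the co-area version as the canonical one: it is the version obtained as the limit of $P(\theta\in E\mid \|G_\tau(\theta)-\bar r\|<\delta)$ as $\delta\to0$ whenever $y\mapsto m(y)$ and its $E$-restricted analogue are continuous at $\bar r$. The assumptions that $\mathcal{O}^{\bar r}_\tau$ is nonempty, lies in $\mathrm{supp}(p^\Theta)$, and satisfies $0<m(\bar r)<\infty$ make this expression well defined and a genuine probability measure.

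I expect the main obstacle to be this last point, namely justifying the evaluation at the single value $\bar r$ rather than almost everywhere. The hypotheses of the theorem seem intended to make it a definitional choice, with the co-area identity supplying the justification, so I would state this explicitly rather than prove a continuity result.
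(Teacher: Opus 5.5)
There is a genuine gap in Steps 1--2. You treat $Z:=\{\theta\in N^c \mid JG_\tau(\theta)=0\}$ as negligible, but here it can carry positive prior mass. That is exactly why the theorem assumes $JG_\tau>0$ only on the single fibre $\mathcal{O}^{\bar r}_\tau$ and not on all of $\mathrm{supp}(p^\Theta)$ (see Remark \ref{rem:Gamma_0_Gamma_1}).

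A concrete case is Example \ref{ex:2dmdp} with data $\mathcal{D}_1$. There $G_\tau(\theta)=\theta_1-\max\{\theta_1,\theta_2\}$ vanishes identically on $\{\theta_1>\theta_2\}$, where $JG_\tau=0$. Under the Gaussian prior this region has probability $1/2$, so the pushforward $Q$ of $p^\Theta$ has an atom of mass $1/2$ at $y=0$ and no Lebesgue density.

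With your $h$, which contains $\mathbbm{1}(JG_\tau>0)$, the left side of the co-area identity is $P(\theta\in E\setminus Z,\,G_\tau(\theta)\in F)$, not $P(\theta\in E,\,G_\tau(\theta)\in F)$. Correspondingly, $\int m(y)\,\mathrm{d}y=1-P(Z)$. So the claim ``$p^G=m$'' is false, and the disintegration identity fails: with $E=\mathbb{R}^2$ and $F=\{0\}$ the left side is $0$ and the right side is $1/2$.

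Your observation that $G_\tau(Z)$ is Lebesgue-null does not rescue this. Regular conditional probabilities are determined only up to $Q$-null sets, and mass $P(Z)>0$ pushed onto a Lebesgue-null set of levels is precisely a singular part of $Q$. (In any case, the co-area formula only gives $\mathcal{H}^{d_\Theta-n}(Z\cap G_\tau^{-1}(y))=0$ for a.e. $y$, not that $G_\tau(Z)$ is null.) Note that at the observed level $\bar r=-1$ the fibre is the line $\theta_2=\theta_1+1$, where $JG_\tau=\sqrt{2}$. So the theorem's hypotheses hold even though your global density claim fails.

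The missing idea, which the paper supplies, is to localise before applying co-area:
\begin{enumerate}
\item Let $\Gamma_1=\{\theta \mid DG_\tau(\theta)\text{ exists and }JG_\tau(\theta)>0\}$ and $\Gamma=\{y \mid \mathcal{O}^y_\tau\subseteq\Gamma_1\cup N\}$. These are the levels whose fibres avoid $Z$.
\item Restrict and renormalise $p^\Theta$ to $G_\tau^{-1}(\Gamma)\cap N^c$. The resulting density vanishes wherever $JG_\tau=0$, so your Step 2 argument becomes valid for this measure. Its pushforward has density $m\,\mathbbm{1}_\Gamma$ up to a constant, and the co-area kernel is a regular conditional probability for it. The normalising constants cancel, so on $\Gamma$ you recover the formula with $p^\Theta$.
\item Glue this kernel on $\Gamma$ with an arbitrary regular conditional probability on $\Gamma^c$, using Lemma \ref{lem:complete_diaconis}, to obtain one for the original prior.
\end{enumerate}
The hypothesis that $JG_\tau>0$ at the differentiable points of $\mathcal{O}^{\bar r}_\tau$ is used precisely to conclude $\bar r\in\Gamma$. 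In your proposal it only licenses dividing by $JG_\tau$, which misses its real role. Your Step 3 caveat about evaluating at a single level is fair; the paper likewise resolves it only by choosing this version.
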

\begin{proof}
    See Appendix \ref{apd:hausdorffmaintexthmproof}.
\end{proof}

 Theorem \ref{thm:hausdorff_maintext} gives a general formula for the posterior under verifiable conditions. Thus, in order to use this result to characterise the SSP's problem posterior distribution, we verify the assumptions for our specific choices. In particular, we show that the {\it tabular} parametrisation satisfies the Lipschitz condition. For this parametrisation, we show that if the MDP also satisfies Assumption \ref{ass:boeunique2}, then $JG_\tau(\theta)>0$ wherever $G_\tau$ is differentiable on the pre-image $\mathcal{O}_\tau^{\bar{r}}$, for all possible data sets $\bar{r}$ that may arise. These findings culminate in Theorem \ref{thm:hausdorff_maintext_tabular}.

Thus, the \textit{posterior distribution} in Equation \ref{eqn:hausdorff}, which is essentially a regular conditional distribution given a sigma-algebra generated by $G_\tau$, possesses a density with respect to the  $\mathcal{H}^{d_\Theta-n}(\mathrm{d} \theta)$. Sampling the Hausdorff density in Equation \ref{eqn:hausdorff} directly, however, is challenging. For example, it is difficult to design an MCMC proposal kernel that proposes samples on the manifold \citep{manifoldlifting}. Thus, one implication of this result is that it provides the theoretical foundation for tractable approximations that we can utilise; rather than directly working on the Hausdorff density, we can relax the degenerate likelihood (by convolving it) to ``thicken'' the manifold posterior so that it has a usual Lebesgue density in $\mathbb{R}^{d_\Theta}$. The resulting approximate posterior, denoted $\hat{p}_\epsilon$, is
\begin{equation}\label{eqn:abcpos}
    \pabc(\theta|\mathcal{D}_\tau) \propto  p^\Theta(\theta) \prod_{(s,a) \in \mathcal{D}_\tau^{\mathcal{S},\mathcal{A}}} K_\epsilon(g_{s,a}(\theta),\bar{r}_{s,a}) =: p^\Theta(\theta) \hat{L}_\epsilon(\theta|\mathcal{D}_\tau).
\end{equation}
(Specific notation for the approximate likelihood, $\hat{L}_\epsilon$, is introduced for later use.) Common kernels include the uniform kernel, $K_\epsilon(x,y) = (2\epsilon)^{-1} \mathbbm{1}(|x-y|<\epsilon)$, and the Gaussian kernel, $K_\epsilon(x,y) = \mathcal{N}(y;x,\epsilon^2)$. This posterior is suitable for standard Monte Carlo sampling. For example, rejection sampling; see related work in approximate Bayesian computation \citep{abcwikinson,abcmarin}. However, rejection sampling is unlikely to do well when $\theta$ is high-dimensional, and we implement some MCMC alternatives in Section \ref{sec:experiment}.

With an appropriate kernel, as $\epsilon$ tends to zero, expectations (of functions from a suitable class) computed with the approximate posterior distribution converge to the same expectation computed with the target posterior; see, e.g., \citet{manifoldsampling_phdthesis}. When $Q^*$ does not lie within the parametric class, it can be shown that the posterior collapses to a maximiser of $\hat{L}_\epsilon(\theta|\mathcal{D}_\tau)$ for appropriate kernels as $\epsilon$ tends to zero under regularity conditions.

\subsection{Posterior characterisation for tabular models}\label{sec:tabhausdorff}

The focus of this section is on the following tabular parametrisation.

\begin{definition}
   $Q_\theta(s,a):\mathcal{S} \otimes \mathcal{A} \rightarrow \mathbb{R}$ has a tabular form if there exists a bijection  $\nu:\mathcal{S} \otimes \mathcal{A} \rightarrow \{1,\dots,d_{\Theta}+1\}$---called the index function---where $d_{\Theta}=|(\mathcal{S} \setminus \{s^g\}) \otimes \mathcal{A}|$ and  $\nu((s^g,a^g)) :=d_\Theta+1$, such that $Q_{\theta}(s,a) \equiv \theta_{\nu(s,a)}$ for all $s \in \mathcal{S}\setminus \{s^g\}$, $a \in \mathcal{A}_s$. In addition, with an abuse of notation, define $\theta_{d_\Theta+1}= Q_\theta(s^g,a^g) :=0$.
    \label{def:tabular}
\end{definition}
Here, only $\theta \in \mathbb{R}^{d_\Theta}$ is inferred from the data since, by Assumption \ref{ass:boeunique1}, $Q^\pi(s^g,a^g) =0$ for all stationary policies $\pi$.  Furthermore, the goal state is removed from the data set when computing the posterior.

The following result shows that the tabular parameterisation of $Q^*$ satisfies the Lipschitz condition, and when combined with Assumptions \ref{ass:boeunique1} and \ref{ass:boeunique2}, the positivity of $JG_\tau$, as required by Theorem \ref{thm:hausdorff_maintext}, is assured.
\begin{theorem}\label{thm:hausdorff_maintext_tabular} Let $\mathcal{M}$ be an MDP that satisfies Assumption \ref{ass:boeunique1}. Let $G_\tau(\theta)$ be defined as in Theorem \ref{thm:hausdorff_maintext}. If  $Q_\theta$ is tabular, as in Definition \ref{def:tabular}, then $G_\tau(\cdot)$ is Lipschitz and thus differentiable almost everywhere.\\
In addition to $Q_{\theta}$ being tabular, let the MDP $\mathcal{M}$ satisfy both Assumptions \ref{ass:boeunique1} and \ref{ass:boeunique2}. Then, for $\theta \in \mathcal{O}^{\bar{r}}_\tau $, $JG_\tau(\theta)>0$ when $G_\tau(\cdot)$ is differentiable at $\theta$.

\end{theorem}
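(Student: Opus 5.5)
For the tabular parametrisation each constraint reads
\[
g_{s_i,a_i}(\theta)=\theta_{\nu(s_i,a_i)}-\sum_{s'\in\mathcal{S}}p^S(s'|s_i,a_i)\max_{a'\in\mathcal{A}_{s'}}\theta_{\nu(s',a')},
\]
with $\theta_{d_\Theta+1}=0$. My plan is to handle the Lipschitz claim by elementary estimates. I will then show that $DG_\tau(\theta)$ has full row rank $n$ at points of differentiability on $\mathcal{O}^{\bar r}_\tau$, since $JG_\tau(\theta)>0$ is equivalent to $DG_\tau(\theta)DG_\tau(\theta)^\top$ being nonsingular.

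\textbf{Step 1 (Lipschitz).} The map $\theta\mapsto\max_{a'}\theta_{\nu(s',a')}$ is $1$-Lipschitz in the sup-norm, because $|\max_j x_j-\max_j y_j|\le\max_j|x_j-y_j|$. Averaging over the probability vector $p^S(\cdot|s_i,a_i)$ preserves this bound. Adding the coordinate projection $\theta\mapsto\theta_{\nu(s_i,a_i)}$ gives $|g_{s_i,a_i}(\theta)-g_{s_i,a_i}(\theta')|\le 2\|\theta-\theta'\|_\infty$. Hence $G_\tau$ is Lipschitz, and Rademacher's theorem yields differentiability almost everywhere.

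\textbf{Step 2 (structure of the Jacobian).} Suppose $G_\tau$ is differentiable at $\theta$. I will argue that each $s'$ with $p^S(s'|s_i,a_i)>0$ has a unique maximiser $a^*(s')$ of $\theta_{\nu(s',\cdot)}$. If instead there were ties with positive weight, one-sided directional derivatives along the tied coordinates would disagree, contradicting differentiability. To make this rigorous I will restrict to a one-dimensional perturbation of a single tied coordinate, where the derivative jumps by $p^S(s'|s_i,a_i)$ times a sign. A corner case is a tie involving the coordinate $\nu(s_i,a_i)$ itself, and I expect it to be harmless.

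Given uniqueness of the maximisers, $\theta$ determines a deterministic greedy policy $\mu_\theta(s')=a^*(s')$ on the relevant states. Row $i$ of $DG_\tau(\theta)$ is then $e_{\nu(s_i,a_i)}^\top-\sum_{s'\ne s^g}p^S(s'|s_i,a_i)e^\top_{\nu(s',\mu_\theta(s'))}$. So $DG_\tau(\theta)=E-P$, where $E$ selects the $n$ distinct data coordinates and $P$ is a substochastic transition matrix from data pairs to state-action pairs under $\mu_\theta$. The substochasticity comes from dropping the goal state's column.

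\textbf{Step 3 (full rank, the main obstacle).} It suffices to show $y^\top(E-P)=0$ forces $y=0$. Because the data pairs are distinct, $y^\top E$ places $y_i$ at coordinate $\nu(s_i,a_i)$. The equation therefore says $y$ is invariant under the "successor" mass transport restricted to the data pairs: writing $T_{ij}=p^S(s_j|s_i,a_i)\mathbbm{1}(a_j=\mu_\theta(s_j))$, we need $y=T^\top y$, and additionally all mass flowing to non-data coordinates must vanish. If $y\ne0$, then by a Perron–Frobenius/substochastic argument applied to $|y|$, the support of $y$ must be a closed set of data pairs. From these pairs the chain under $\mu_\theta$ never reaches $s^g$, so $\mu_\theta$, suitably extended, is an improper policy with a recurrent class of non-goal states.

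Here I use $\theta\in\mathcal{O}^{\bar r}_\tau$. On this recurrent class the BOE constraints give $\theta_{\nu(s,\mu_\theta(s))}=\bar r_{s,\mu_\theta(s)}+\sum_{s'}p^S(s'|s,\mu_\theta(s))\theta_{\nu(s',\mu_\theta(s'))}$, since $\max$ equals the $\mu_\theta$ value. Averaging this against the stationary distribution of the closed class forces the stationary mean reward to be $0$. This contradicts Assumption~\ref{ass:boeunique2}, because an improper policy must have $V^\mu=-\infty$ somewhere, which for a finite closed class means the stationary average reward is negative. The delicate points I expect to need care with are the absolute-value/support reduction (signed $y$) and the extension of $\mu_\theta$ to states outside the data. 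For the latter I will take $\mu_\theta(s)=\argmax_a\theta_{\nu(s,a)}$ with tie-breaking, which matters only off the closed class.
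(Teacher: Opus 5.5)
\textbf{Verdict.} Your proposal is sound in structure and, for the rank step, takes a genuinely different route from the paper. There is one gap in the final step, which the paper shares.

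\textbf{Comparison with the paper.} The paper proves that the rows of $DG_\tau(\theta)$ are independent by embedding. It builds an auxiliary MDP $\widetilde{\mathcal{M}}$ in which every non-data pair jumps straight to $s^g$. The rows of $DG_\tau$ are then the first $n$ rows of a square Jacobian $D\widetilde{G}$. Its invertibility follows from Lemma~\ref{lem:bellmanfirstpart} (zero-reward policy evaluation under a proper policy has only the zero solution), once $\mu_\theta$ is shown to be proper in $\widetilde{\mathcal{M}}$. That properness is proved by contradiction: a closed non-goal class in $\widetilde{\mathcal{M}}$ must consist of data pairs, and the BOEs on it keep the partial reward sums bounded. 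You instead work directly with the left kernel. The inequality $\|y\|_1\le\sum_i|y_i|\sum_j T_{ij}$ forces $\operatorname{supp}(y)$ to consist of pairs $(s,\mu_\theta(s))$ that are closed under $\mu_\theta$, with no leakage to $s^g$ or to non-data pairs. Then $|y|/\|y\|_1$ is a stationary law on that set, so averaging the BOEs gives zero mean reward. This avoids the auxiliary MDP and Lemma~\ref{lem:bellmanfirstpart}. It also characterises rank deficiency exactly: $DG_\tau$ drops rank at a differentiable point iff such a closed class of data pairs exists, since a stationary law on one yields a left null vector. The paper's route gives the stronger fact that the full extended square Jacobian is invertible, and it reuses a lemma needed again in Theorem~\ref{thm:unboundedlh}(ii). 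Your Step 2 is also sharper than the paper's: uniqueness of the argmax is only needed at successors of data pairs, not at every state.

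\textbf{The gap: extending $\mu_\theta$ off the data.} You say the tie-breaking extension ``matters only off the closed class'', but that does not suffice. Assumption~\ref{ass:boeunique2} only provides \emph{some} state with $V^{\mu}=-\infty$. With a greedy extension, that state may sit in a different closed class, so nothing follows about your $C$. Moreover, $|y|/\|y\|_1$ may mix several recurrent classes. The fix is as follows.
\begin{itemize}
\item For a recurrent class $R\subseteq C$, let $\mu'$ equal $\mu_\theta$ on $R$ and a proper policy (provided by Assumption~\ref{ass:boeunique2}) elsewhere. Then $\mu'$ is improper.
\item The expected reward collected before hitting $R\cup\{s^g\}$ is finite.
\item If $R$ had average reward $\ge 0$, the expected partial sums from every state would be bounded below, so no state could have $V^{\mu'}=-\infty$.
\item Hence every recurrent class has strictly negative average reward, so any stationary law on $C$ has negative mean reward, contradicting your zero average.
\end{itemize}
The paper's final step, which asserts divergence from an arbitrary $s\in\mathcal{C}$, implicitly needs this same argument.
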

\begin{proof}
    See Appendix \ref{apd:hausdorff_assumptions_proof}.
\end{proof}

\begin{remark}\label{rem:hausdorff_corr}
    Notice that the posterior density for $\theta$ conditioned on $G_\tau(\theta)=\bar{r}$ in Equation \ref{eqn:hausdorff} is defined through the differentiable points of $G_\tau(\cdot)$. Corollary \ref{cor:tabularhausdorff_m} in Appendix \ref{apd:hausdorff_assumptions_proof} completes the verification of this characterisation by showing the normalising constant $m(\bar{r}) >0$.
\end{remark}

It follows from Theorem \ref{thm:hausdorff_maintext_tabular} that for any data set of any length---and notably for any reward values $\bar{r}$---that originates from an MDP that satisfies Assumptions \ref{ass:boeunique1} and \ref{ass:boeunique2}, the posterior of the optimal action-value function is indeed given by Equation \ref{eqn:hausdorff}. Theorem \ref{thm:hausdorff_maintext_tabular} (and Theorem \ref{thm:hausdorff_maintext}) also describes how the posterior uncertainty contracts. In particular, when $Q_\theta$ is given by the tabular parametrisation, Theorem \ref{thm:hausdorff_maintext_tabular} proves that the Hausdorff dimension of the posterior decreases with each unique (or new) state-action pair observed, provided the data comes from an MDP satisfying Assumptions \ref{ass:boeunique1} and \ref{ass:boeunique2}. This holds even though the prior $p^\Theta$ may permit improper policies.

Once again, this Hausdorff density is difficult to target computationally, even for a tabular parametrisation. Therefore, in Proposition \ref{prop:psrl_new}  below, we find the density of the relaxed tabular posterior. This posterior can be expressed as a mixture of densities, where each mixture component is the posterior for $\theta$ conditioned on a particular set of action choices for some (data-dependent fixed) set of states being optimal. When the prior $p^\Theta$ is Gaussian, and $K_{\epsilon}$ is a Gaussian kernel, the approximate posterior, denoted $\hat{p}_\epsilon(\cdot|\mathcal{D}_\tau)$, is shown in Proposition \ref{prop:psrl_new} to be a mixture of truncated-Gaussians.  Furthermore, this posterior is tractable up to multivariate Gaussian integrals.

 The expression for the likelihood given $\mathcal{D}_\tau$ involves the expected value of {$Q_\theta$} computed over the next states in $\mathcal{S}^{\mathcal{D}_\tau} := \bigcup_{i=1}^n \text{supp}(p^S(\cdot|s_i,a_i))$  (see Equation \ref{eqn:def_of_g}), $\mathcal{S}^{\mathcal{D}_\tau} \subseteq \mathcal{S}$. For this reason, the posterior of $\theta$ can be expressed in terms of the chosen actions of deterministic stationary policies $l$ in  $\mathcal{S}^{\mathcal{D}_\tau}$.

\begin{proposition}
\label{prop:psrl_new}
Assume $Q_\theta$  is tabular, as in Definition \ref{def:tabular}, and the prior for $\theta$ is $\prod_{j=1}^{d_\Theta}\mathcal{N}(\theta_j ; 0, \sigma^2)$.

Given a mapping $s \in \mathcal{S}^{\mathcal{D}_{\tau}}\rightarrow \ell(s)\in \mathcal{A}_s$---the action choices in $\mathcal{S}^{\mathcal{D}_{\tau}}$ of some deterministic stationary policy---let 
$$ E^\ell := \{\theta \in \Theta \mid \theta_{\nu(s,\ell(s))}\geq \theta_{\nu(s,a)}, s\in \mathcal{S}^{\mathcal{D}_{\tau}}, a \in \mathcal{A}_s\}.$$ 
Let $\mathcal{L}^{\mathcal{D}_{\tau}}$ be the set of all such mappings $\ell$. Then,
the approximate posterior defined in Equation \ref{eqn:abcpos} is
\begin{equation}\label{eqn:gaussianintegralformula}
\hat{p}_\epsilon ( E |\mathcal{D}_{\tau}) \propto
\sum\limits_{\ell \in \mathcal{L}^{\mathcal{D}_{\tau}}} \mathcal{N}(\bar{r}_{1:n};0,(\Gamma^{\ell})^{-1}) \int_{E\cap E^\ell} \mathcal{N}(\theta;\mu^\ell_{\theta|r},\Sigma^\ell_{\theta|r})\mathrm{d}\theta, \quad E \subseteq \Theta,\end{equation}
where $\bar{r}_{i}:= \bar{r}_{s_i,a_i}$, $\Gamma^{\ell} := (\sigma^2 B^\ell {B^\ell}^\top + \epsilon^2 I_n)^{-1}$; $\mu^\ell_{\theta|r} := \sigma^2 {B^\ell}^\top\Gamma^{\ell} r_{1:n}$; $\Sigma^\ell_{\theta|r}:= \sigma^2 I_{d_{\Theta}} - \sigma^4 {B^\ell}^\top \Gamma^{\ell} B^\ell$; and $B^\ell \in \mathbb{R}^{n\times d_{\Theta}}$ with $B^\ell_{i,j} = \mathbbm{1}(j=\nu(s_i,a_i)) - \sum_{s^\prime \in \mathcal{S}^{\mathcal{D}_{\tau}}} p^S(s^\prime|s_i,a_i) \mathbbm{1}(j=\nu(s^\prime,\ell(s^\prime)))$, for $1\leq i \leq n$, $1\leq j \leq d_{\Theta}$.
\end{proposition}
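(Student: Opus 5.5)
Decompose $\Theta$ according to the greedy action choices on the next-state set $\mathcal{S}^{\mathcal{D}_\tau}$. This makes the constraint map $G_\tau$ linear on each piece, and then apply standard Gaussian conjugacy piece by piece.

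\emph{Step 1: partition.} The sets $E^\ell$, $\ell\in\mathcal{L}^{\mathcal{D}_\tau}$, cover $\Theta$, since for every $\theta$ and every $s\in\mathcal{S}^{\mathcal{D}_\tau}$ some action attains $\max_{a\in\mathcal{A}_s}\theta_{\nu(s,a)}$. Two distinct $E^\ell$ intersect only on ties, i.e.\ on finitely many hyperplanes $\{\theta_{\nu(s,a)}=\theta_{\nu(s,a')}\}$, which are Lebesgue-null. Hence for any $E\subseteq\Theta$,
\begin{equation*}
\hat p_\epsilon(E\mid\mathcal{D}_\tau)\propto\sum_{\ell\in\mathcal{L}^{\mathcal{D}_\tau}}\int_{E\cap E^\ell}p^\Theta(\theta)\hat L_\epsilon(\theta\mid\mathcal{D}_\tau)\,\mathrm{d}\theta .
\end{equation*}
For the goal state, $\mathcal{A}_{s^g}=\{a^g\}$ and $\theta_{d_\Theta+1}=0$. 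So if $s^g\in\mathcal{S}^{\mathcal{D}_\tau}$, its term contributes nothing, which is consistent with the indicator restricted to $j\le d_\Theta$ in $B^\ell$.

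\emph{Step 2: linearity on each piece.} For $\theta\in E^\ell$, Definition \ref{def:tabular} and Equation \ref{eqn:def_of_g} give $\max_{a'}Q_\theta(s',a')=\theta_{\nu(s',\ell(s'))}$ for all $s'\in\mathcal{S}^{\mathcal{D}_\tau}$. Only such $s'$ carry mass under $p^S(\cdot\mid s_i,a_i)$. Therefore
\begin{equation*}
g_{s_i,a_i}(\theta)=\theta_{\nu(s_i,a_i)}-\sum_{s'}p^S(s'\mid s_i,a_i)\theta_{\nu(s',\ell(s'))}=(B^\ell\theta)_i ,
\end{equation*}
so $G_\tau(\theta)=B^\ell\theta$ on $E^\ell$. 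With the Gaussian kernel, $\hat L_\epsilon(\theta\mid\mathcal{D}_\tau)=\mathcal{N}(\bar r_{1:n};B^\ell\theta,\epsilon^2I_n)$ there.

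\emph{Step 3: Gaussian conjugacy.} Consider the linear-Gaussian model $\theta\sim\mathcal{N}(0,\sigma^2I)$, $r=B^\ell\theta+\epsilon w$. The joint density factorises as
\begin{equation*}
\mathcal{N}(\theta;0,\sigma^2I)\,\mathcal{N}(r;B^\ell\theta,\epsilon^2I)=\mathcal{N}\bigl(r;0,\sigma^2B^\ell{B^\ell}^\top+\epsilon^2I\bigr)\,\mathcal{N}(\theta;\mu^\ell_{\theta|r},\Sigma^\ell_{\theta|r}),
\end{equation*}
with the conditional mean and covariance given by the usual Schur-complement formulas. These are $\mu=\sigma^2{B^\ell}^\top\Gamma^\ell r$ and $\Sigma=\sigma^2I-\sigma^4{B^\ell}^\top\Gamma^\ell B^\ell$, where $\Gamma^\ell=(\sigma^2B^\ell{B^\ell}^\top+\epsilon^2I_n)^{-1}$. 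The marginal factor equals $\mathcal{N}(\bar r_{1:n};0,(\Gamma^\ell)^{-1})$. It does not depend on $\theta$, so it pulls out of the integral over $E\cap E^\ell$, and Equation \ref{eqn:gaussianintegralformula} follows after summing over $\ell$.

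\emph{Main obstacle.} The computations are routine. The step needing most care is Step 1's bookkeeping: confirming that the tie sets are null, so there is no double counting. It also requires checking that the maximisation in $g_{s,a}$ involves only states in $\mathcal{S}^{\mathcal{D}_\tau}$, including the goal-state convention. Finally, one should note that $\Sigma^\ell_{\theta|r}$ may be singular only in degenerate cases. Since $\epsilon>0$, $\Gamma^\ell$ exists and $\Sigma^\ell_{\theta|r}=(\sigma^{-2}I+\epsilon^{-2}{B^\ell}^\top B^\ell)^{-1}$ by Woodbury, which is positive definite. So each Gaussian density is well defined.
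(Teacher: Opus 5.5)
Your proposal is correct and follows the paper's proof essentially step for step. Like the paper, you partition $\Theta$ into the sets $E^\ell$, which overlap only on a null tie set, and observe that on each $E^\ell$ the likelihood is the linear-Gaussian $\mathcal{N}(\bar{r}_{1:n};B^\ell\theta,\epsilon^2 I_n)$. You then factor the joint density by Gaussian conjugacy into the marginal $\mathcal{N}(\bar{r}_{1:n};0,(\Gamma^\ell)^{-1})$ times the conditional $\mathcal{N}(\theta;\mu^\ell_{\theta|r},\Sigma^\ell_{\theta|r})$ integrated over $E\cap E^\ell$. Your Woodbury identity shows $\Sigma^\ell_{\theta|r}$ is always positive definite for $\epsilon,\sigma>0$, so the hedge about singularity in degenerate cases is unnecessary.
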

\begin{proof}
See Appendix \ref{apd:proofpsrl}.
\end{proof}

\begin{remark}\label{rmk:psrl_extension}
Naturally, the truncated-Gaussian mixture result of Proposition \ref{prop:psrl_new} extends to the true posterior $p^\Theta(\cdot |\mathcal{D}_{\tau})$ in Equation \ref{eqn:llhpstarnoisyfull} for noisy rewards, with additive Gaussian noise with constant variance. This true posterior may be found by replacing $\hat{L}_\epsilon(\theta|\mathcal{D}_\tau)$ with $\prod_{t=0}^\tau\mathcal{N}(r_t;g_{s_t,a_t}(\theta),\eta^2)$, where $\eta^2$ denotes the variance, and keeping $\mathcal{S}^{\mathcal{D}_\tau}$ unchanged.
 \end{remark}

While this posterior can be used to construct the exploration policy (see Section \ref{sec:possamplingexploration}), this approach does not scale well with $|\mathcal{S}|$, $|\mathcal{A}|$, and $d_{\Theta}$ because the Gaussian integrals are on a potentially high-dimensional space $\Theta$, and the summation is over a very large subset $\mathcal{L}^{\mathcal{D}_\tau}$ of deterministic stationary policies. As discussed further in Section \ref{sec:possamplingexploration}, MCMC samples from the posterior in Equation \ref{eqn:abcpos} can be used to construct a greedy policy for exploration (Algorithm \ref{alg:onlinedegen1}), which avoids evaluating the optimality probabilities of every policy in $\mathcal{L}^{\mathcal{D}_\tau}$. Nevertheless, these exact probabilities, which may be useful for further theoretical investigations, will be used in numerical studies in Section \ref{sec:experiment} as a gold standard for benchmarking.

\subsection{Model unidentifiability when improper policies exist}\label{sec:unidentifiability_improper}
We now articulate some properties of the likelihood and posterior distribution of $\theta$ when either $K_\epsilon$ is a Gaussian kernel or the reward noise is Gaussian with fixed variance. The results are presented for a tabular parametrisation of $Q^*$ as defined in Definition \ref{def:tabular}.

The following deterministic MDP will help illustrate some key points in the discussion to follow.
\begin{example}
    Consider the 2-state deterministic MDP shown in Figure \ref{fig:2dmdp}, where $\mathcal{S}=\{s^1,s^2\}$, $\mathcal{A}_{s^1}=\{a^1,a^2\}$, and $s^2$ is the goal state with absorbing action $a^g$. At $s^1$, taking action $a^1$ leads back to $s^1$ with reward $r^1$, while taking action $a^2$ leads to $s^2$ and yields reward $r^2$. Let $r^1=r^2=-1$, which  implies $Q^*(s^1,a^1)=-2$, $Q^*(s^1,a^2)=-1$ and $Q^*(s^2,a^g)=0$. A tabular model for $Q^*$ therefore requires two scalar parameters, $\theta=(\theta_1,\theta_2)^\top$, where $\theta_1^\ast = Q^*(s^1,a^1)$ and  $\theta_2^\ast = Q^*(s^1,a^2)$. Let $\mathcal{D}_1 = \{(s^1,a^1,-1)\}$ denote the partial data set,  and $\mathcal{D}_2 = \{(s^1,a^1,-1),(s^1,a^2,-1)\}$ the {\it complete} data set---every possible state-action-reward triplet is in $\mathcal{D}_2 $.
    \label{ex:2dmdp}
\end{example}

\begin{figure}[ht!]
\captionsetup{justification=raggedright}  
\centering
\BeginAccSupp{ActualText={A diagram visualising the state transitions and the associated rewards for each state-action pair of the 2-state MDP example.}}
\begin{tikzpicture}[
    >={Stealth[round]}, 
    state/.style={draw, circle, minimum size=0.5cm,font=\small}, 
    edge label/.style={midway, font=\small}, 
]

\node[state] (s1) at (0, 0) {$s^1$};
\node[state] (s2) at (2, 0) {$s^2$};

\draw[->] (s1) to[bend left] node[edge label, above] {$a^2,r^2$} (s2);
\draw[->] (s1) .. controls (-1.5, 1) and (-1.5, -1) .. node[edge label, left] {$a^1,r^1$} (s1);
\draw[->] (s2) .. controls (3.5, 1) and (3.5, -1) .. node[edge label, right] {$a^g,0$} (s2);

\end{tikzpicture}
\EndAccSupp{}
    \caption{A 2-state MDP with goal-state $s^g=s^2$, and a recurrent non-goal state $s^1$.}
    \label{fig:2dmdp}
\end{figure}
The model can be unidentifiable, under the likelihood, for an MDP that admits improper policies, even with a complete data set. Specifically, Lemma \ref{lem:loopthm} shows that for such MDPs there exists a subset of $\Theta$ which corresponds to improper policies, that is unbounded if $\Theta$ is unbounded, and the likelihood remains constant along half-lines in this subset.

A state $s^r \in \mathcal{S}$ is a recurrent non-goal state under a deterministic policy $\mu:\mathcal{S} \rightarrow \mathcal{A}$ if $s^r \neq s^g$ and $p^\mu(S_t=s^r \text{ for some } t \in \mathbb{Z}_{\geq 1}|S_0=s^r)=1$ . The transition matrix of improper policies creates recurrent non-goal states. Let $u \in [0,1]^{d_\Theta}$ be the vector whose elements $u_{\nu(s,a)}$ are equal to the maximum probability of ever reaching $s^r$ from all state-action pairs $(s,a)$, where the maximum is over non-stationary deterministic policies $\tilde{\mu}=( \mu_t )_{t\geq 0}$, that is,  
\begin{equation}\label{eqn:maxprob_u}
u_i=\max_{\tilde{\mu}} p^{\tilde{\mu}}(S_t=s^r \text{ for some } t \in \mathbb{Z}_{\geq 1}|(S_0,A_0) = \nu^{-1}(i)).
\end{equation}

\begin{lemma}
Let $\mathcal{M}$ be an MDP that satisfies Assumption \ref{ass:boeunique1}, and assume $Q_\theta$  is tabular as in Definition \ref{def:tabular}.
Furthermore, let $s^r$ denote a recurrent non-goal state of some improper deterministic policy, and let $\phi:\mathcal{S} \rightarrow \mathcal{A}$ be the maximising actions of $u$, defined in Equation \ref{eqn:maxprob_u}, that is $\phi(s) \in \argmax_{a \in \mathcal{A}_{s}} u_{\nu(s,a)}$ for any $s \in \mathcal{S}$. 

Then, for any $\theta \in \mathcal{O}^\phi  := \{\theta \in \Theta\mid \theta_{\nu(s,\phi(s))} \geq \theta_{\nu(s,a)},\,s \in \mathcal{S}, \,  a\in\mathcal{A}_s\}$, $$\hat{L}_\epsilon(\theta|\mathcal{D}_\tau) = \hat{L}_\epsilon(\theta+cu|\mathcal{D}_\tau),\quad \textrm{for all } \epsilon>0, c > 0 \; \textrm{and data set } \mathcal{D}_\tau,$$
where $\hat{L}_\epsilon$ is defined in Equation \ref{eqn:abcpos} with a Gaussian kernel $K_\epsilon$.
\label{lem:loopthm}
\end{lemma}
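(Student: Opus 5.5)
The plan is to show that the shift $\theta\mapsto\theta+cu$ leaves every constraint function $g_{s,a}$ unchanged on $\mathcal{O}^\phi$. Since $\hat{L}_\epsilon(\theta|\mathcal{D}_\tau)$ depends on $\theta$ only through the values $g_{s,a}(\theta)$ for $(s,a)\in\mathcal{D}^{\mathcal{S},\mathcal{A}}_\tau$, this gives the claim for every $\epsilon>0$ and every data set. In fact the argument never uses the Gaussian form of $K_\epsilon$.

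\textbf{Step 1: a Bellman-type identity for $u$.} Extend $u$ to the goal pair by $u_{d_\Theta+1}:=0$, since $s^r$ cannot be reached from the absorbing state $s^g$. I would first prove, for every non-goal $(s,a)$,
\begin{equation*}
u_{\nu(s,a)}=\sum_{s'\in\mathcal{S}}p^S(s'|s,a)\Big[\mathbbm{1}(s'=s^r)+\mathbbm{1}(s'\neq s^r)\max_{a'\in\mathcal{A}_{s'}}u_{\nu(s',a')}\Big].
\end{equation*}
The argument is a first-step decomposition of the probability of ever hitting $s^r$ at a time $t\ge 1$. After the first transition to $s'$, either $s'=s^r$, or the remaining problem is to reach $s^r$ starting from $s'$ with a free first action.

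Because we optimise over non-stationary deterministic policies, the continuation policy can be chosen separately for each $s'$. Finiteness of $\mathcal{S}$ and $\mathcal{A}$ then lets the maximum pass inside the sum. This gives "$\le$" directly and "$\ge$" by splicing together the maximising continuations.

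Next I would use that $s^r$ is recurrent under some improper deterministic policy $\mu$. This gives $u_{\nu(s^r,\mu(s^r))}=1$, so $\max_{a'}u_{\nu(s^r,a')}=1$. Hence the bracketed term equals $\max_{a'}u_{\nu(s',a')}$ for every $s'$, and the identity simplifies to
\begin{equation*}
u_{\nu(s,a)}=\sum_{s'}p^S(s'|s,a)\max_{a'}u_{\nu(s',a')}=\sum_{s'}p^S(s'|s,a)\,u_{\nu(s',\phi(s'))}.
\end{equation*}
The second equality uses the definition of $\phi$.

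\textbf{Step 2: the shift is absorbed.} Fix $\theta\in\mathcal{O}^\phi$ and $c>0$. At each $s'$, both $\theta$ and $u$ attain their maximum over $a'$ at the common action $\phi(s')$. Therefore $\theta+cu$ also does, and the maximum is additive there:
\begin{equation*}
\max_{a'}(\theta+cu)_{\nu(s',a')}=\theta_{\nu(s',\phi(s'))}+c\,u_{\nu(s',\phi(s'))}.
\end{equation*}
At the goal state this is trivial, since both entries are zero. Substituting into Equation \ref{eqn:def_of_g} with the tabular form of Definition \ref{def:tabular} gives
\begin{equation*}
g_{s,a}(\theta+cu)=g_{s,a}(\theta)+c\Big(u_{\nu(s,a)}-\sum_{s'}p^S(s'|s,a)\,u_{\nu(s',\phi(s'))}\Big)=g_{s,a}(\theta),
\end{equation*}
where the last equality is Step 1. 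Consequently every factor $K_\epsilon(g_{s,a}(\cdot),\bar r_{s,a})$ in Equation \ref{eqn:abcpos} is invariant, and so is $\hat{L}_\epsilon$.

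\textbf{Main obstacle.} The delicate part is Step 1: the dynamic-programming identity for the maximal reachability probability over non-stationary policies, including attainment of the maximum. The subtle case is $s'=s^r$ inside the sum. There one must check that "reaching $s^r$ upon arrival" (probability $1$) coincides with $\max_{a'}u_{\nu(s^r,a')}$, and this is exactly where recurrence of $s^r$ under the improper policy is used.
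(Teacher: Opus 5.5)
Your proof is correct and follows essentially the same route as the paper. A first-step decomposition yields $u_{\nu(s,a)}=\sum_{s'}p^S(s'|s,a)\max_{a'}u_{\nu(s',a')}$ (using $\max_{a'}u_{\nu(s^r,a')}=1$ from recurrence), and additivity of the maximum under the common maximiser $\phi$ then makes each $g_{s,a}$ invariant under $\theta\mapsto\theta+cu$; the only difference is that the paper proves the additivity on a slightly larger set (common maximiser required only at states from which $s^r$ is reachable, with $u\equiv 0$ elsewhere), whereas you work directly with $\mathcal{O}^\phi$, which suffices for the statement.
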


\begin{proof}
    See Appendix \ref{apd:nondecaylkhproof}.
\end{proof}
The invariance of $\hat{L}_\epsilon$ for all data sets emphasises that the unidentifiability of the approximate likelihood persists even when $\mathcal{D}_\tau$ includes a visit to every state-action pair.

\begin{remark}
Furthermore, similar to Remark \ref{rmk:psrl_extension}, the result exactly as stated also applies when rewards are observed with additive Gaussian noise and known variance. In which case, $\mathcal{D}_\tau$ should contain all the observed data, including repeated state-action pairs with their noisy rewards. An analogous extension will also hold in Lemma \ref{lem:propernou} and Theorem \ref{thm:unboundedlh} to follow.
\end{remark}

The policy $\phi$, which was defined in Lemma \ref{lem:loopthm} using $u$ in Equation \ref{eqn:maxprob_u}, is improper. In contrast, for some deterministic stationary proper policy $\mu$, let $\mathcal{O}^\mu := \{\theta \in \Theta\mid \theta_{\nu(s,\mu(s))} \geq \theta_{\nu(s,a)}, s \in \mathcal{S},\, a \in \mathcal{A}_s\}$. Define a \textit{complete} data set to be one that contains at least one instance of each (non-goal) state-action pair, along with their expected reward: for example, $\mathcal{D}^{\mathrm{full}}=\{(s,a,\bar{r}_{s,a}) \mid s \in \mathcal{S} \setminus \{s^g\}, a \in \mathcal{A}_s\}$. The following result verifies that the likelihood of a deterministic stationary proper policy is not constant along half-lines in the sense of Lemma \ref{lem:loopthm}.

\Needspace{7\baselineskip} 
\begin{lemma}\label{lem:propernou} 
Let $\mathcal{M}$ be an MDP that satisfies Assumption \ref{ass:boeunique1}, and assume $Q_\theta$  is tabular as in Definition \ref{def:tabular}.

\begin{sloppypar}
Let $\theta \in \Theta$ be such that all of its corresponding deterministic policies, $\mu(s)\in\argmax_{a\in \mathcal{A}_s} Q_\theta(s,a)$, $s\in \mathcal{S}$, are proper. Let $\epsilon>0$ and $ u \in \mathbb{R}^{d_\Theta}$. Then,
\[\hat{L}_\epsilon(\theta+cu| \mathcal{D}^{\mathrm{full}}) = \hat{L}_\epsilon(\theta| \mathcal{D}^{\mathrm{full}}), \mathrm{~for~all~} c > 0 
\quad \implies \quad u=0,\]
where $\hat{L}_\epsilon$ is defined with the Gaussian kernel $K_\epsilon$.
\end{sloppypar}
\end{lemma}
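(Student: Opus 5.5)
The plan is to exploit the fact that, with the Gaussian kernel and the complete data set $\mathcal{D}^{\mathrm{full}}$, the approximate likelihood is
\[
\hat{L}_\epsilon(\theta|\mathcal{D}^{\mathrm{full}}) \propto \exp\Big(-\tfrac{1}{2\epsilon^2}\sum_{(s,a)} \big(g_{s,a}(\theta)-\bar r_{s,a}\big)^2\Big),
\]
where the sum runs over all non-goal state-action pairs. Equality of the likelihoods for all $c>0$ is therefore equivalent to the function
\[
f(c):=\sum_{(s,a)}\big(g_{s,a}(\theta+cu)-\bar r_{s,a}\big)^2
\]
being constant on $(0,\infty)$. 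It is enough to use this constancy on a small interval $(0,\delta)$, where $g$ is affine in $c$.

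\textbf{Step 1 (local linearisation).} Let $A_s(\theta)=\argmax_{a\in\mathcal{A}_s}\theta_{\nu(s,a)}$. Since there are finitely many states and actions, there is a $\delta>0$ such that for every $c\in(0,\delta)$ and every $s$ the maximiser set of $\theta+cu$ at $s$ is contained in $A_s(\theta)$. More precisely, $\max_a(\theta+cu)_{\nu(s,a)}=\max_{a\in A_s(\theta)}\theta_{\nu(s,a)}+c\max_{a\in A_s(\theta)}u_{\nu(s,a)}$. Choose a deterministic $\mu$ with $\mu(s)\in\argmax_{a\in A_s(\theta)}u_{\nu(s,a)}$ for each $s$. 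Then $\mu$ is one of the greedy policies of $\theta$, so it is proper by hypothesis. Using Definition \ref{def:tabular} and \eqref{eqn:def_of_g}, for $c\in(0,\delta)$,
\[
g_{s,a}(\theta+cu)=g_{s,a}(\theta)+c\,b_{s,a},\qquad b_{s,a}:=u_{\nu(s,a)}-\sum_{s'}p^S(s'|s,a)\,u_{\nu(s',\mu(s'))},
\]
with the convention $u_{d_\Theta+1}=0$ at the goal. In matrix form $b=(I-P^\mu)u$, where $P^\mu$ is the state-action transition matrix restricted to non-goal pairs.

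\textbf{Step 2 (constancy forces $b=0$).} On $(0,\delta)$, $f(c)=\sum(g_{s,a}(\theta)-\bar r_{s,a})^2+2c\sum(g_{s,a}(\theta)-\bar r_{s,a})b_{s,a}+c^2\|b\|^2$ is a polynomial. If it is constant there, its $c^2$ coefficient vanishes, so $\|b\|^2=0$ and hence $u=P^\mu u$. Completeness of the data set is exactly what makes every coordinate of $b$ appear in $f$.

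\textbf{Step 3 (properness forces $u=0$).} Iterating gives $u=(P^\mu)^k u$ for all $k$. The entry $(P^\mu)^k_{(s,a),(s',a')}$ is the probability that the chain started at $(s,a)$, and then following $\mu$, is at the non-goal pair $(s',a')$ after $k$ steps. After the first transition the chain follows the proper stationary policy $\mu$, so it is absorbed at $s^g$ with probability one from every start. On a finite space this gives $(P^\mu)^k\to0$: the sub-stochastic matrix has spectral radius less than $1$. Hence $u=\lim_k(P^\mu)^k u=0$.

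The main obstacle I expect is Step 1. I need to justify carefully that, for small $c$, the greedy action at $\theta+cu$ selected via $u$ among the ties of $\theta$ gives a single affine expression valid simultaneously at every state. I also need the chosen $\mu$ to lie among the greedy policies of $\theta$, since that is the only set the hypothesis guarantees to be proper. This is where the assumption that \emph{all} greedy policies of $\theta$ are proper is used, rather than merely one of them.
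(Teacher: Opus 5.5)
Your proposal is correct and follows essentially the same route as the paper: the greedy policy $\mu$ is chosen the same way (greedy for $\theta$, with ties broken by maximising $u$), the same small-$c$ linearisation gives $u_{\nu(s,a)}=\sum_{s'}p^S(s'|s,a)\,u_{\nu(s',\mu(s'))}$, and properness of $\mu$ then forces $u=0$ (the paper iterates a state-level transition matrix and then extends to all actions, whereas you iterate the state-action matrix directly). Your Step 2 is more careful than the paper, which asserts the per-pair residual equality directly from the likelihood invariance, whereas equal likelihoods only give equal sums of squares; your vanishing $c^2$-coefficient argument is what actually justifies that step.
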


\begin{proof}
    See Appendix \ref{apd:lem:propernou}.
\end{proof}

Finally, the result below frames the impact of the invariance of the likelihood for improper policies by contrasting the total mass the likelihood assigns to the sets $\mathcal{O}^\phi$ and $\mathcal{O}^\mu$.

\begin{theorem}\label{thm:unboundedlh}
Under the assumptions of Lemma \ref{lem:loopthm},
$$(i)\,\,\int_{\mathcal{O}^\phi} \hat{L}_\epsilon(\theta|\mathcal{D}_\tau) \mathrm{d} \theta = \infty  \quad \text{for any data set $\mathcal{D}_\tau$};\qquad (ii)\,\,\int_{\mathcal{O}^\mu} \hat{L}_\epsilon(\theta| \mathcal{D}^{\mathrm{full}}) \mathrm{d} \theta < \infty.$$
\end{theorem}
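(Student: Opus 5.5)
Part (i) follows from Lemma \ref{lem:loopthm}, and part (ii) from a strict decay estimate near the boundary of a polyhedral cone.

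\textbf{Part (i).} Lemma \ref{lem:loopthm} gives $\hat{L}_\epsilon(\theta+cu|\mathcal{D}_\tau)=\hat{L}_\epsilon(\theta|\mathcal{D}_\tau)$ for all $\theta\in\mathcal{O}^\phi$ and $c>0$.

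First we check that $\mathcal{O}^\phi$ is invariant under translation by $cu$. Since $\phi(s)$ maximises $u_{\nu(s,\cdot)}$, adding $cu$ with $c>0$ preserves the inequalities $\theta_{\nu(s,\phi(s))}\ge\theta_{\nu(s,a)}$. Also $u\neq 0$: the state $s^r$ is recurrent under some improper policy, so $u_{\nu(s^r,\mu(s^r))}=1$ for that policy.

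Next we find a set of positive Lebesgue measure on which $\hat{L}_\epsilon$ is bounded below. $\hat{L}_\epsilon$ is a product of Gaussian kernels evaluated at continuous functions $g_{s,a}$, so it is continuous and strictly positive. The cone $\mathcal{O}^\phi$ is polyhedral with nonempty interior; for example, take $\theta_{\nu(s,\phi(s))}=1$ and all other coordinates $0$. Choose a small ball $B\subset\mathcal{O}^\phi$ on which $\hat{L}_\epsilon\ge\kappa>0$.

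Then the tube $T=\{\theta+cu:\theta\in B,\,c\ge 0\}$ lies in $\mathcal{O}^\phi$, has infinite Lebesgue measure, and satisfies $\hat{L}_\epsilon\ge\kappa$ on $T$. To see the measure is infinite, write $T$ in coordinates along $u$ and its orthogonal complement: the cross-section is the projection of $B$, which has positive $(d_\Theta-1)$-dimensional measure, and the extent along $u$ is unbounded. Hence $\int_{\mathcal{O}^\phi}\hat{L}_\epsilon=\infty$.

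\textbf{Part (ii).} On $\mathcal{O}^\mu$ the max in $g_{s,a}$ is attained at $\mu$. Here we take $\mathcal{S}^{\mathcal{D}^{\mathrm{full}}}$ to cover all states reachable from observed pairs, which holds for complete data. So $g(\theta)=(I-P^\mu)\theta$ is linear, where $P^\mu\in\mathbb{R}^{d_\Theta\times d_\Theta}$ has entries $P^\mu_{\nu(s,a),\nu(s',\mu(s'))}=p^S(s'|s,a)$ over non-goal $s'$. The goal coordinate is dropped because $\theta_{d_\Theta+1}=0$.

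With the Gaussian kernel and complete data, $\hat{L}_\epsilon(\theta|\mathcal{D}^{\mathrm{full}})\propto\exp\bigl(-\|(I-P^\mu)\theta-\bar{r}\|^2/(2\epsilon^2)\bigr)$, up to repeated factors that only increase the exponent. It therefore suffices to show $I-P^\mu$ is invertible. Then the integrand is a nondegenerate Gaussian density in $\theta$, up to a constant, and its integral over all of $\mathbb{R}^{d_\Theta}$, hence over $\mathcal{O}^\mu$, is finite.

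\textbf{Main obstacle: invertibility of $I-P^\mu$.} Properness of $\mu$ means the substochastic matrix $P^\mu$ on the state-action pairs satisfies $(P^\mu)^k\to 0$. This follows because the goal is reached with probability tending to one from every pair $(s,a)$: one step under $a$, then $\mu$ thereafter, and $\mu$ is proper from every state. Hence the spectral radius of $P^\mu$ is less than $1$, and $I-P^\mu$ is invertible with inverse $\sum_k (P^\mu)^k$. Converting "absorption with probability one from every start" into $\rho(P^\mu)<1$ is standard for finite chains, since the tail probabilities decay geometrically.

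Equivalently, this is the statement that a nonzero $u$ with $(I-P^\mu)u=0$ would contradict Lemma \ref{lem:propernou}. Such a $u$ would make the likelihood constant along $\theta+cu$ for as long as the segment stays in the region where the max is attained at $\mu$. So the direct spectral argument is the cleaner route.
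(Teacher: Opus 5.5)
Your proposal is correct and follows essentially the paper's proof. For (i), the paper places countably many disjoint translates of a small hypercube along $u$ inside $\mathcal{O}^\phi$, each with the same positive likelihood lower bound by Lemma \ref{lem:loopthm}; this is a discretised version of your swept tube. For (ii), the paper writes $\hat{L}_\epsilon$ on $\mathcal{O}^\mu$ as a Gaussian with precision $A=\sum_j b_j b_j^\top$ and gets positive definiteness from Lemma \ref{lem:bellmanfirstpart}, i.e.\ the trivial kernel of $I-P^\mu$ for proper $\mu$, which is the same as your invertibility argument via $(P^\mu)^k\to 0$.
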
 

\begin{proof}
    See Appendix \ref{apd:unboundedlhproof}.
\end{proof}

Theorem \ref{thm:unboundedlh} implies that the set of improper policies can dominate the posterior, even with a complete data set, if the prior assigns mass to $\mathcal{O}^\phi$. In contrast, for a proper policy, the total mass of the likelihood is finite when every state-action-reward triplet has been observed. The posterior dominance of improper policies is more pronounced when a prior with a large variance is chosen. Such priors may be favoured when there is a lack of knowledge of the scale of $Q^*$, or to introduce optimism to facilitate exploration \citep{randomisedvaluefunction,provablyefficientpossampling}. As a consequence of Lemma \ref{lem:loopthm}, any prior mass assigned to $\mathcal{O}^\phi$ causes the posterior density to become elongated along half-lines in $\mathcal{O}^\phi$, and more so when the prior's variance is larger.
Example \ref{ex:2dmdp} is a specific example of an MDP covered by Lemma \ref{lem:loopthm} and Theorem \ref{thm:unboundedlh}. State $s^0$ of Example \ref{ex:2dmdp} is the non-goal recurrent state in Lemma \ref{lem:loopthm}, vector $u$ is $(1,0)^\top$ and $\mathcal{O}^\phi=\{\theta \in \mathbb{R}^2\mid\theta_1 \geq \theta_2\}$ for improper policy $\phi(s^1)=a^1$. While in Theorem \ref{thm:unboundedlh}, $\mathcal{O}^\mu = \{\theta \in \mathbb{R}^2 \mid \theta_1<\theta_2 \}$ for proper policy $\mu(s^1)=a^2$. As shown in Figure \ref{fig:2dcontour} (left), the posterior density remains stretched towards large positive $\theta_1$, instead of contracting uniformly towards $Q^*$. Note that since $Q_\theta(s^2,a^g):=0$, likelihood translation invariance does not hold in $\mathcal{O}^\mu$.

\begin{figure}[t!]
    \centering
    \BeginAccSupp{ActualText={Two subplots comparing data settings: the left shows the elongation of the probability density along one dimension under a complete data set setting, while the right shows the posterior density collapsed into a diagonal line in the incomplete data set setting with a small tolerance.}}
    \includegraphics[width=1\linewidth]{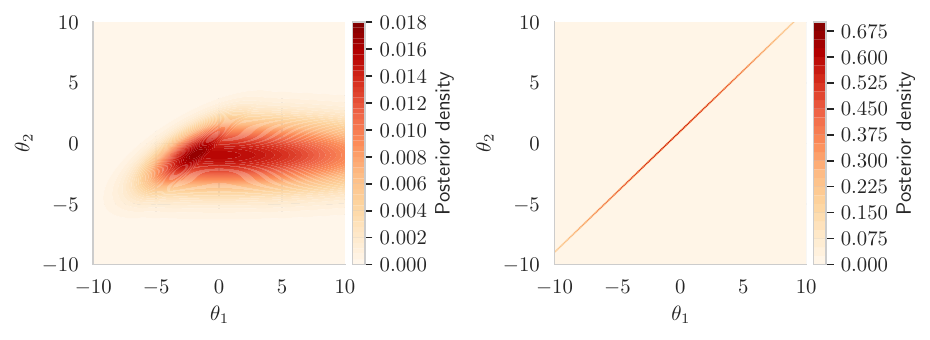}
    \EndAccSupp{}
    \caption{Contour plots of the posterior of Example \ref{ex:2dmdp}. Left: For the complete data set $\mathcal{D}_2$, zero-mean Gaussian prior with $\sigma=10$, and tolerance $\epsilon=2$. Right: For the partial data set $\mathcal{D}_1$, a zero-mean Gaussian prior with $\sigma = 10$, and tolerance $\epsilon = 0.01$.}
    \label{fig:2dcontour}
\end{figure}

To summarise, a Gaussian prior for the tabular parametrisation of $Q^*$ fails to encode the prior knowledge that the policy which corresponds to $Q^*$ must be proper. The Gaussian likelihood, which is a popular choice (see e.g., \citet{dropout,randomisedvaluefunction}), assigns infinite total mass to improper policies (Theorem \ref{thm:unboundedlh}). Although the resulting posterior is well-defined, the posterior's mass is shifted away from $Q^*$ towards parameters associated with recurrent non-goal states (Figure \ref{fig:2dcontour}). This may result in a significant bias in the estimation of the optimal policy probabilities in Equation \ref{eq:ts_mdp} under the true model. However, as more state-action-reward data are gathered for the noisy rewards case, or as $\epsilon \rightarrow 0$ for noiseless rewards, the likelihood of $\theta$ values that do not exactly satisfy BOEs becomes small, and this pathological effect diminishes; see Figure \ref{fig:illustrative_plot}.

\begin{figure}[t!]
    \centering
    \BeginAccSupp{ActualText={Line plots showing the contraction of the marginal posterior of each dimension towards its true value as the tolerance decreases in the complete data set setting.}}
    \includegraphics[width=1\linewidth]{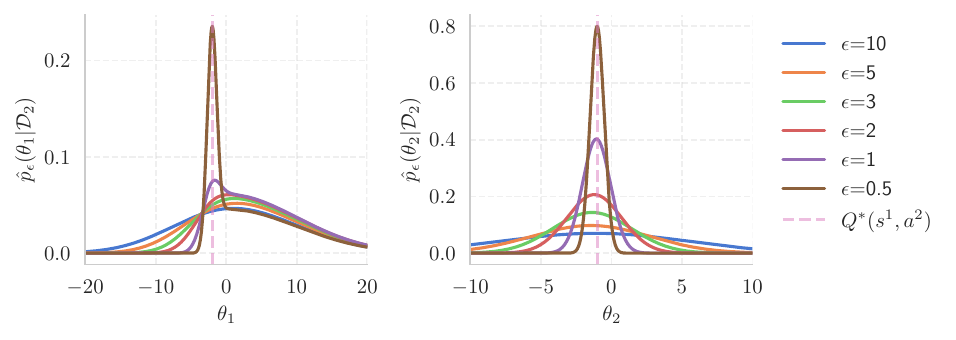}
    \EndAccSupp{}
    \caption{The posterior marginals for  $\theta_1$ and $\theta_2$ of Example \ref{ex:2dmdp} with the complete data set $\mathcal{D}_2$, zero-mean Gaussian prior with $\sigma=10$, and various tolerances.}
    \label{fig:illustrative_plot}
\end{figure}

A potential mitigation is to leverage knowledge of $p^S$ and $p^R$ to elicit a prior that excludes, or penalises, improper policies. The development of such priors is left for future work.

For other choices of the prior (beyond a Gaussian), the infinite integral of the likelihood for improper policies in Theorem \ref{thm:unboundedlh} may not result in a well-defined posterior. As a mitigation, a discounted reward criterion can be used. A discounted MDP, with discount factor $\gamma \in [0,1)$, can be reformulated as an equivalent SSP by introducing a {\it fictitious} zero-reward goal state so that each state transition has a $1-\gamma$ probability of moving to this goal state; conversely, with probability $\gamma$, it moves according to the original state transition dynamics \citep[Sec. 4.3]{bertsekas2019}. Consequently, discounting yields an approximation of the original problem formulation in which all policies are proper. In this new SSP, $\int_{\mathcal{O}^\mu} \hat{L}_\epsilon(\theta| \mathcal{D}^{\mathrm{full}}) \mathrm{d} \theta < \infty$ for all deterministic policies $\mu$---thus compromising exactness to lessen posterior `elongation' towards improper policies.

%% file: sections/numerical.tex
\section{Numerical illustrations}\label{sec:experiment}

 In this section, we demonstrate our methodology on a benchmark MDP problem and compare our results with competing Bayesian paradigms.

Algorithm \ref{alg:onlinedegen1} gives the posterior sampling exploration methodology (c.f. Section \ref{sec:possamplingexploration}) that is used in all the numerical studies. At the beginning of each episode, a policy is sampled using either method A or B (which are probabilistically equivalent when implemented exactly). Method A computes the posterior probability mass function of the policies, as defined in Equation \ref{eq:ts_mdp}, and samples from it directly. Method B samples a $\theta$ from its posterior distribution and constructs the corresponding greedy policy. Once sampled, the policy is used to interact with the MDP until either the goal state $s^g$ is reached or the episode {\it termination criterion} is met, which is only satisfied by improper policies. Before a new episode is initiated, the newly collected data is appended to update the posterior.\footnote{The code for these experiments is available at \url{https://github.com/hcw1026/ABRLExact}.}

\begin{algorithm}
\caption{Pseudocode for learning $Q^\ast$ for $I$ episodes.}
Set $t \gets 0$; set $\mathcal{D}_{-1} \gets \emptyset$.\\
    \For{$i \gets 0$ \KwTo $I-1$}{
    Set $t_i \gets t-1$.\\
    (A) Sample $\mu_i \sim \mathbb{P}(\cdot \text{ is optimal} |\mathcal{D}_{t_i})$ in Equation \ref{eq:ts_mdp}; or\\
    (B) Sample $\theta \sim p^\Theta(\cdot|\mathcal{D}_{t_i})$ and set $\mu_i(s) \in \argmax_{a \in \mathcal{A}_s} Q_\theta(s,a)$ for all $s \in \mathcal{S}$. \\
    Sample (initial state) $s_t \sim \rho$.\\
    \While{$s_t \neq s^g$ and termination criterion is not reached}{
    Select $a_t \gets \mu_i(s_t)$.\\
    Observe $r_t \sim p^R(\cdot|s_t,a_t)$, $s_{t+1} \sim p^S(\cdot|s_t,a_t)$.\\
    Append new data $\mathcal{D}_t \gets \mathcal{D}_{t-1} \cup \{(s_t,a_t,r_t)\}$.\\
    Increment $t \gets t + 1$.\\
    }}
\label{alg:onlinedegen1}
\end{algorithm}

All MDP examples in this section have deterministic state transition dynamics; an example with stochastic transition dynamics is provided in Appendix \ref{apd:sec:numerical}. With deterministic state transitions, an episode can be terminated when the same state is encountered again, which indicates an improper policy is deployed; other, more sophisticated termination heuristics are available, e.g., see \citet{ucrl2regret,psrlinfhorizon,bayesoptiamlpolicy,psrl-ssp}.

Following \citet{tsstren}, Algorithm \ref{alg:onlinedegen1} deploys the sampled policy for multiple time steps, which is a widely adopted strategy. This is because it has been shown to be an effective learning strategy, both theoretically and empirically \citep{osband2013moreefficient,bootstraposband,randomisedpriorwithensembleosband,randomisedvaluefunction,psrlinfhorizon}, and also more effective than refreshing the policy at every time step. Also, as an added benefit, it is potentially computationally more efficient to delay updating the posterior until more informative data points have been collected. Such data points, for example, are those generated by episodes that end in the goal state.

\begin{figure}[t!]
    \centering
    \BeginAccSupp{ActualText={Diagrams depicting the state transition dynamics of the three variants of deep sea environment.}}
    \begin{subfigure}[b]{0.32\textwidth}
        \centering
        \begin{minipage}[c][4cm][c]{\linewidth}
            \centering
            \includegraphics[width=\linewidth, height=4cm, keepaspectratio]{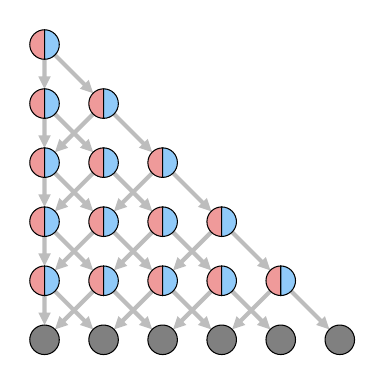}
        \end{minipage}
        \caption{\texttt{Deep sea}}
        \label{fig:demo_deepsea}
    \end{subfigure}
    \hfill
    \begin{subfigure}[b]{0.32\textwidth}
        \centering
        \begin{minipage}[c][4cm][c]{\linewidth}
            \centering
            \includegraphics[width=\linewidth, height=4cm, keepaspectratio]{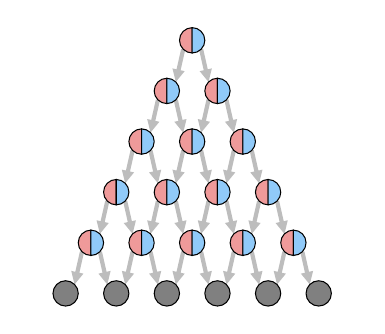}
        \end{minipage}
        \caption{\texttt{Deep sea pyramid}}
        \label{fig:demo_deepseapyramid}
    \end{subfigure}
    \hfill
    \begin{subfigure}[b]{0.32\textwidth}
        \centering
        \begin{minipage}[c][4cm][c]{\linewidth}
            \centering
            \includegraphics[width=\linewidth, height=4cm, keepaspectratio]{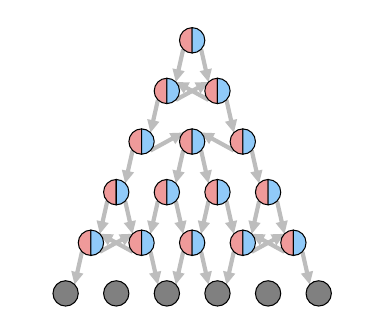}
        \end{minipage}
        \caption{\texttt{Deep sea swirl}}
        \label{fig:demo_deepseaswirl}
    \end{subfigure}
    \EndAccSupp{}
    \caption{ The {\it deep sea} problem (left) \citep{randomisedvaluefunction, bsuite} with depth 5, and two variants (middle and right). Each node is a state, which is also displayed as a pie chart to show the marginal posterior probability that the left action (pink) and the right action (blue) are optimal. Arrows show the state transition stemming from the actions. The set of grey states on the bottom row represents the goal states, which collectively form one single goal state. An episode terminates when the goal state is reached.}
    \label{fig:deepsea_init_demo}
\end{figure}

All the numerical studies use the relaxation of the deterministic rewards via the Bellman residual (BR) likelihood $\hat{L}_{\epsilon}$ in Equation \ref{eqn:abcpos}, with a Gaussian kernel and various tolerance values $\epsilon$. We use a tabular $Q_\theta$ (Definition \ref{def:tabular}) and the prior distribution $p^\Theta$ is taken as $\mathcal{N}(0,\sigma^2 I)$, with $\sigma=10$ unless otherwise specified. In this setting, policies can indeed be sampled via method A as discussed above by applying Proposition \ref{prop:psrl_new} to Equation \ref{eq:ts_mdp} to compute the probability mass function over the policies without Monte Carlo, denoted {\it Bayes-BR} in the text and just {\it BR} in the graphs; see Appendix \ref{apd:sec:numerical} for a more detailed explanation. To explore the impact of numerical approximations, we also compute the posterior over $\theta$ with Hamiltonian Monte Carlo \citep{hmcneal} and sample the policies using method B, referred to as {\it Bayes-BR-HMC} in the text and HMC in the graphs.

 We compare our posterior with various temporal difference (TD) alternatives listed below, which are also sampled using Method A.

\begin{enumerate}[label=(\alph*)]
    \item \textit{Bayesian temporal difference} (\textit{Bayes-TD}), denoted as TD in the graphs, targets the sequence of episode-specific posteriors
$$p^\Theta(\theta) \prod_{(s,a,\bar{r}_{s,a}) \in \mathcal{D}_{t_i}} \mathcal{N}\Big(\bar{r}_{s,a};\theta_{\nu(s,a)} - \sum_{s^\prime \in \mathcal{S}} p^S(s^\prime|s,a) \max_{a^\prime \in \mathcal{A}_{s^\prime}} \hat{\theta}^{i-1}_{\nu(s^\prime,a^\prime)}, \epsilon^2\Big),$$
where $p^\Theta(\theta)\equiv\mathcal{N}(\theta;0,10^2 I)$, and $\hat{\theta}^{i-1}$ is the expectation of the posterior obtained in episode $i-1$ conditioning on $\mathcal{D}_{t_i}$. This acts as a baseline model for simple Bayesian TD methods, such as \citet{dropout}.
    \item  \textit{Bayesian ensemble-based temporal difference} (\textit{Bayes-TD-En}), denoted as \text{TD-En} in the graphs, aims to more closely mimic ensemble-based methods, such as \citet{bootstraposband,tdsmc}. Specifically, the sequence of episode-specific posteriors is
    $$p^\Theta(\theta)\Bigg[ \frac{1}{N}\sum_{n=1}^N \prod_{(s,a,\bar{r}_{s,a}) \in \mathcal{D}_{t_i}} \mathcal{N}\Big(\bar{r}_{s,a};\theta_{\nu(s,a)} - \sum_{s^\prime \in \mathcal{S}} p^S(s^\prime|s,a) \max_{a^\prime \in \mathcal{A}_{s^\prime}} \hat{\theta}^{i-1, (n)}_{\nu(s^\prime,a^\prime)}, \epsilon^2\Big) \Bigg],$$
    where $\{\hat{\theta}^{i-1, (n)}\}_{n=1}^N$ is a set of samples from the (truncated-Gaussian mixture) posterior from episode $i-1$. 
    \item A variant of \textit{Bayesian temporal difference} targeting the sequence of posteriors
     $$p^\Theta(\theta) \prod_{(s,a,\bar{r}_{s,a}) \in \mathcal{D}_{t_i}} \mathcal{N}\Big(\bar{r}_{s,a};\theta_{\nu(s,a)} - \sum_{s^\prime \in \mathcal{S}} p^S(s^\prime|s,a)  m^{i-1}(s^\prime), \epsilon^2\Big),$$
    where $m^{i-1}(s^\prime)$ denotes the posterior expectation over $\Theta$ of $\max_{a^\prime \in \mathcal{A}_{s^\prime}} \theta_{\nu(s^\prime,a^\prime)}$ computed from episode $i-1$. This variant, called {\it Bayes-TD-Max} and denoted as TD-Max in the graphs, replaces the max term with its expectation instead of taking the expectation inside the max as in (a).
\end{enumerate}

(See Appendix \ref{apd:sec:numerical} for all technical details.)

We evaluate the algorithms on variants of the \texttt{deep sea} environment. \texttt{Deep sea} \citep{randomisedvaluefunction, bsuite}, with depth $\depth \geq 1$, is an MDP whose state space can be depicted as a grid of states arranged in a lower triangular layout with side lengths $\depth$, as illustrated in Figure \ref{fig:demo_deepsea}. Each state has two associated actions, one to move left and one to move right, and both moves also descend one row down. In the deterministic version, the left action moves to the adjacent cell on the left one row down---similarly for the right move---except for states on the edge where the move is straight down. A depth-$\depth$ \texttt{deep sea} instance requires executing a sequence of $\depth$ actions to reach the goal. Every left action yields a positive reward of $0.15/\depth$, while every right action incurs the negative of that amount. Also, at the penultimate row's rightmost state, taking the right action yields the usual reward of $-0.15/\depth$ and an additional reward of $1$. Thus, the optimal policy is to consistently move right until the goal state is reached, receiving a reward of $0.85$ in total.

\begin{figure}[t!]
    \centering
    \BeginAccSupp{ActualText={A sequence of diagrams showing the exploration-exploitation progress in an instance of depth-3 deep sea from episodes 0 to 9. The visualisation shows the posterior contracting around the optimal route and actions, with exploration completed at episode 8.}}
    \includegraphics[width=1\linewidth]{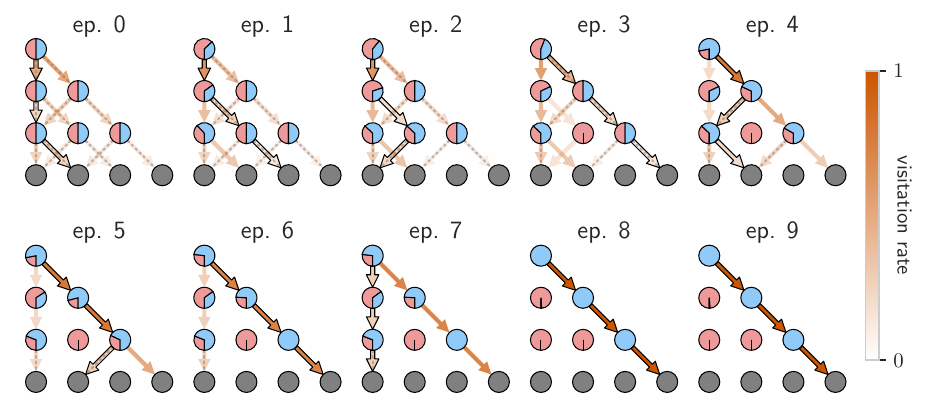}
    \EndAccSupp{}
    \caption{An illustration of Algorithm \ref{alg:onlinedegen1}'s output on depth-$3$ \texttt{deep sea} with \textit{Bayes-BR} with $\epsilon=0.01$. (See Figure \ref{fig:deepsea_init_demo} for an explanation of the node colours.) The state-action trajectory of the episode is shown with the sequence of arrows with borders. The dotted lines indicate the state-action pairs for which there is no data yet. For each displayed episode, $10000$ Monte Carlo samples from the episode's posterior are generated to illustrate the distribution over optimal policies. This is shown as orange arrows where the transparency encodes the estimated probabilities.}
    \label{fig:deepsea_demo_3}
\end{figure}

Figure \ref{fig:deepsea_demo_3} illustrates how the policy-selection probabilities evolve after each episode with our posterior and method A (denoted BR in the figure). In this instance, it took $8$ episodes before the action-selection probabilities led to consistently taking the right actions to travel along the diagonal edge and receive the maximum cumulative reward.

\begin{figure}[ht!]
    \centering
    \BeginAccSupp{ActualText={Line graphs over 100 episodes comparing BR and HMC for each of the tolerances 0.01, 0.1, 0.2, 0.5, 1 and 5 in the depth-5 deep sea environment. The plot shows the convergence of the posterior probability for the optimal policy as well as the corresponding decay of suboptimal policies.}}
    \includegraphics[width=1\linewidth]{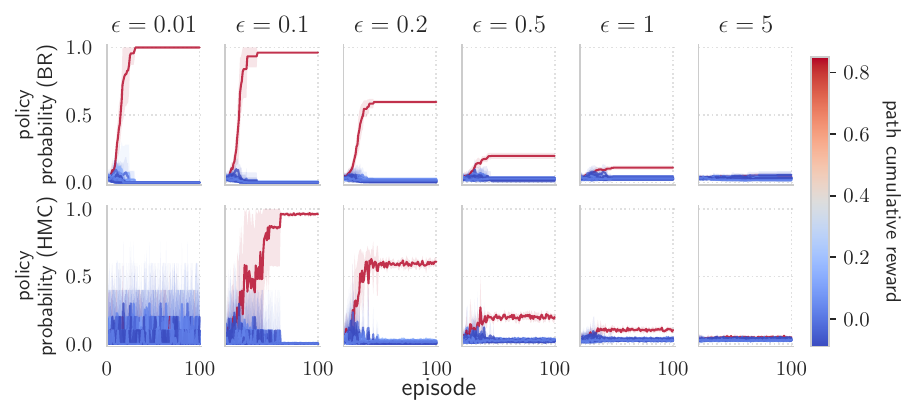}
    \EndAccSupp{}
    \caption{Comparing exact (BR) and Monte Carlo sampling (HMC) from our posterior for the depth-$5$ {\tt deep sea} environment. The evolution of the posterior over policies (Equation \ref{eq:ts_mdp}) with episode number is shown for different tolerance values $\epsilon$, after averaging over $10$ independent runs. The colour shows the cumulative reward achieved by the polices. Shaded areas show one standard deviation computed with the $10$ independent runs.}
    \label{fig:deepsea_policy_cvg}
\end{figure}
Figures \ref{fig:deepsea_policy_cvg} and \ref{fig:deepsea_cumreg_and_acc} reveal several important insights. Firstly, with \textit{Bayes-BR}, a sufficiently small tolerance $\epsilon$ leads to a rapid concentration of the posterior distribution over the optimal policies, thereby revealing them. Unsurprisingly, as the tolerance increases, in this case when $\epsilon \geq 0.1$, suboptimal policies continue to be selected when following the posterior sampling exploration strategy, even after the data set has become complete; see Figure \ref{fig:deepsea_cumreg_and_acc}.

Secondly, we sample our posterior with a tuned `modern' Monte Carlo sampler---$2000$ iterations of a tuned Hamiltonian MCMC algorithm are executed for each episode. Specifically, a fixed computational budget is assigned to each episode by fixing the factors that affect the cost. The initial $1000$ iterations are used to identify hyperparameters that yield an acceptance probability of approximately $0.7$. Furthermore, the step-size of the sampler employed in a given episode is passed down to the subsequent episode and used as an initial estimate. As shown, labelled HMC in the figures, sampling struggles to reliably identify the optimal policy when the tolerance $\epsilon$ is $0.01$, while its convergence remains comparatively slow at $\epsilon=0.1$.
Crucially, this coincides with the same region where the optimal policy has a probability close to $1$ of being selected after the data set has become complete. Furthermore, as illustrated in Figure \ref{fig:deepsea_hmc_acc}, the acceptance probabilities exhibit high variance for the two cases, $\epsilon=0.01$ and $\epsilon=0.1$. For these two cases, while keeping the same computational budget, we attempted to improve the tuning outcome by allowing larger changes in the HMC step-sizes between adjacent episodes, since the best step-size for an episode can differ greatly from that of the next. However, despite our efforts, the resulting plots remain similar; see Appendix \ref{apd:sec:numerical} for technical details. This demonstrates that the relaxation of the manifold  posterior is still challenging to sample using conventional MCMC methods. The development of more sophisticated, potentially hybrid, sampling methodologies is left for future work.
\begin{figure}[ht!]
    \centering
    \BeginAccSupp{ActualText={Cumulative regret curves over 100 episodes in a depth-5 deep sea environment, comparing BR against HMC, TD, TD-En and TD-Max for each of the tolerances 0.01, 0.1, 0.2, 0.5, 1 and 5.}}
    \includegraphics[width=1\linewidth]{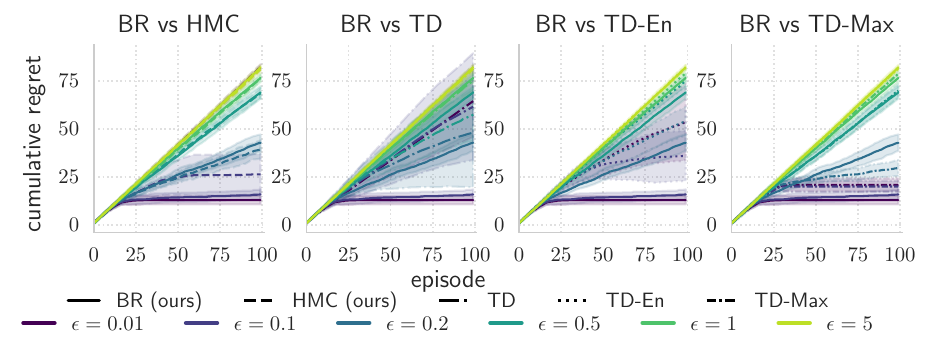}
    \EndAccSupp{}
    \caption{The progress of the regret as it accumulates with each new episode, for the depth-$5$ \texttt{deep sea} problem, for different tolerance values $\epsilon$. Results are averaged over $10$ independent runs. The regret of the deployed policy for an episode is the difference between the total (expected) reward achieved by an optimal policy ($0.85$) and the (empirical) total reward achieved by the deployed policy for that episode. Shaded areas indicate one standard deviation computed with the $10$ independent runs.}
    \label{fig:deepsea_cumreg}
\end{figure}

\begin{figure}[t!]
    \centering
    \BeginAccSupp{ActualText={Two subplots for the depth-5 deep sea environment over 100 episodes: the left tracks the exploration progress of BR, HMC, TD, TD-En and TD-Max for tolerances 0.01, 0.1, 0.2, 0.5, 1 and 5; the right plots the corresponding HMC acceptance probabilities for each tolerance level.}}
    \begin{subfigure}[b]{0.7\textwidth}
        \centering
        \includegraphics[width=\linewidth]{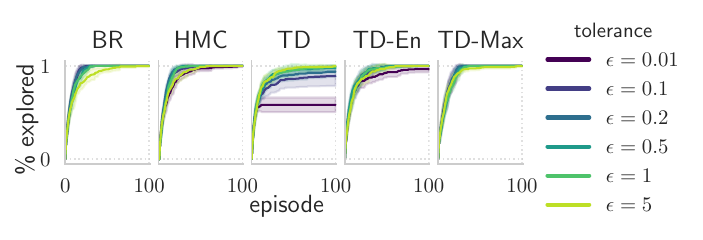}
        \caption{Percentage of the environment's state-action pairs visited at least once.}
        \label{fig:deepsea_expl_prog}
    \end{subfigure}
    \hfill
    \begin{subfigure}[b]{0.29\textwidth}
        \centering
        \includegraphics[width=\linewidth]{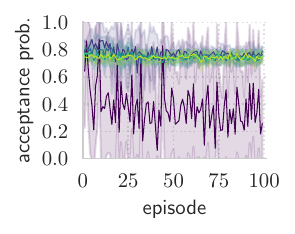}
        \caption{HMC acceptance probability}
        \label{fig:deepsea_hmc_acc}
    \end{subfigure}
    \EndAccSupp{}
    \caption{Results for the depth-$5$ \texttt{deep sea} environment. All results are averaged over $10$ independent runs. Shaded areas show one standard deviation computed using these runs.}
    \label{fig:deepsea_cumreg_and_acc}
\end{figure}

Thirdly, Figure \ref{fig:deepsea_cumreg} shows our posterior (labelled BR specifically) performing better than the \textit{Bayes-TD}-based methods when $\epsilon=0.01$ and $\epsilon=0.1$, while all fail for tolerances greater than or equal to $0.5$, due to their large scale parameter compared to the magnitude of the rewards. Additionally, although \textit{Bayes-TD-Max} has a lower cumulative regret curve at $\epsilon=0.02$, it indicates that \textit{Bayes-BR} and \textit{Bayes-TD}-based  posteriors, which are themselves different, lead to different learning outcomes as the tolerance shifts. We see from Figure \ref{fig:deepsea_expl_prog} that the well-performing variants have fully explored the environment much before episode $100$, indicating that any increase in the cumulative regret (defined in Figure \ref{fig:deepsea_cumreg}) curves near the end of the $100$ episodes cannot be attributed to incomplete data collection. While it is difficult to interpret the impact of using a point estimate $\hat{\theta}^i$ in the likelihood of the TD-based Bayesian methods, it is reassuring that our Bayesian formulation, which can be said to have been derived in a more justified manner, is performing reliably.

\begin{figure}[ht!]
    \centering
    \BeginAccSupp{ActualText={A sequence of diagrams showing the exploration-exploitation progress in an instance of depth-5 deep sea pyramid at episodes 0, 5, 10, 15, 21. The visualisation shows the posterior contracting evenly around each optimal route and actions, with exploration completed at episode 21.}}
    \includegraphics[width=1\linewidth]{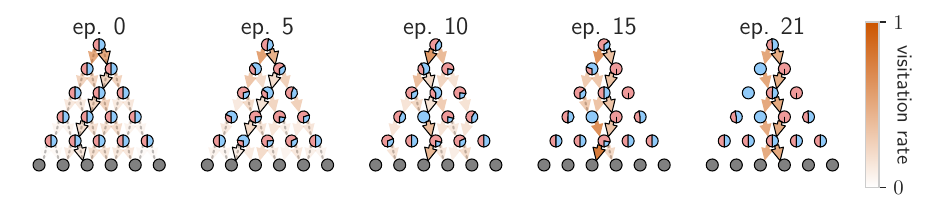}
    \EndAccSupp{}
    \caption{An illustration of Algorithm \ref{alg:onlinedegen1}'s output on depth-$5$ \texttt{deep sea pyramid} with \textit{Bayes-BR} with $\epsilon=0.01$. (See Figure \ref{fig:deepsea_init_demo} for an explanation of the node colours.)}
    \label{fig:deepseapyramid_demo_5}
\end{figure}

\begin{figure}[ht!]
    \centering
    \BeginAccSupp{ActualText={A sequence of diagrams showing the exploration-exploitation progress in an instance of depth-5 deep sea swirl at episodes 0, 10, 15, 20, 36. The visualisation shows the posterior contracting around the optimal route and actions, with exploration completed at episode 36.}}
    \includegraphics[width=1\linewidth]{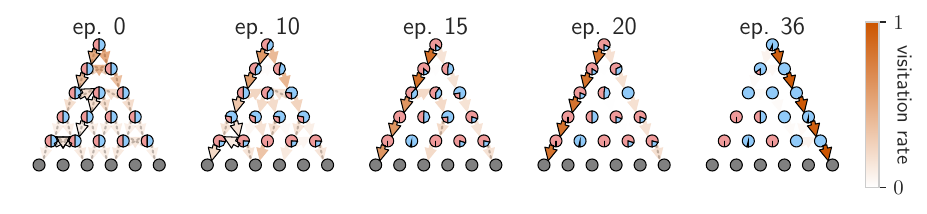}
    \EndAccSupp{}
    \caption{An illustration of Algorithm \ref{alg:onlinedegen1}'s output on depth-$5$ \texttt{deep sea swirl} with \textit{Bayes-BR} with $\epsilon=0.01$. (See Figure \ref{fig:deepsea_init_demo} for an explanation of the node colours.)}
    \label{fig:deepseaswirl_demo_5}
\end{figure}

Next, we construct a variant of the \texttt{deep sea} environment, termed \texttt{deep sea pyramid}. As shown in Figure \ref{fig:demo_deepseapyramid}, the main difference between these environments is their state connectivity. We selected the reward distribution $p^R$ for \texttt{deep sea pyramid}  so that there are multiple optimal policies to check if the methods are able to discover them. The policies that result in transitions that are confined to routes along the central vertical axis are optimal. Specifically, transitioning to and from any states on the central vertical axis yields a reward of $1/\depth$, whereas all other state-action pairs earn zero reward, meaning that the optimal policies receive a cumulative reward of $1$. As demonstrated in the depth-$5$ \texttt{deep sea pyramid} in Figure \ref{fig:deepseapyramid_demo_5}, all optimal policies are recovered (when $\epsilon=0.01$ and $\sigma=10$) and have an almost equal probability of being optimal after $21$ episodes. Thus, this demonstrates that our Bayesian formulation results in a posterior whose support eventually concentrates on the optimal policies with equally distributed mass.

Another variant of \texttt{deep sea}, which has improper policies, called \texttt{deep sea swirl}, is presented. This new environment's state connectivity, as illustrated in Figure \ref{fig:demo_deepseaswirl}, is similar in structure to \texttt{deep sea pyramid}, except that some down arrows have been redirected to connect to neighbouring nodes on the same row instead. By design, improper policies exist, and proper policies take $\depth$ or more actions to reach the goal. The reward structure in this example is the same as that of the \texttt{deep sea}, except that transitioning horizontally via the augmented actions earns a reward of $-0.45/\depth$ ($3$ times taking an ordinary right action). Therefore, the optimal policy remains the one that consistently takes the move-right actions. As illustrated in Figure \ref{fig:deepseaswirl_demo_5}, after $36$ episodes, the optimal class of policies is recovered, even though the posterior landscape is challenging (as described in Section \ref{sec:unidentifiability_improper}) due to the presence of improper policies.

%% file: sections/conclude.tex
\section{Discussion and conclusion}
\label{sec:conclude}

We presented a Bayesian framework to construct posterior beliefs for the optimal action-value function $Q^*$. Compared to other TD-based Bayesian approaches, our framework does not rely on unrealistic modelling assumptions or ad-hoc simplifications of the desired posterior distribution. The benefits include that posterior beliefs can be better relied upon for correctness, as well as the emergence of new insights. In particular, for deterministic rewards, we showed how the updated belief progressively constrains the prior to the manifold on which $Q^*$ lies, and we found this posterior density on the manifold. We then investigated the implications of using a Gaussian relaxation of the likelihood. For a tabular parametrisation of $Q^{\ast}$, we showed how the Gaussian likelihood is translation invariant for improper policies, even when the data set is complete. This leads to improper policies dominating the posterior belief. Thus, the variance of the Gaussian relaxation has to be carefully chosen to balance this ``loss'' against the benefits of simpler inference, since we are no longer computing a manifold posterior.

We confirmed these insights in numerical studies, and also compared the accuracy of our framework to TD-based Bayesian methodologies. Using our posterior to both actuate the MDP and guide further data gathering exhibits higher data-efficiency for low relaxation variance $\epsilon$. We also showed that small $\epsilon$ values are required to obtain a good approximation of the posterior beliefs over the optimal policies.

Our work opens interesting directions for follow-up studies, such as designing priors that exclude improper policies. Even with a relaxed likelihood, our results show that more efficient MCMC samplers are needed. We have not investigated how the inexact evaluation of the expectation over the state transition dynamics in the Gaussian likelihood relaxation further diminishes the accuracy of the posterior. Finally, the implementation ``tricks'' and modelling simplifications in various Bayesian reinforcement learning studies could be assessed against how much they distort the posterior towards improper policies.

%% file: sections/appendix.tex
\section{Proofs}

\subsection{Posterior distribution for deterministic rewards via Hausdorff measure}\label{apd:hausdorff_proof}

\subsubsection{Theorem \ref{thm:hausdorff_maintext}}\label{apd:hausdorffmaintexthmproof}

We begin by defining a regular conditional probability. Let the measurable spaces be $(X,\mathcal{B})$ and $(Y,\mathcal{C})$,
which are assumed to be \emph{separable standard Borel spaces.} In addition, let $\pi:X\rightarrow  Y$
be a measurable mapping.
\begin{definition}\label{def:rcp}\citep{rcp_def_parthasarathy}
Let $P$ be a probability measure on $(X,\mathcal{B})$. Let the
measure $C\in\mathcal{C}\rightarrow Q(C)$ on $(Y,\mathcal{C})$ be
$Q(C)=P(\pi^{-1}(C))$. A \emph{regular conditional probability given
$(\pi,P)$} is a mapping $y\in Y\rightarrow P_{y}(\cdot)$ such
that:
\end{definition}

\begin{enumerate}
\item For each $y\in Y$, $P_{y}$ is a probability measure on $(X,\mathcal{B})$.
\item There exists a ($Q$-null) set $N\in\mathcal{C}$, namely $Q(N)=0$,
and for all $y\notin N,$ the set $X_{y}=\left\{ x\in X\mid\pi(x)=y\right\} $
has full measure under $P_{y}$, specifically $P_{y}(X_{y})=1$.
\item For any set $A\in\mathcal{B}$, the map $y \mapsto P_{y}(A)$ is
$\mathcal{C}$-measurable and $P(A)=\int_{Y}P_{y}(A)Q(\mathrm{d}y)$.
\end{enumerate}

The following supporting lemma will be used in the proof of the main result.

\begin{lemma}\label{lem:complete_diaconis}
Given a measurable set $\Gamma\in\mathcal{C}$, define the probability measure
$A\in\mathcal{B}\rightarrow\overline{P}(A)=P(\pi^{-1}(\Gamma)\cap A)/P(\pi^{-1}(\Gamma))$.
Assume we are given a regular conditional probability $\overline{P}_{y}$ for $(\pi,\overline{P})$. Then, $y\mapsto P_{y}^\prime(\cdot)$ defined
as 
\[
P_{y}^\prime=\begin{cases}
\overline{P}_{y} & y\in\Gamma,\\
P_{y} & \mathrm{otherwise,}
\end{cases}
\]
is a regular conditional probability for $(\pi,P)$ as defined by
conditions 1 to 3. 
\end{lemma}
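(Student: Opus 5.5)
The plan is to verify conditions 1 to 3 of Definition \ref{def:rcp} directly for $P^\prime_y$. Here $y\mapsto P_y$ is a regular conditional probability for $(\pi,P)$; it exists because both spaces are standard Borel. I also assume $P(\pi^{-1}(\Gamma))>0$ so that $\overline{P}$ is defined. Write $\overline{Q}(C)=\overline{P}(\pi^{-1}(C))$ for the image measure of $\overline{P}$. The basic identity I will use throughout is
\[
\overline{Q}(C)=\frac{P(\pi^{-1}(\Gamma\cap C))}{P(\pi^{-1}(\Gamma))}=\frac{Q(\Gamma\cap C)}{Q(\Gamma)}.
\]
Consequently $Q$ restricted to $\Gamma$ equals $Q(\Gamma)\,\overline{Q}$, and $\overline{Q}$ is concentrated on $\Gamma$.

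Condition 1 is immediate, since each $P^\prime_y$ is either $\overline{P}_y$ or $P_y$, and both are probability measures.

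For condition 2, let $N$ and $\overline{N}$ be the exceptional null sets for $P_y$ and $\overline{P}_y$ respectively. Set
\[
N^\prime=(\overline{N}\cap\Gamma)\cup(N\setminus\Gamma).
\]
Then $Q(\overline{N}\cap\Gamma)=Q(\Gamma)\,\overline{Q}(\overline{N})=0$ and $Q(N\setminus\Gamma)\le Q(N)=0$, so $Q(N^\prime)=0$. For $y\notin N^\prime$ there are two cases. If $y\in\Gamma$, then $y\notin\overline{N}$, so $P^\prime_y(X_y)=\overline{P}_y(X_y)=1$. If $y\notin\Gamma$, then $y\notin N$, so $P^\prime_y(X_y)=P_y(X_y)=1$.

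For condition 3, measurability comes from the representation
\[
P^\prime_y(A)=\mathbbm{1}_\Gamma(y)\,\overline{P}_y(A)+\mathbbm{1}_{\Gamma^c}(y)\,P_y(A),
\]
which is a sum of products of $\mathcal{C}$-measurable functions. For the disintegration identity, split $\int_Y P^\prime_y(A)\,Q(\mathrm{d}y)$ over $\Gamma$ and $\Gamma^c$.

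On $\Gamma$, the basic identity gives
\[
\int_\Gamma \overline{P}_y(A)\,Q(\mathrm{d}y)=Q(\Gamma)\int_Y \overline{P}_y(A)\,\overline{Q}(\mathrm{d}y)=Q(\Gamma)\,\overline{P}(A)=P(A\cap\pi^{-1}(\Gamma)).
\]
The middle equality uses condition 3 for $\overline{P}_y$, and the integrand may be taken over all of $Y$ because $\overline{Q}$ is concentrated on $\Gamma$.

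On $\Gamma^c$, the step needing care is to identify $\int_{\Gamma^c}P_y(A)\,Q(\mathrm{d}y)$ with $P(A\cap\pi^{-1}(\Gamma^c))$. Apply condition 3 for $P_y$ to the set $A\cap\pi^{-1}(\Gamma^c)$. Then use condition 2: for $y\notin N$, $P_y$ is concentrated on $X_y$, so
\[
P_y(A\cap\pi^{-1}(\Gamma^c))=\mathbbm{1}_{\Gamma^c}(y)\,P_y(A)\quad\text{for $Q$-a.e. } y.
\]
Integrating against $Q$ then gives the required identity.

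Adding the two pieces yields $P(A)$, which completes condition 3 and hence the proof.
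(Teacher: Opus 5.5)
Your proposal is correct and follows essentially the same route as the paper: the same exceptional set $N^\prime=(\overline{N}\cap\Gamma)\cup(N\setminus\Gamma)$, the identity $\overline{Q}=Q(\cdot\cap\Gamma)/Q(\Gamma)$ to handle the integral over $\Gamma$, and condition 2 for $P_y$ to rewrite the integral over $\Gamma^c$ as $P(A\cap\pi^{-1}(\Gamma^c))$. Your indicator-function argument for measurability and your explicit $Q$-a.e. identity on $\Gamma^c$ simply spell out steps that the paper states more tersely.
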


\begin{proof}
It is clear that $P_{y}^\prime$ satisfies condition 1. 

For condition 2, let $N,\overline{N}\in\mathcal{C}$ such that $P(\pi^{-1}(N))=0$
and $\overline{P}(\pi^{-1}(\overline{N}))=0$. Take $N^\prime=(\overline{N}\cap\Gamma)\cup(N\cap\Gamma^{c})\in\mathcal{C}$,
then $P(\pi^{-1}(N^\prime))= P(\pi^{-1}(\overline{N})\cap\pi^{-1}(\Gamma))+P(\pi^{-1}(N)\cap\pi^{-1}(\Gamma^{c}))=0$
and clearly, $\forall\,y\notin N^\prime$, $P_{y}^\prime(X_{y})=1$.

For condition 3, for any set $A\in\mathcal{B}$, and any set $R\in \mathcal{B}(\mathbb{R}_{\geq 0})$,
\begin{align*}
\{y\in Y \mid P_{y}^\prime(A)\in R\}= & \{y\in Y \cap \Gamma \mid P_{y}^\prime(A)\in R\}\cup \{y\in Y \cap \Gamma^c \mid P_{y}^\prime(A)\in R\}\\
= & \{y\in Y\cap\Gamma \mid \overline{P}_{y}(A)\in R\} \cup \{y\in Y\cap\Gamma^{c} \mid P_{y}(A)\in R\} \in\mathcal{C}.
\end{align*}

So $y\mapsto P_{y}^\prime(A)$ is $\mathcal{C}$-measurable. Next,
\begin{align*}
\int_{Y}P_{y}^\prime(A)Q({\rm d}y)= & \int_{\Gamma}P_{y}^\prime(A)Q({\rm d}y)+\int_{\Gamma^{c}}P_{y}^\prime(A)Q({\rm d}y)\\
= & \int_{\Gamma}\overline{P}_{y}(A)Q({\rm d}y)+\int_{\Gamma^{c}}P_{y}(A)Q({\rm d}y).
\end{align*}

Since the measure $\overline{Q}(\cdot) := \overline{P}(\pi^{-1}(\cdot))= Q(\cdot \cap \Gamma)/P(\pi^{-1}(\Gamma))$, it follows that
$\int_{\Gamma}\overline{P}_{y}(A)Q({\rm d}y)= P(\pi^{-1}(\Gamma))\int_{Y}\overline{P}_{y}(A)\overline{Q}({\rm d}y)$.  As $\overline{P}_{y}$ is a regular conditional probability for $\overline{P}$, the first term of the displayed equation is also equal to $P(\pi^{-1}(\Gamma))\overline{P}(A)=P(A\cap\pi^{-1}(\Gamma))$. For the second term of the displayed equation,
\begin{align*}
\int_{\Gamma^{c}}P_{y}(A)Q({\rm d}y)&=\int_{\Gamma^{c}}P_{y}(A\cap\pi^{-1}(\Gamma^{c}))Q({\rm d}y)\\
&=\int_{Y}P_{y}(A\cap\pi^{-1}(\Gamma^{c}))Q({\rm d}y)=P(A\cap\pi^{-1}(\Gamma^{c})),
\end{align*}
where the first and second equalities follow from condition 2.
Thus, we have
\[
\int_{Y}P_{y}^\prime(A)Q({\rm d}y)=P(A\cap\pi^{-1}(\Gamma))+P(A\cap\pi^{-1}(\Gamma^{c}))=P(A).
\]
This completes the proof.
\end{proof}

We now state the co-area formula, which is fundamental in the derivation of the regular conditional probability in \citet{coarea-diaconis}.

\begin{definition}[Co-area formula](\citealp[Ch. 3.2]{geomeasuretheory_federer};\citealp{coarea-diaconis}; \citealp[Ch. 3]{hausdorff_evans_2015})
    Let $\pi:\mathbb{R}^d \rightarrow \mathbb{R}^n$ be Lipschitz continuous, where $d \geq n$. For a Lebesgue integrable function $h: \mathbb{R}^d \rightarrow \mathbb{R}$,
$$\int_{\mathbb{R}^d} h(x) J\pi(x)\mathrm{d}x=\int_{\mathbb{R}^n} \Bigg(\int_{\pi^{-1}(y)} h(x) \mathcal{H}^{d-n}(\mathrm{d}x) \Bigg)\mathrm{d}y,$$
where $J\pi$ is defined as in Theorem \ref{thm:hausdorff_maintext}.
\label{def:coareaformula}
\end{definition}

\begin{remark}\label{rem:Gamma_0_Gamma_1}
Assume $P$ admits a density $p$ with respect to Lebesgue on $\mathbb{R}^{d_{\Theta}}$. Let $\Gamma_0:=\{x \in X \mid p(x) > 0\}$,  $\Gamma_1=\{x \in X \mid \pi \text{ is differentiable at }x \text{ and } J\pi(x)>0\}$. \citet{coarea-diaconis} assume $p(x)>0 \Longrightarrow J\pi(x)>0$, or $\Gamma_0\subset \Gamma_1$. Furthermore, they give a construction of $P_y$ for any $y\in Y$ such that $X_y \cap \Gamma_0$ is non-empty. Their construction involves a density that has $J\pi(x)$ appearing in the denominator (see Equation \ref{eqn:bar_P_y} below for the expression) and thus cannot be extrapolated to any part of $X_y$ that lies outside $\Gamma_1$. Hence, they assume  $\Gamma_0 \subset \Gamma_1$, which makes such an extrapolation unnecessary as $P$ has no mass outside $\Gamma_1$.

We cannot assume $p(x)>0 \Longrightarrow J\pi(x)>0$. Firstly, improper policies,  which correspond to a specific subset of $x$ values in $\Gamma_0$, may have $J\pi(x)=0$. To exemplify this point, in Example \ref{ex:2dmdp}, the mapping $\pi$ is $(\theta_1,\theta_2)\mapsto \theta_1 - \max\{\theta_1,\theta_2\}$. Its partial derivatives are $(0,0)$ for $\theta_1>\theta_2$ and thus $J\pi(\theta_1,\theta_2)=0$. However, if the prior for $(\theta_1,\theta_2)$ are independent Gaussians, it has mass everywhere.

Secondly, due to the maximum operator, the constraint function ($\pi$ in this discussion) is only almost-everywhere differentiable, and not differentiable at every point in $\Gamma_0$ as assumed in \citep{coarea-diaconis}.  For example, $(\theta_1,\theta_2)\mapsto \theta_1 - \max\{\theta_1,\theta_2\}$ is not differentiable on the line $\theta_1=\theta_2$.

As a conditional probability needs to completely ``reconstruct'' $P$ (through condition 3), in the proof of Theorem \ref{thm:hausdorff_maintext} below, we extrapolate the result of \citet{coarea-diaconis} to almost-everywhere differentiable mappings $\pi$, and use their construction for $P$ restricted to where $J\pi$ is positive.
\end{remark}

Now, we proceed to prove Theorem \ref{thm:hausdorff_maintext}.

\begin{proof}[of Theorem \ref{thm:hausdorff_maintext}]

Consider the setting where $(\mathbb{R}^{d_{\Theta}},\mathbb{\mathcal{B}}(\mathbb{R}^{d_{\Theta}}),p^{\Theta})$
is a probability space, and $(\mathbb{R}^{n},\mathcal{B}(\mathbb{R}^{n}))$
is a measurable space, where $\mathcal{B}(\mathbb{R}^{k})$ denotes the Borel $\sigma$-algebra
of $\mathbb{R}^{k}$. Let the measure $Q(C)=P(G_{\tau}^{-1}(C))$, $C\in\mathcal{B}(\mathbb{R}^{n})$. 

Let $\Gamma_1=\{\theta \in \mathbb{R}^{d_{\Theta}} \mid D G_\tau(\theta) \text{ exists } \text{ and } JG_\tau(\theta)>0 \}$. For generality, we assume a measurable set $N$ in $\mathbb{R}^{d_{\Theta}}$, which has zero measure under $p^{\Theta}$, and define 
$$\Gamma = \{y \in G_\tau(\mathbb{R}^{d_\Theta}) \mid \mathcal{O}_\tau^y \subseteq \Gamma_1 \cup N\}.$$In the context of the statement of Theorem \ref{thm:hausdorff_maintext},  $N^c=\{\theta \in \mathbb{R}^{d_\Theta} \mid D G_\tau(\theta) \text{ exists}\}$. Since $G_\tau(\theta)$ is Lipschitz, therefore it is differentiable almost everywhere by Rademacher's Theorem \citep{coarea-diaconis}, and $N$ has zero Lebesgue measure.

Let \begin{equation} \overline{p}^{\Theta}(\theta)= \frac{p^{\Theta}(\theta)\mathbbm{1}_{N^c \cap G_{\tau}^{-1}(\Gamma)}(\theta)}{\int_{N^c\cap G_{\tau}^{-1}(\Gamma)}p^{\Theta}(\theta)\mathrm{d}\theta} \label{eq:bar_lowercase_p}\end{equation} and $\overline{P}(\mathcal{O})=\int_{\mathcal{O}}\overline{p}^{\Theta}(\theta)\mathrm{d}\theta$, $\mathcal{O}\in\mathcal{B}(\mathbb{R}^{d_{\Theta}})$. We now construct a regular conditional probability, as defined by conditions 1-3 of Definition \ref{def:rcp}, for $(G_{\tau},\overline{P})$. For any $\mathcal{O}\in\mathcal{B}(\mathbb{R}^{d_{\Theta}})$ and
any $y\in\mathbb{R}^{n}$, let 
\begin{equation}
\overline{P}_{y}(\mathcal{O}):=\begin{cases}
\frac{1}{m(y)}\int_{\mathcal{O}\cap\mathcal{O}_{\tau}^{y}} \frac{\overline{p}^{\Theta}(\theta)}{JG_{\tau}(\theta)}\mathcal{H}^{d_{\Theta}-n}(\mathrm{d}\theta) & 0<m(y)<\infty\\
\delta_{\theta^{*}}(\mathcal{O}) & {\rm otherwise}
\end{cases}
\label{eqn:bar_P_y}
\end{equation}
where $\theta^{*}\in\mathbb{R}^{d_{\Theta}}$ is defined arbitrarily and
$$
m(y):=\int_{\mathcal{O}_{\tau}^{y}  }\frac{\overline{p}^{\Theta}(\theta)}{JG_{\tau}(\theta)}\mathcal{H}^{d_{\Theta}-n}(\mathrm{d}\theta).
$$
Note that the definition of $\Gamma$ ensures $\overline{p}^{\Theta}(\theta)>0 \Longrightarrow JG_\tau(\theta)>0$, so $\overline{P}_y$ is well-defined.  By \cite[Proposition 2]{coarea-diaconis},   \eqref{eqn:bar_P_y} is a regular conditional probability, as defined by conditions 1-3 of Definition \ref{def:rcp}, for $(G_{\tau},\overline{P})$. Specifically, the push-forward measure $\overline{Q}(C)=\overline{P}(G_{\tau}^{-1}(C))$ in Definition \ref{def:rcp} has density $m(\cdot)$ with respect to the Lebesgue measure on $\mathbb{R}^n$ \cite[Propostion 2a]{coarea-diaconis}. Condition 3 of Definition \ref{def:rcp} is verified, using the co-area formula, in the proof of Proposition 2b in \cite{coarea-diaconis}. (For the reader's convenience, we reproduce these two verifications in the final paragraph below.) We can therefore, by Lemma \ref{lem:complete_diaconis}, adopt the regular probability $\overline{P}_{y}$ in \eqref{eqn:bar_P_y} for $y\in \Gamma$,
and use an alternate construction for $y \notin \Gamma$ (which always exists \cite{rcp_def_parthasarathy}) as the regular conditional probability for $(G_{\tau},p^{\Theta})$.

For $\mathcal{O} \in \mathcal{B}(\mathbb{R}^{d_\Theta})$, apply the co-area formula in Defintion \ref{def:coareaformula}  with $h(\theta)=\overline{p}^{\Theta}(\theta) \mathbbm{1}_{\mathcal{O}}(\theta)/J G_{\tau}(\theta)$ to get
$$\overline{P}(\mathcal{O})=\int_{\mathbb{R}^n}  \int_{\mathcal{O} \cap \mathcal{O}^y_\tau} \frac{\overline{p}^\Theta(\theta)}{JG_\tau(\theta)}\mathcal{H}^{d_\Theta-n}(\mathrm{d}\theta) \mathrm{d}y=\int_{\mathbb{R}^n} m(y) \overline{P}_y(\mathcal{O}) \mathrm{d}y.$$
Furthermore, for any $C \in \mathcal{B}(\mathbb{R}^n)$, 
$$\overline{Q}(C):=\overline{P}(G_\tau^{-1}(C))=\int_C \int_{\mathcal{O}^y_\tau } \frac{\overline{p}^\Theta(\theta)}{JG_\tau(\theta)}\mathcal{H}^{d_\Theta-n}(\mathrm{d}\theta)\mathrm{d}y= \int_C m(y)\mathrm{d}y.$$ This implies that
the measure $\overline{Q}(\cdot)$ in Definition \ref{def:rcp} has density $m(\cdot)$ with respect to the Lebesgue measure on $\mathbb{R}^n$. Therefore, also verifying condition 3.

\end{proof}

\subsubsection{Gradient of \texorpdfstring{$\theta \mapsto g_{s,a}(\theta)$}{g} for tabular \texorpdfstring{$Q_\theta$}{Qθ}}
\label{apd:gradq}

Recall the definition of $g_{s,a}$, 
     $$g_{s,a}(\theta) = Q_\theta(s,a)
     -\sum_{s^\prime \in \mathcal{S}} p^S(s^\prime|s,a)
     \max_{a^\prime \in \mathcal{A}_{s^\prime}}Q_\theta(s^\prime, a^\prime).$$

Then, for any $\theta \in \{\vartheta \in \Theta \mid \forall s \in \mathcal{S} \setminus \{s^g\}, \,|\argmax_{a \in \mathcal{A}_s} \vartheta_{\nu(s,a)}|=1\}$, the set of differentiable $\theta \in \Theta$, 
 $$
 \nabla_\theta g_{s,a}(\theta) =  \nabla_\theta Q_\theta(s,a) - \sum_{s^\prime \in \mathcal{S}} \sum_{a^\prime \in \mathcal{A}_{s^\prime}}  \nabla_\theta Q_\theta(s^\prime, a^\prime) p^S(s^\prime|s,a) \mathbbm{1}\Big(a^\prime \in \argmax_{a^{\prime\prime} \in \mathcal{A}_{s^\prime}}Q_\theta(s^\prime,a^{\prime\prime})\Big).
 $$

 Note that each of the $\argmax$ only contains one element because of the set of differentiable $\theta$ and the fact that $\mathcal{A}_{s^g}=\{a^g\}$.

As $Q_\theta(s,a) = \theta_{\nu(s,a)} = \sum_{j=1}^{d_{\Theta}} \theta_{j}\mathbbm{1}(j=\nu(s,a))$ for $s\in \mathcal{S}, a \in \mathcal{A}_s$,
 $$
 \frac{\partial g_{s,a}(\theta)}{\partial \theta_k} = \mathbbm{1}(k=\nu(s,a)) - p^S(s^k|s,a) \mathbbm{1}\Big(a^k \in \argmax_{a^{\prime} \in \mathcal{A}_{s^k}}\theta_{\nu(s^k,a^{\prime})}\Big)
 $$
 for $k \in \{1,\dots,d_{\Theta}\}$, where $(s^k,a^k):= \nu^{-1}(k)$.

Finally, as a simple check for the special case of $s^g$, as $\nu(s^g,a^g) > d_{\Theta}$ and $s^k \neq s^g$ for any $k \in \{1,\dots,d_\Theta\}$, $\mathbbm{1}(k=\nu(s^g,a^g))=0$ and $p^S(s^k|s^g,a^g)=0$, and hence, $\frac{\partial g_{s^g,a^g}(\theta)}{\partial \theta_k}=0$.

\subsubsection{Theorem \ref{thm:hausdorff_maintext_tabular} (Satisfaction of assumptions of Theorem \ref{thm:hausdorff_maintext}) and the corollary in Remark \ref{rem:hausdorff_corr}} \label{apd:hausdorff_assumptions_proof}

\begin{proof}[of Theorem \ref{thm:hausdorff_maintext_tabular}]
Firstly, we show that $G_\tau$ is Lipschitz continuous if $\theta \mapsto Q_\theta(s,a)$ is Lipschitz continuous.

\begin{lemma}\label{lem:lipschitz} 
Let $\{(s_i,a_i)\}_{i=1}^n$ be a data set of state-action pairs that excludes the goal state: $s_i \in \mathcal{S} \setminus \{s^g\}$ and $a_i \in \mathcal{A}_{s_i}$. Furthermore, let the function $G:\Theta \rightarrow \mathbb{R}^n$ be $G(\theta)_i=g_{s_i,a_i}(\theta)$.

If $\theta \mapsto Q_\theta(s,a)$ is $L$-Lipschitz, with respect to the Euclidean distance $\lVert\cdot\rVert_2$, for every $s \in \mathcal{S}$ and $a \in \mathcal{A}_s$, then $G(\cdot)$ is also Lipschitz.  In particular, $G(\cdot)$ is Lipschitz when $Q_\theta$ is tabular as defined in Definition \ref{def:tabular}.
\end{lemma}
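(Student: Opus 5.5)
The plan is to bound each component $g_{s_i,a_i}$ separately and then combine the components into a bound on $G$. Fix $\theta,\theta'\in\Theta$ and a data pair $(s_i,a_i)$, and write
\[
|g_{s_i,a_i}(\theta)-g_{s_i,a_i}(\theta')| \le |Q_\theta(s_i,a_i)-Q_{\theta'}(s_i,a_i)| + \sum_{s'\in\mathcal{S}} p^S(s'|s_i,a_i)\Big|\max_{a'\in\mathcal{A}_{s'}}Q_\theta(s',a')-\max_{a'\in\mathcal{A}_{s'}}Q_{\theta'}(s',a')\Big|.
\]
The first term is at most $L\lVert\theta-\theta'\rVert_2$ by hypothesis. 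For the second term, use the elementary inequality
\[
\bigl|\max_j x_j-\max_j y_j\bigr|\le\max_j|x_j-y_j|
\]
for finitely many reals; $\mathcal{A}_{s'}$ is finite since $|\mathcal{A}|<\infty$. This gives
\[
\Big|\max_{a'}Q_\theta(s',a')-\max_{a'}Q_{\theta'}(s',a')\Big|\le \max_{a'}|Q_\theta(s',a')-Q_{\theta'}(s',a')|\le L\lVert\theta-\theta'\rVert_2 .
\]
The probabilities $p^S(\cdot|s_i,a_i)$ sum to one, so the expectation term is also at most $L\lVert\theta-\theta'\rVert_2$. Hence each component is $2L$-Lipschitz.

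To pass to the vector, take the Euclidean norm on $\mathbb{R}^n$:
\[
\lVert G(\theta)-G(\theta')\rVert_2\le \sqrt{n}\,\max_i|g_{s_i,a_i}(\theta)-g_{s_i,a_i}(\theta')|\le 2L\sqrt{n}\,\lVert\theta-\theta'\rVert_2,
\]
which is the claimed Lipschitz property.

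For the tabular case, I would check the hypothesis directly. For $(s,a)\neq(s^g,a^g)$ we have $Q_\theta(s,a)=\theta_{\nu(s,a)}$, a coordinate projection, so
\[
|\theta_{\nu(s,a)}-\theta'_{\nu(s,a)}|\le\lVert\theta-\theta'\rVert_2 .
\]
For the goal pair, $Q_\theta(s^g,a^g)\equiv 0$ is constant. So $L=1$ works, and $G$ is $2\sqrt{n}$-Lipschitz.

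No real obstacle is expected. The only care needed is that the max-difference inequality relies on $\mathcal{A}_{s'}$ being finite, and that next states equal to $s^g$ are covered by the constant convention $\theta_{d_\Theta+1}=0$. Almost-everywhere differentiability, as used in Theorem \ref{thm:hausdorff_maintext_tabular}, then follows from Rademacher's theorem.
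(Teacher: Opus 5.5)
Your proof is correct and follows the paper's argument essentially step for step. You apply the triangle inequality to each component and bound the expectation term by $L\lVert\theta-\theta'\rVert_2$ using the 1-Lipschitz property of a finite maximum; the paper phrases this step via Jensen and $\lVert\cdot\rVert_\infty$, and your inequality $|\max_j x_j-\max_j y_j|\le\max_j|x_j-y_j|$ is the precise form of it. You then combine the components to get the constant $2\sqrt{n}L$, which is $2\sqrt{n}$ in the tabular case, exactly as in the paper.
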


\begin{proof}
Consider an arbitrary $s \in \mathcal{S}\setminus\{s^g\}$ and $a \in \mathcal{A}_s$,
    \begin{align*}
    |g_{s,a}(\theta)-g_{s,a}(\theta^\prime)| \leq &|Q_\theta(s,a)-Q_{\theta^\prime}(s,a)| \\
    &+ \Bigg|\mathbb{E}\Big[\max_{a^\prime \in \mathcal{A}_{S_1}} Q_\theta(S_1,a^\prime)-\max_{a^\prime \in \mathcal{A}_{S_1}} Q_{\theta^\prime}(S_1,a^\prime) \Big|S_0=s,A_0=a \Big]\Bigg|.
    \end{align*}
Since $Q_\theta$ is $L$-Lipschitz for all $s\in \mathcal{S}$, $a \in \mathcal{A}_s$, 
$$|Q_\theta(s,a)-Q_{\theta^\prime}(s,a)| \leq L \lVert\theta-\theta^\prime\rVert_2.$$
Furthermore,
\begin{align*}
    &\Bigg|\mathbb{E}\Big[\max_{a^\prime \in \mathcal{A}_{S_1}} Q_\theta(S_1,a^\prime)-\max_{a^\prime \in \mathcal{A}_{S_1}} Q_{\theta^\prime}(S_1,a^\prime) \Big|S_0=s,A_0=a \Big]\Bigg|\\
    \leq& \mathbb{E} \Big[ \max_{a^\prime \in \mathcal{A}_s} | Q_\theta(S_1,a^\prime)-Q_{\theta^\prime}(S_1,a^\prime) | \Big| S_0=s,A_0=a\Big] \\
    \leq & L \lVert\theta-\theta^\prime\rVert_2,
\end{align*}
where the first inequality follows from Jensen's inequality and the fact that the mapping  $x \in \mathbb{R}^m \mapsto \lVert x\rVert_\infty$ is 1-Lipschitz with respect to $\lVert\cdot\rVert_\infty$ (for any positive integer $m$). For any $\theta,\theta^\prime \in \Theta$,
    $$\lVert G(\theta)-G(\theta^\prime)\rVert_2^2=\sum_{i=1}^n (g_{s_i,a_i}(\theta)-g_{s_i,a_i}(\theta^\prime))^2.$$
Combining the inequalities above, we have
$$\lVert G(\theta)-G(\theta^\prime)\rVert_2 \leq 2\sqrt{n}L \lVert\theta-\theta^\prime\rVert_2.$$
Finally, if $Q_\theta$ is tabular, as $|Q_\theta(s,a)-Q_{\theta^\prime}(s,a)|=|\theta_{\nu(s,a)}-\theta^\prime_{\nu(s,a)}|\leq \lVert\theta-\theta^\prime\rVert_2$, implying that $G(\theta)$ is $2\sqrt{n}$-Lipschitz.
\end{proof}

 We now present the following supporting lemma for proving Lemma \ref{lem:cor:hausdorff}.

\begin{lemma}\label{lem:bellmanfirstpart}
    Let Assumption \ref{ass:boeunique1} hold.
    Then, for any proper (deterministic) policy $\mu:\mathcal{S} \rightarrow \mathcal{A}$, the fixed-point equation
    $$Q(s,a)=\sum_{s^\prime \in \mathcal{S}\setminus \{s^g\}} p^S(s^\prime|s,a) Q(s^\prime,\mu(s^\prime)), \qquad \text{with domain }s \in \mathcal{S} \setminus \{s^g\},\,a \in \mathcal{A}_s$$
    has a unique solution, which is $Q \equiv 0$.
\end{lemma}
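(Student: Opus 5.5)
The plan is to read the fixed-point equation as the policy-evaluation equation of $\mu$ with zero rewards, and to show that any solution is forced to vanish by iterating it and using that $\mu$ is proper. Fix a proper deterministic policy $\mu$ and let $Q$ be any solution on the non-goal state-action pairs. It is convenient to extend $Q$ by $Q(s^g,a^g):=0$. With this convention the sum over $s'\in\mathcal{S}\setminus\{s^g\}$ can be rewritten as a sum over all of $\mathcal{S}$. The equation then reads $Q(s,a)=\mathbb{E}[Q(S_1,\mu(S_1))\mid S_0=s,A_0=a]$.

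\textbf{Existence.} The zero function trivially satisfies the equation, so only uniqueness needs proof.

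\textbf{Uniqueness.} I would iterate the equation $t$ times. Let $(S_k,A_k)$ be the chain started at $(s,a)$ and run under $\mu$ thereafter, so that $A_k=\mu(S_k)$ for $k\geq 1$. By the Markov property and induction,
\[
Q(s,a)=\mathbb{E}^{\mu}\big[Q(S_t,\mu(S_t))\,\big|\,S_0=s,A_0=a\big]
=\sum_{s'\neq s^g} p^{\mu}(S_t=s'\mid S_0=s,A_0=a)\,Q(s',\mu(s')),
\]
where the goal state drops out because $Q(s^g,a^g)=0$ and $s^g$ is absorbing by Assumption \ref{ass:boeunique1}. Since the state-action space is finite, $M:=\max|Q|<\infty$. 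Hence
\[
|Q(s,a)|\leq M\,p^{\mu}(S_t\neq s^g\mid S_0=s,A_0=a).
\]

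\textbf{Main obstacle: the tail probability.} It remains to show that $p^{\mu}(S_t\neq s^g\mid S_0=s,A_0=a)\to 0$ as $t\to\infty$. Properness is defined with the first action also chosen by $\mu$, whereas here $A_0=a$ is arbitrary. I would handle this by conditioning on $S_1$:
\[
p^{\mu}(S_t\neq s^g\mid S_0=s,A_0=a)=\sum_{s'}p^S(s'\mid s,a)\,p^{\mu}(S_{t-1}\neq s^g\mid S_0=s').
\]
Each term tends to $0$ by properness of $\mu$, since $\lim_{t} p^\mu(S_t=s^g\mid S_0=s')=1$ for every starting state $s'$. The sum is finite, so the whole expression tends to $0$. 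Letting $t\to\infty$ then gives $Q(s,a)=0$ for every pair, which proves uniqueness.

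An alternative, linear-algebraic route is also available. Let $P_\mu$ be the substochastic matrix of the chain restricted to non-goal states. Properness gives $P_\mu^t\to 0$, so $I-P_\mu$ is invertible. The solution is therefore determined by its values $Q(s',\mu(s'))$, which must solve $(I-P_\mu)x=0$; hence $x=0$, and consequently $Q\equiv 0$. I would present the probabilistic argument and mention the matrix view only as a remark.
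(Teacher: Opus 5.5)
Your proof is correct and follows the paper's argument. Like the paper, you iterate the fixed-point map to write $Q(s,a)$ as the expectation of $Q(S_t,\mu(S_t))$ under $\mu$ started from $(s,a)$, then let $t\to\infty$ using properness; your conditioning on $S_1$ to handle the arbitrary first action $A_0=a$ makes explicit a step the paper skips when it applies $\lim_{k\to\infty}p^\mu(S_k=s\mid S_0=s_0)=\mathbbm{1}(s=s^g)$ directly.
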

When we only consider state-action pairs $(s,\mu(s))$, these equations can be interpreted as the Bellman equations for an SSP problem where $\mathcal{A}_s=\{\mu(s)\}$, for all $s$, and the reward function is zero for all state-action pairs. Since $\mu$ is proper, it may seem intuitive that $Q \equiv 0$ is the unique solution. However, we could not find an explicit statement in the literature to verify this. Thus, a proof is presented below that follows the techniques in Proposition 3.2.1 of \citet{bertsekas2012}.
\begin{proof}
Let $s \in \mathcal{S} \setminus \{s^g\},\,a \in \mathcal{A}_s$. Define the Bellman operator $f_\mu$ for $\mu$ as
$$(f_\mu(Q))(s,a)=\sum_{s^\prime \in \mathcal{S} \setminus \{s^g\}} Q(s^\prime,\mu(s^\prime)) p^S(s^\prime|s,a).$$
Then, it is clear that $Q \equiv 0$ is a solution to $f_\mu(Q)=Q$. Suppose $Q^\prime$ is also a solution, i.e., $f_\mu(Q^\prime)\equiv Q^\prime$. Then, $f_\mu^k(Q^\prime)=Q^\prime$ for any positive integer $k$. We now rewrite $f^k_\mu$ as follows:
    \begin{equation} (f_\mu^k(Q))(s,a) = \sum_{s_k \in \mathcal{S}\setminus \{s^g\}} Q(s_k,\mu(s_k)) p^\mu(S_k=s_k|S_0=s,A_0=a). \label{eqn:lemma_fpi}
    \end{equation}
This follows from the following inductive argument---Firstly, the base case $k=1$ follows from the definition of $f_\mu$. Suppose Equation \ref{eqn:lemma_fpi} holds for $k=\ell$. For $k=\ell+1$,
\begin{align*}
(f^{\ell+1}_\mu(Q))(s,a) &= \sum_{s^\prime \in \mathcal{S} \setminus \{s^g\}} (f_\mu^{\ell}(Q))(s^\prime,\mu(s^\prime))p^S(s^\prime|s,a) \\
&=  \sum_{s^\prime \in \mathcal{S} \setminus \{s^g\}} \sum_{s^{\prime\prime} \in \mathcal{S} \setminus \{s^g\}} Q(s^{\prime\prime},\mu(s^{\prime\prime})) p^\mu(S_\ell=s^{\prime\prime}|S_0=s^\prime,A_0=\mu(s^\prime)) p^S(s^\prime|s,a)\\
&= \sum_{s^{\prime\prime} \in \mathcal{S} \setminus \{s^g\}} Q(s^{\prime\prime},\mu(s^{\prime\prime})) p^\mu (S_{\ell+1}=s^{\prime\prime}|S_0=s,A_0=a),
\end{align*}
where the final equality uses that $p^\mu(S_\ell=s^{\prime\prime}|S_0=s^g, A_0=\mu(s^g)) p^S(s^g|s,a)=0$ for all $s^{\prime \prime} \in \mathcal{S} \setminus \{s^g\}$. This finishes the inductive argument.

Since $\mu$ is proper, $\lim_{k \rightarrow \infty }p^\mu(S_k=s|S_0=s_0) = \mathbbm{1}(s =s^g)$ for any $s_0,s \in \mathcal{S}$. Therefore, $Q^\prime \equiv \lim_{k\rightarrow \infty} f^k_\mu(Q^\prime)\equiv0$. In other words, $Q \equiv 0$ is the unique solution to $f_\mu(Q)\equiv Q$.
\end{proof}

Now, we show in Lemma \ref{lem:cor:hausdorff} that a tabular $Q_\theta$ satisfies the differentiability assumption of Theorem \ref{thm:hausdorff_maintext}. The aim of this lemma is to establish the positive definiteness of $JG_\tau$ whenever $G_\tau$ is differentiable as required in Theorem \ref{thm:hausdorff_maintext_tabular}. For the reader's convenience, we repeat the preamble of Theorem \ref{thm:hausdorff_maintext_tabular}.

The MDP $\mathcal{M}$ satisfies Assumption
\ref{ass:boeunique1}, and $\mathcal{D}_\tau^{\mathcal{S},\mathcal{A}}=\{(s_i,a_i)\}_{i=1}^n$ is the re-indexed unique non-goal state-action components of the data $\mathcal{D}_\tau=\{(s_t,a_t,r_t)\}_{t=0}^\tau$. The corresponding (deterministic) reward vector is $\bar{r}:=(\bar{r}_{s_1,a_1},\dots,\bar{r}_{s_n,a_n})^\top$. The mapping $G_\tau:\Theta\rightarrow \mathbb{R}^n$ is   $G_\tau(\theta)_i=g_{s_i,a_i}(\theta)$, for $i=1,\ldots,n$; and $JG_\tau(\theta):= \sqrt{\det{(DG_\tau(\theta) DG_\tau(\theta)^\top)}}.$

\begin{lemma}\label{lem:cor:hausdorff} Assume $Q_\theta$ is tabular as defined in Definition \ref{def:tabular}. If $\mathcal{M}$  satisfies Assumptions \ref{ass:boeunique1} and \ref{ass:boeunique2}, then $JG_\tau(\theta) > 0$ for any $\theta \in \{\vartheta \in \Theta \mid G_\tau(\vartheta)=\bar{r}\}$ at which $G_\tau$ is differentiable.
\end{lemma}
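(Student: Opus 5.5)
The plan is to compute $DG_\tau(\theta)$ explicitly from Appendix \ref{apd:gradq}, reduce $JG_\tau(\theta)>0$ to linear independence of its rows, and show that any linear dependence forces a zero-average-reward recurrent class. That class will then be ruled out by Assumption \ref{ass:boeunique2}, using the fact that $\theta$ lies on the preimage $\mathcal{O}^{\bar r}_\tau$.

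\textbf{Step 1 (set-up).} At a differentiable $\theta$, every non-goal state $s$ has a unique maximiser, so $\theta$ defines a deterministic stationary policy $\mu_\theta(s)=\argmax_a\theta_{\nu(s,a)}$. By Appendix \ref{apd:gradq}, row $i$ of $DG_\tau(\theta)$ is
\[
e_{\nu(s_i,a_i)}^\top-\sum_{s'\neq s^g}p^S(s'|s_i,a_i)\,e_{\nu(s',\mu_\theta(s'))}^\top .
\]
Since $JG_\tau(\theta)=\sqrt{\det(DG_\tau DG_\tau^\top)}$, positivity is equivalent to these $n$ rows being linearly independent. Suppose, for contradiction, that $c\in\mathbb{R}^n\setminus\{0\}$ satisfies $c^\top DG_\tau(\theta)=0$. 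Define $w$ on non-goal state-action pairs by $w(s_i,a_i)=c_i$, and $w=0$ off the data; this is well defined because the data pairs are unique.

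\textbf{Step 2 (reading off the dependence column by column).} Consider the column of $DG_\tau(\theta)$ indexed by $\nu(s,a)$.
\begin{itemize}
\item If $a\neq\mu_\theta(s)$, only the identity part of each row contributes, so $w(s,a)=0$. Hence $c$ vanishes on off-policy data pairs.
\item If $a=\mu_\theta(s)$, then with $x(s):=w(s,\mu_\theta(s))$ the column gives
\[
x(s)=\sum_{s'\neq s^g}x(s')\,p^S(s|s',\mu_\theta(s')).
\]
\end{itemize}
So $x=P_\mu^\top x$, where $P_\mu$ is the substochastic transition matrix of $\mu_\theta$ restricted to non-goal states.

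\textbf{Step 3 (the proper case).} If $\mu_\theta$ is proper, $P_\mu^k\to0$ (as in Lemma \ref{lem:bellmanfirstpart}). Then $x=(P_\mu^\top)^kx\to0$, so $x=0$, hence $c=0$, a contradiction.

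\textbf{Step 4 (the improper case).} By standard finite Markov chain theory, a signed vector invariant under $P_\mu^\top$ is supported on the closed recurrent non-goal classes of $\mu_\theta$. It is a combination $x=\sum_k\alpha_k\pi_k$ of their stationary distributions. Pick a class $C$ with $\alpha\neq0$.
\begin{itemize}
\item Every $s\in C$ has $x(s)\neq0$, so $(s,\mu_\theta(s))$ is a data pair.
\item Since $\theta\in\mathcal{O}^{\bar r}_\tau$, the Bellman equations $\theta_{\nu(s,\mu(s))}=\bar r_{s,\mu(s)}+\sum_{s'}p^S(s'|s,\mu(s))\theta_{\nu(s',\mu(s'))}$ hold exactly on $C$.
\item Multiplying by $\pi_C(s)$, summing over $C$ and using stationarity and closedness of $C$, the $\theta$ terms cancel. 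This gives $\sum_{s\in C}\pi_C(s)\bar r_{s,\mu(s)}=0$, so the average reward on $C$ is zero.
\end{itemize}

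\textbf{Step 5 (contradiction with Assumption \ref{ass:boeunique2}).} Take a proper policy $\mu_p$, which exists by Assumption \ref{ass:boeunique2}, and set $\mu'=\mu_\theta$ on $C$ and $\mu'=\mu_p$ elsewhere.
\begin{itemize}
\item Any closed non-goal set of $\mu'$ disjoint from $C$ would be closed under $\mu_p$, which is impossible. So $C$ is the only recurrent non-goal class of $\mu'$, and $\mu'$ is improper because $C$ is closed.
\item Trajectories reach $s^g$ or $C$ in geometrically bounded time, and finite rewards accrue before that.
\item Inside $C$, the zero-mean reward together with a bounded Poisson-equation solution $h$ (with $\bar r=h-P_Ch$) keeps the expected partial sums bounded.
\end{itemize}
Hence $V^{\mu'}(s)>-\infty$ for every $s$, contradicting Assumption \ref{ass:boeunique2}. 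Therefore $c=0$ and $JG_\tau(\theta)>0$.

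I expect Step 5 to be the main obstacle. It needs care with how $V^{\mu'}$ is defined for an improper policy (limit versus liminf of partial sums), and with the signed-measure decomposition in Step 4. I would first handle the zero-average-cycle bound through the Poisson equation and then check it against whichever convention for $V$ the paper adopts.
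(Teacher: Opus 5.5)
Your proof is correct, and it takes a different route from the paper's.

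\textbf{The paper's route.} It never examines the left kernel of $DG_\tau(\theta)$. It builds an auxiliary MDP $\widetilde{\mathcal{M}}$ in which every unobserved state--action pair moves straight to $s^g$. It then pads $G_\tau$ to a square map $\widetilde{G}$ whose extra Jacobian rows are standard basis rows. Nonsingularity of $D\widetilde{G}(\theta)$ follows from Lemma \ref{lem:bellmanfirstpart} once the greedy policy is shown to be proper in $\widetilde{\mathcal{M}}$. This reuses an existing lemma and avoids any spectral or invariant-measure decomposition.

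\textbf{Your route.} Your Steps 2--4 characterise the left kernel explicitly. Off-policy entries vanish, and on-policy entries form an invariant signed measure of the substochastic $P_\mu$, hence a combination of stationary distributions of closed classes. This reaches the same structural fact the paper uses: a closed non-goal class all of whose on-policy pairs are observed. It also shows exactly what a degenerate direction would look like.

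\textbf{The last step.} Here your argument is more careful than the paper's. The paper concludes that the reward sums on its class $\mathcal{C}$ diverge because $\mu$ is improper and Assumption \ref{ass:boeunique2} holds. As written, Assumption \ref{ass:boeunique2} applied to $\mu$ only gives $V^\mu=-\infty$ at \emph{some} state, which may lie outside $\mathcal{C}$ (e.g.\ in another recurrent class of $\mu$). Your policy $\mu'$, which agrees with $\mu_\theta$ on $C$ and with a proper policy elsewhere, makes $C$ the only possible source of $-\infty$ and so closes that gap.

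\textbf{A shortcut.} You can simplify Steps 4--5. The function $v(s):=\theta_{\nu(s,\mu_\theta(s))}$ already satisfies $\bar{r}_{s,\mu_\theta(s)}=v(s)-\sum_{s'}p^S(s'|s,\mu_\theta(s))v(s')$ on $C$. So it is itself a bounded Poisson solution, and both the zero-average computation and the existence argument for $h$ are unnecessary.

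\textbf{A minor point.} Differentiability of $G_\tau$ only forces unique maximisers at successors of data pairs, not at every non-goal state. The paper's own proof states the same thing loosely. An arbitrary tie-break at the remaining states leaves your argument intact.
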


The proof strategy involves constructing a related MDP $\widetilde{\mathcal{M}}$ such that it generates the same subset of BOEs as those forming the constraint $G_\tau$, and thus shares the same preimage set $\mathcal{O}_\tau^{\bar{r}}$. Furthermore, for any $\theta \in \mathcal{O}_\tau^{\bar{r}}$, the corresponding greedy policy is proper for the MDP $\widetilde{\mathcal{M}}$.
By properness, we can then show that the partial derivative vectors of MDP $\widetilde{\mathcal{M}}$'s constraint function are linearly independent, which in turn implies the same for the derivative vectors of $G_\tau$. Linear independence will then verify $JG_\tau(\theta) > 0$.

\begin{proof}

Without loss of generality, let  $\nu(s_i,a_i)=i$ for all $i \in \{1,\dots,n\}$, and extend the indices to label all of $(\mathcal{S} \setminus \{s^g\}) \otimes \mathcal{A}$ so that  $\nu(s_i,a_i)=i$ for all $i \in \{1,\dots,d_\Theta\}$.

We first verify the differentiability of $G_\tau$ and give its gradient. By Lemma \ref{lem:lipschitz}, $\theta \mapsto G_\tau(\theta)$ is Lipschitz and is therefore differentiable almost everywhere by Rademacher's Theorem. Furthermore, it is easy to see that $G_\tau(\theta)$ is differentiable at $\theta$ if and only if  $\argmax_{a\in \mathcal{A}_s} \theta_{\nu(s,a)}$ is a singleton for all $s \in \mathcal{S}$. 

For any $\theta$ at which $G_\tau$ is differentiable, let $\mu_{\theta}:\mathcal{S} \rightarrow \mathcal{A}$  be the greedy policy of $\theta$, that is,  $\mu_{\theta}(s) = \argmax_{a \in \mathcal{A}_s} \theta_{\nu(s,a)}$, where the equality follows from that the sets $\argmax$ are singletons for all states in $\mathcal{S}$. From here on, the subscript is omitted in $\mu$ to declutter the mathematical expressions. By Appendix \ref{apd:gradq},
$$\frac{\partial G_\tau(\theta)_i}{\partial \theta_k}=\mathbbm{1}(k=\nu(s_i,a_i)) - p^S(s_k|s_i,a_i)  \mathbbm{1}(a_k =\mu(s_k) )
$$
 for $k \in \{1,\dots,d_{\Theta}\}$, where $(s_k,a_k)\equiv\nu^{-1}(k)$.\newline

 We now define a new MDP $\widetilde{\mathcal{M}}$ that has the same state space, action space, reward distribution and initial state distribution as $\mathcal{M}$, but is equipped with a new transition dynamics $\tilde{p}^S$ such that the available data set $\mathcal{D}_\tau^{\mathcal{S},\mathcal{A}}$ effectively contains a visit to every non-goal state-action pair of $\widetilde{\mathcal{M}}$. Specifically, for any $s^\prime \in \mathcal{S}$,
$$\tilde{p}^S(s^\prime|s,a) = \begin{cases} p^S(s^\prime|s,a) & \text{if }(s,a) \in \mathcal{D}_\tau^{\mathcal{S},\mathcal{A}} \\
\mathbbm{1}(s^\prime=s^g) & \text{otherwise} \end{cases}.$$ 
Furthermore, define $\widetilde{G}_\tau:\Theta \rightarrow \mathbb{R}^n$, where for any $i \in \{1,\dots,n\}$,
$$(\widetilde{G}_\tau(\theta))_i := Q_{\theta}(s_i,a_i) - \sum_{s^\prime \in \mathcal{S}} \max_{a^\prime \in \mathcal{A}_{s^\prime}} Q_\theta(s^\prime,a^\prime) \tilde{p}^S(s^\prime|s_i,a_i)=(G_\tau(\theta))_i.$$
Therefore, $\mathcal{M}$ and $\widetilde{\mathcal{M}}$ share the same subset of BOEs over $\mathcal{D}_\tau^{\mathcal{S},\mathcal{A}}$ and $DG_\tau(\theta) = D\widetilde{G}_\tau(\theta)$ for all $\theta \in \Theta$. Finally, define the extended constraint $\widetilde{G}:\Theta \rightarrow \mathbb{R}^{d_\Theta}$, 
$$(\widetilde{G}(\theta))_i:= Q_{\theta}(s_i,a_i) -\sum_{s^\prime \in \mathcal{S}} \max_{a^\prime \in \mathcal{A}_{s^\prime}} Q_\theta(s^\prime,a^\prime) \tilde{p}^S(s^\prime|s_i,a_i),\quad i=1,\ldots, d_\Theta.$$
Using the definition of $\tilde{p}^S$, we see that $(\widetilde{G}(\theta))_i = Q_{\theta}(s_i,a_i) = \theta_i$ for $i=n+1,\ldots, d_\Theta$.

\begin{sloppypar}
 The main body of the proof is now presented, commencing with the claim that $\{( \frac{\partial G_\tau(\theta)_i}{\partial \theta_1},\dots,\frac{\partial G_\tau(\theta)_i}{\partial \theta_{d_\Theta}})^\top\}_{i=1}^n$ are linearly independent. However, this is equivalent to claiming $\{( \frac{\partial \widetilde{G}_\tau(\theta)_i}{\partial \theta_1},\dots,\frac{\partial \widetilde{G}_\tau(\theta)_i}{\partial \theta_{d_\Theta}})^\top\}_{i=1}^n$ are linearly independent, which in turn is true if $\{( \frac{\partial \widetilde{G}(\theta)_i}{\partial \theta_1},\dots,\frac{\partial \widetilde{G}(\theta)_i}{\partial \theta_{d_\Theta}})^\top\}_{i=1}^{d_\Theta}$ are linearly independent. By the fact that the column rank and the row rank of a matrix are always equal,
\begin{align*}
    &\bigg\{\bigg( \frac{\partial \widetilde{G}(\theta)_1}{\partial \theta_k},\dots,\frac{\partial \widetilde{G}(\theta)_{d_\Theta}}{\partial \theta_{k}}\bigg)^\top\bigg\}_{k=1}^{d_\Theta} \text{ are linearly independent} \\
    &\qquad\iff \bigg\{\bigg( \frac{\partial \widetilde{G}(\theta)_i}{\partial \theta_1},\dots,\frac{\partial \widetilde{G}(\theta)_i}{\partial \theta_{d_\Theta}}\bigg)^\top\bigg\}_{i=1}^{d_\Theta} \text{ are linearly independent}.
\end{align*}
\end{sloppypar}
 We verify the linear independence of the vectors in the first part of the displayed assertion. To show it, for  $\alpha\in \mathbb{R}^{d_\Theta}$,
 \begin{align*} &\sum_{k=1}^{d_\Theta} \alpha_k\frac{\partial \widetilde{G}(\theta)_i}{\partial \theta_k} = 0 \quad \forall \, i \in \{1,\dots,d_\Theta\}\\
 \iff&  \sum_{k=1}^{d_\Theta} \alpha_k \Big[\mathbbm{1}(k=\nu(s_i,a_i)) - \tilde{p}^S(s_k|s_i,a_i) \mathbbm{1}\Big(a_k \in \argmax_{a^{\prime} \in \mathcal{A}_{s_k}}\theta_{\nu(s_k,a^{\prime})}\Big)\Big]=0 \quad \forall \, i\\
 \iff & \alpha_i -\sum_{k=1}^{d_\Theta} \alpha_k 
 \tilde{p}^S(s_k|s_i,a_i)   \mathbbm{1}(a_k=\mu(s_k))=0 \quad \forall \, i \\
 \iff& \alpha_i -\sum_{s^\prime \in \mathcal{S} \setminus \{s^g\}} \tilde{p}^S(s^\prime|s_i,a_i) \alpha_{\nu(s^\prime,\mu(s^\prime))}=0  \quad \forall \, i .
 \end{align*}
Given that $\mu$ is proper in $\widetilde{\mathcal{M}}$---a proof is provided at the end---it follows from Lemma \ref{lem:bellmanfirstpart} that $\alpha=0$ is the unique solution, thus verifying the claim that $\{( \frac{\partial G_\tau(\theta)_i}{\partial \theta_1},\dots,\frac{\partial G_\tau(\theta)_i}{\partial \theta_{d_\Theta}})^\top\}_{i=1}^n$ are linearly independent.

Therefore, for any $v \in \mathbb{R}^n$, $DG_\tau(\theta)^\top v=0$ if and only if $v=0$ by the rank-nullity theorem. It follows that $DG_\tau(\theta) DG_\tau(\theta)^\top$ is positive definite. This implies that $\det(DG_\tau(\theta)DG_\tau(\theta)^\top) > 0$.

The final step in the proof is to show that $\mu$ is proper for $\widetilde{\mathcal{M}}$.
 Suppose instead $\mu$ is improper for $\widetilde{\mathcal{M}}$. Then, there exists $s \in \mathcal{S}$ such that $\lim_{t \rightarrow \infty} \tilde{p}^\mu(S_t=s^g|S_0=s)<1$. Consider the Markov chain with transition kernel 
 $$\tilde{p}^{\mu}(S_{t+1}=s^\prime|S_t=s)=\tilde{p}^S(S_{t+1}=s^\prime| s,\mu(s) )$$ for $s,s^\prime \in \mathcal{S}$. Then,  by standard arguments for finite state-space Markov chains \citep{markovchain}, there exists a closed set of recurrent states $\mathcal{C} \subseteq \mathcal{S}$ under $\mu$, $s^g \notin \mathcal{C}$, and for any $t \in \mathbb{Z}_{\geq 1}$,
 $$\tilde{p}^\mu(S_t \in \mathcal{C}|S_0 \in \mathcal{C})=1.$$
 Since $p^S(s^g|s,a)=1$ for any $(s,a) \notin \mathcal{D}^{\mathcal{S},\mathcal{A}}_\tau$, it follows that $\mathcal{C} \subseteq \{s_1,\dots,s_n\}$ and $(s_i,\,\mu(s_i)) \in \mathcal{D}^{\mathcal{S},\mathcal{A}}_\tau$ for $i=1,\ldots,n$; otherwise  there will be an immediate transition to $s^g$ and the set is not closed under $\tilde{p}^\mu$. Since $\mathcal{M}$ and $\widetilde{\mathcal{M}}$ have the same transition distributions from state-action pairs in $\mathcal{D}_\tau^{\mathcal{S},\mathcal{A}}$, we can conclude that $\mathcal{C}$ is also a closed set of recurrent states in $\mathcal{M}$ under $\mu$. Thus $\mu$ is also improper for $\mathcal{M}$.

 Recall that $\theta$ satisfies the following subset of BOEs of $\mathcal{M}$,
 $$Q_\theta(s,a) = \bar{r}_{s,a} + \mathbb{E}[Q_\theta(S_1,\mu(S_1))|S_0=s,A_0=a] \qquad \forall \, (s,a) \in \mathcal{D}_\tau^{\mathcal{S},\mathcal{A}},$$ where the inner {\it $\max_{a^\prime} Q_\theta(S_1,a^\prime)$} of the BOE is found by  $\mu(S_1)$.

Pick $s \in \mathcal{C}$. By expanding the $Q_{\theta}$ term in the expectation of the BOEs, using the fact that $\mathcal{C}$ is closed under $\mu$ and $\theta$ satisfies the BOEs of $\{(s_i,\mu(s_i))\}_{i=1}^n \subseteq \mathcal{D}_\tau^{\mathcal{S},\mathcal{A}}$,
\begin{equation*}
Q_\theta(s,\mu(s))=\mathbb{E}\Bigg[\sum_{t=0}^{\tau-1} \bar{r}_{S_t,\mu(S_t)}\Bigg|S_0=s\Bigg]+\mathbb{E}[Q_\theta(S_\tau,\mu(S_\tau))|S_0=s],
\end{equation*}
for all $\tau \in \mathbb{Z}_{\geq 1}$.

Now, as $\tau \rightarrow \infty$, the sum diverges because $\mu$ is improper for $\mathcal{M}$ and $\mathcal{M}$ satisfies Assumption \ref{ass:boeunique2}: that is, any improper policy contains a state that yields negative infinite expected cumulative rewards. Thus, this results in a contradiction since $Q_\theta(s,a)$ is bounded.
\end{proof}
This completes the proof.
\end{proof}

\begin{corollary}\label{cor:tabularhausdorff_m}
Assume $Q_\theta$ is tabular, as defined in Definition \ref{def:tabular},  and $\mathcal{M}$  satisfies Assumptions \ref{ass:boeunique1} and \ref{ass:boeunique2}. Furthermore, assume there exists a $\bar{\theta} \in \mathcal{O}_\tau^{\bar{r}}$ at which $G_\tau$ is differentiable. Then, a sufficient condition for $m(\bar{r}) > 0$ is that $p^\Theta(\theta)>0$ for all $\theta \in \mathcal{O}_\tau^{\bar{r}}$.
\end{corollary}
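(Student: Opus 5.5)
The plan is to show that $\bar\theta$ has a neighbourhood in which $G_\tau$ is smooth and $JG_\tau$ is bounded away from zero. Then the local piece of $\mathcal{O}_\tau^{\bar r}$ through $\bar\theta$ is a $(d_\Theta-n)$-dimensional $C^1$ submanifold, so it has positive $\mathcal{H}^{d_\Theta-n}$ measure. Since $p^\Theta>0$ there, the integral defining $m(\bar r)$ is positive.

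\textbf{Step 1: $G_\tau$ is affine near $\bar\theta$.} By Appendix \ref{apd:gradq} and the proof of Lemma \ref{lem:cor:hausdorff}, $G_\tau$ is differentiable at $\bar\theta$ exactly when $\argmax_{a\in\mathcal{A}_s}\bar\theta_{\nu(s,a)}$ is a singleton for every $s$. These are finitely many strict inequalities between coordinates of $\bar\theta$. Hence there is an open ball $U\ni\bar\theta$ on which the greedy policy $\mu_\theta=\mu_{\bar\theta}=:\mu$ is constant. On $U$ every maximum in $g_{s_i,a_i}$ is attained at $\mu(s')$, so
\[
G_\tau(\theta)=B^{\mu}\theta\quad(\theta\in U),
\]
where $B^\mu$ is the matrix of Proposition \ref{prop:psrl_new} with $\ell=\mu$. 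The map $G_\tau$ is therefore affine (indeed linear) on $U$, every point of $U$ lies in $N^c$, and $JG_\tau\equiv\sqrt{\det(B^\mu {B^\mu}^\top)}$ is constant on $U$.

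\textbf{Step 2: the Jacobian is positive.} Lemma \ref{lem:cor:hausdorff} applies at $\bar\theta\in\mathcal{O}_\tau^{\bar r}$, where $G_\tau$ is differentiable. It gives $JG_\tau(\bar\theta)>0$, so $B^\mu$ has full row rank $n$.

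\textbf{Step 3: the local level set has positive measure.} On $U$ we have $\mathcal{O}_\tau^{\bar r}\cap U=\{\bar\theta+v: v\in\ker B^\mu\}\cap U$. Because $B^\mu$ has rank $n$, $\ker B^\mu$ has dimension $d_\Theta-n$. This set is therefore a relatively open subset of an affine $(d_\Theta-n)$-plane containing $\bar\theta$. Its $\mathcal{H}^{d_\Theta-n}$ measure equals the $(d_\Theta-n)$-dimensional Lebesgue measure of a nonempty open ball in that plane, which is positive.

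If $n=d_\Theta$, the plane is the single point $\bar\theta$. In that case $\mathcal{H}^0$ is counting measure, so the measure is $1>0$.

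\textbf{Step 4: conclude.} By hypothesis $p^\Theta>0$ on $\mathcal{O}_\tau^{\bar r}\cap U$, and $JG_\tau$ is a finite positive constant there. Hence
\[
m(\bar r)\ \ge\ \int_{\mathcal{O}_\tau^{\bar r}\cap U} \frac{p^\Theta(\theta)}{JG_\tau(\theta)}\,\mathcal{H}^{d_\Theta-n}(\mathrm{d}\theta)\ >\ 0,
\]
since this is the integral of a strictly positive function over a set of positive measure. Measurability of the integrand is inherited from the continuity of $p^\Theta$, or simply from its measurability.

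\textbf{Main obstacle.} The delicate point is Step 1: checking that differentiability at $\bar\theta$ really forces a whole neighbourhood with the same greedy policy. Ties at states outside $\mathcal{S}^{\mathcal{D}_\tau}$ do not affect $G_\tau$, so they are harmless. The characterisation used in Lemma \ref{lem:cor:hausdorff} asks for singleton argmaxes everywhere, and under that characterisation openness follows from strict inequalities being preserved under small perturbations.

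If one only knew that the argmaxes are singletons on $\mathcal{S}^{\mathcal{D}_\tau}$, I would instead restrict $U$ so that only those argmaxes are frozen. $G_\tau$ is still linear on such a $U$. Points of $U$ with ties elsewhere may fail to lie in $N^c$ under the stated definition, but they form a Lebesgue-null union of hyperplanes. Their intersection with the plane from Step 3 is $\mathcal{H}^{d_\Theta-n}$-null unless the plane lies inside one of those hyperplanes, and that case is excluded because $\bar\theta$ itself has no ties.
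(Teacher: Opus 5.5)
Your proof is correct, and it takes a more elementary route than the paper.

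The paper uses only $C^1$ information. It notes that $N^c$ is open and takes full rank of $DG_\tau(\bar\theta)$ from Lemma \ref{lem:cor:hausdorff}. The implicit function theorem then gives a $C^1$ graph $\{(h(y),y)\}$ over a $\delta$-ball in $\mathbb{R}^{d_\Theta-n}$, lying inside $\mathcal{O}_\tau^{\bar r}\cap N^c$. Finally, since projecting onto the $y$-coordinates is $1$-Lipschitz and so cannot increase Hausdorff measure, the $\mathcal{H}^{d_\Theta-n}$ measure of the graph is at least the Lebesgue measure of that ball.

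You instead exploit the piecewise-linear structure of the tabular constraint. Freezing the greedy actions gives $G_\tau(\theta)=B^\mu\theta$ on a whole ball. The local level set is therefore a flat $(d_\Theta-n)$-dimensional disc on which $JG_\tau$ is a positive constant. Positivity of its Hausdorff measure is then immediate, with no implicit function theorem or projection inequality, and you handle the case $n=d_\Theta$ explicitly. In exchange, the paper's argument carries over unchanged to any parametrisation that is $C^1$ near $\bar\theta$ with full-rank Jacobian, where level sets are genuinely curved.

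One small remark on your closing fallback paragraph: it is unnecessary. Linearity of $G_\tau$ on $U$ already makes every point of $U$ a point of differentiability, so $U\subseteq N^c$ with $N^c$ as defined in Theorem \ref{thm:hausdorff_maintext}, whatever ties occur outside $\mathcal{S}^{\mathcal{D}_\tau}$. Its last sentence would also not be justified in that scenario. There $\bar\theta$ may itself have ties off $\mathcal{S}^{\mathcal{D}_\tau}$, and the whole level set can lie inside such a tie hyperplane. For example, take an unrevisited start state with two observed actions, both leading directly to $s^g$ with equal rewards.
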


\begin{proof}The outline of the proof is as follows. We first show $\mathcal{H}^{d_\Theta-n}(\mathcal{O}_\tau^{\bar{r}} \cap N^c) >0$. Thus, the integral that defines $m(\bar{r})$ will also be positive since the measure of the set $\mathcal{O}_\tau^{\bar{r}} \cap N^c$ and integrand are both positive, that is, $p^\Theta(\theta)>0$ and $JG_\tau(\theta) > 0$ (Lemma \ref{lem:cor:hausdorff}) for all $\theta \in \mathcal{O}_\tau^{\bar{r}} \cap N^c$. 

     Since $N^c$ is open, there exists $\epsilon > 0$ such that $\bar{\theta} \in B_\epsilon(\bar{\theta}) \subset N^c$, where $B_\epsilon(\bar{\theta}):= \{\theta \in \Theta \mid \lVert \theta-\bar{\theta}\rVert_2 < \epsilon\}$ denotes the $\epsilon$-ball centred at $\bar{\theta}$.
     It is sufficient to show that $\mathcal{H}^{d_\Theta-n}(\mathcal{O}_\tau^{\bar{r}} \cap B_\epsilon(\bar{\theta})) > 0$. 
     
    In the proof of Lemma \ref{lem:cor:hausdorff}, it was shown that $DG_\tau(\bar{\theta})$ has full rank. Without loss of generality, assume that $\{\frac{\partial G_\tau}{\partial \theta_i}\}_{i=1}^n$ are linearly independent; otherwise, simply permute the coordinates of $\Theta$. Partition $\bar{\theta}=((\bar{\theta}^1)^\top,(\bar{\theta}^2)^\top)^\top$ such that $\bar{\theta}^1 \in \mathbb{R}^n$ and $\bar{\theta}^2 \in \mathbb{R}^{d_\Theta-n}$. For the tabular parametrisation, $G_\tau$ is continuously differentiable on the open ball $B_\epsilon(\bar{\theta})$. The implicit function theorem \citep{analysis_reference_rudin} guarantees the existence of a $\delta$-ball $B_\delta(\bar{\theta}^2) \subset \mathbb{R}^{d_\Theta-n}$ and a unique continuously differentiable function $h:B_\delta(\bar{\theta}^2) \rightarrow \mathbb{R}^n$ satisfying $h(\bar{\theta}^2)=\bar{\theta}^1$ and $G_\tau((h(y)^\top,y^\top)^\top)=\bar{r}$ for all $y \in B_\delta(\bar{\theta}^2)  \subseteq \mathbb{R}^{d_\Theta-n}$. Consequently, by the continuity of $h$, the set
     $$U:= \{ (h(y)^\top,y^\top)^\top \mid y \in B_\delta(\bar{\theta}^2)\} \subseteq \mathcal{O}_\tau^{\bar{r}} \cap B_\epsilon(\bar{\theta})$$ 
     for a sufficiently small $\delta > 0$. 
     Furthermore, by \citet[Theorem 2.8 (ii)]{hausdorff_evans_2015}, 
     $\mathcal{H}^{d_\Theta-n}(U)\geq \lambda(B_\delta(\bar{\theta}^2)) > 0$, where $\lambda$ denotes the Lebesgue measure on $\mathbb{R}^{d_\Theta-n}$. Therefore, $\mathcal{H}^{d_\Theta-n}(\mathcal{O}_\tau^{\bar{r}} \cap N^c) \geq \mathcal{H}^{d_\Theta-n}(\mathcal{O}_\tau^{\bar{r}} \cap B_\epsilon(\bar{\theta})) \geq \mathcal{H}^{d_\Theta-n}(U) > 0$.
\end{proof}

 \subsection{Theoretical form of posterior under tabular \texorpdfstring{$Q_\theta$}{Qθ} and Gaussian likelihood}
 \label{apd:proofpsrl}

\begin{proof}[of Proposition \ref{prop:psrl_new}]
Firstly, rewrite $$\hat{p}_\epsilon(\theta \in E|\mathcal{D}_\tau) = \frac{\int_{E}p(\theta,\bar{r}_{1:n}|s_{1:n},a_{1:n})\mathrm{d}\theta}{p(\bar{r}_{1:n}|s_{1:n},a_{1:n})}$$
where
$$p(\theta,\bar{r}_{1:n}|s_{1:n},a_{1:n}) :=\hat{L}_\epsilon(\theta|\mathcal{D}_\tau) p^\Theta(\theta),$$
and
$$p(\bar{r}_{1:n}|s_{1:n},a_{1:n}) = \int_\Theta p(\theta,\bar{r}_{1:n}|s_{1:n},a_{1:n})\mathrm{d} \theta.$$
Recall that
$$\mathcal{L}^{\mathcal{D}_\tau} := \{\ell : \mathcal{S}^{\mathcal{D}_\tau} \rightarrow \mathcal{A} \mid \ell(s) \in \mathcal{A}_s \,\, \forall \, s \in \mathcal{S}^{\mathcal{D}_\tau}\}.$$
Since
$$E^\ell = \{\theta \in \Theta \mid \theta_{\nu(s,\ell(s))} = \max_{a \in \mathcal{A}_s} \theta_{\nu(s,a)} \,\,\forall \,s\in \mathcal{S}^{\mathcal{D}_{\tau}}\}, \quad \ell \in \mathcal{L}^{\mathcal{D}_\tau},$$
it is easy to see that $\bigcup_{\ell \in \mathcal{L}^{\mathcal{D}_\tau}} E^\ell = \Theta$; and for $\ell,\ell^\prime \in \mathcal{L}^{\mathcal{D}_\tau}$ such that $\ell \neq \ell^\prime$, we have $E^\ell \cap E^{\ell^\prime} = \emptyset$ $p^\Theta$-a.s. To see this, let 
$$E^\Theta = \{\theta \in \Theta \mid \theta_i \neq \theta_j \,\,\forall \, i,j \in \{1,\dots,d_{\Theta}\}, i \neq j\}.$$
It is clear that $p^\Theta(E^\Theta)=1$.

With the partition $\{E^\ell\}_{\ell \in \mathcal{L}^{\mathcal{D}_\tau}}$ of $\Theta$, we can now express $p(\theta,\bar{r}_{1:n},\theta \in E^\Theta|s_{1:n},a_{1:n})$ as:
\begin{align*}
&p(\theta,\bar{r}_{1:n},\theta \in E^\Theta|s_{1:n},a_{1:n}) \\
=& \Bigg[ \prod_{i=1}^n \sum_{\ell \in \mathcal{L}^{\mathcal{D}_\tau}} \mathcal{N}\Big(\bar{r}_i;\theta_{\nu(s_i,a_i)}-\sum_{s_i^\prime \in \mathcal{S}} p^S(s_i^\prime|s_i,a_i)\max_{a_i^\prime \in \mathcal{A}_{s_i^\prime}}\theta_{\nu(s_i^\prime,a_i^\prime)},\epsilon^2\Big) \mathbbm{1}(\theta \in E^{\ell})\Bigg] p^\Theta(\theta) \mathbbm{1}(\theta \in E^\Theta)\\
=& \sum_{\ell \in \mathcal{L}^{\mathcal{D}_\tau}} \Big[ \prod_{i=1}^n \mathcal{N} \Big(\bar{r}_i;\theta_{\nu(s_i,a_i)}- \sum_{s_i^\prime \in \mathcal{S}} p^S(s_i^\prime|s_i,a_i)\max_{a_i^\prime \in \mathcal{A}_{s_i^\prime}}\theta_{\nu(s_i^\prime,a_i^\prime)}, \epsilon^2 \Big) \mathbbm{1}(\theta \in E^{\ell}) \Big] p^\Theta(\theta)\mathbbm{1}(\theta \in E^\Theta) \\
=& \sum_{\ell \in \mathcal{L}^{\mathcal{D}_\tau}} [p(\theta,\bar{r}_{1:n},\theta \in E^{\ell}|s_{1:n},a_{1:n})]\mathbbm{1}(\theta \in E^\Theta).
\end{align*}

Next, rewrite the numerator of $\hat{p}_\epsilon(\theta \in E|\mathcal{D}_\tau)$ using the partition:
\begin{align*}
   &  \int_{E} p(\theta,\bar{r}_{1:n}|s_{1:n},a_{1:n})\mathrm{d} \theta \\
    =& \int_{E} p(\theta,\bar{r}_{1:n}, \theta \in E^\Theta|s_{1:n},a_{1:n}) + p(\theta,\bar{r}_{1:n}, \theta \in (E^\Theta)^c|s_{1:n},a_{1:n})\mathrm{d} \theta \\
    =& \int_{E} \sum_{\ell \in \mathcal{L}^{\mathcal{D}_\tau}} [p(\theta,\bar{r}_{1:n},\theta \in E^{\ell}|s_{1:n}, a_{1:n})]\mathbbm{1}(\theta \in E^\Theta) \mathrm{d}\theta \\
    =& \sum_{\ell \in \mathcal{L}^{\mathcal{D}_\tau}} \int_{E \cap E^\ell \cap E^\Theta} p(\theta,\bar{r}_{1:n}|s_{1:n},a_{1:n}) \mathrm{d} \theta \\
    =& \sum_{\ell \in \mathcal{L}^{\mathcal{D}_\tau}} \int_{E \cap E^\ell} p(\theta,\bar{r}_{1:n}|s_{1:n},a_{1:n}) \mathrm{d} \theta,
\end{align*}
and similarly, for the denominator,
\begin{equation*}
   p(\bar{r}_{1:n}|s_{1:n},a_{1:n})   = \sum_{\ell \in \mathcal{L}^{\mathcal{D}_\tau}} \int_{E^\ell} p(\theta,\bar{r}_{1:n}|s_{1:n},a_{1:n}) \mathrm{d} \theta.
\end{equation*}

We now define auxiliary distributions that can help us to evaluate the integrals. 

Consider an auxiliary joint distribution $p^{\ell}$ as follows:
$$p^{\ell}(\theta,\bar{r}_{1:n}) := \prod_{i=1}^n \mathcal{N} \Big(\bar{r}_i;\theta_{\nu(s_i,a_i)} - 
\sum_{s_i^\prime \in \mathcal{S}} p^S(s_i^\prime|s_i,a_i) \theta_{\nu(s_i^\prime,\ell(s_i^\prime))},\epsilon^2 \Big) p^\Theta(\theta),$$
with the conditional distribution $\bar{r}_{1:n}|\theta;\ell \sim \mathcal{N}(\bar{r}_{1:n}; B^\ell \theta, \epsilon^2 I)$.

Thus, $(\theta^\top,\bar{r}_{1:n}^\top)^\top$ are jointly Gaussian under $p^\ell$, i.e.
\begin{equation}
\begin{pmatrix} \theta \\ \bar{r}_{1:n} \end{pmatrix} \sim \mathcal{N}\Bigg(0, \begin{pmatrix} \sigma^2I_{d_{\Theta}} & \sigma^2{B^{\ell}}^\top \\ \sigma^2 B^\ell & \sigma^2 B^\ell {B^\ell}^\top + \epsilon^2I_n\end{pmatrix}\Bigg),
\label{eqn:jointgaussian:apdproofpsrl}
\end{equation}
and by standard multivariate Gaussian conjugacy result, the posterior is of the form:
$$\theta|\bar{r}_{1:n} \sim \mathcal{N}(\sigma^2 {B^\ell}^\top(\sigma^2 B^\ell {B^\ell}^\top + \epsilon^2 I_n)^{-1} \bar{r}_{1:n}, \sigma^2 I_{d_{\Theta}} - \sigma^4 {B^\ell}^\top(\sigma^2 B^\ell {B^\ell}^\top + \epsilon^2 I_n)^{-1} B^\ell).$$

By construction, $$\int_{E \cap E^\ell} p(\theta,\bar{r}_{1:n}|s_{1:n},a_{1:n}) \mathrm{d}\theta = \int_{E \cap E^\ell} p^\ell(\theta,\bar{r}_{1:n}) \mathrm{d} \theta = p^\ell(\bar{r}_{1:n}) p^\ell(\theta \in E \cap E^\ell|\bar{r}_{1:n}),$$
and likewise,
$$\int_{E^\ell} p(\theta,\bar{r}_{1:n}|s_{1:n},a_{1:n}) \mathrm{d}\theta = p^\ell(\bar{r}_{1:n}) p^\ell(\theta \in E^\ell|\bar{r}_{1:n}).$$

While the marginal $p^\ell(\bar{r}_{1:n})$ can be read off from the joint Gaussian distribution in Equation \ref{eqn:jointgaussian:apdproofpsrl},   $p^\ell(\theta \in E^\ell|\bar{r}_{1:n})$ can be evaluated by observing that it is simply a multivariate Gaussian cumulative distribution function, which can be approximated by suitable Monte-Carlo methods. The same argument holds for $p^\ell(\theta \in E  \cap E^\ell|\bar{r}_{1:n})$ for simple $E$ where $E \cap E^\ell \neq \emptyset$, e.g.,
$E=\ \{\theta \in \Theta \mid\theta_{\nu(s,a)} \leq\theta_{\nu(s,\mu(s))}  \,\, \forall \, s \in \mathcal{S}, a \in \mathcal{A}_s\}$ for computing the probability that a policy $\mu:\mathcal{S} \rightarrow \mathcal{A}$ is optimal. Hence, we can now evaluate $\int_{E} p(\theta,\bar{r}_{1:n}|s_{1:n},a_{1:n})\mathrm{d} \theta$ and $ p(\bar{r}_{1:n}|s_{1:n},a_{1:n})$ , and hence, $\hat{p}_\epsilon(\theta \in E|\mathcal{D}_\tau)$. 

Thus, the overall form of the posterior of interest is:
\begin{equation}
\hat{p}_\epsilon(\theta \in E|\mathcal{D}_\tau) = \frac{\sum_{\ell \in \mathcal{L}^{\mathcal{D}_\tau}} p^\ell(\bar{r}_{1:n}) p^\ell(\theta \in E \cap E^\ell|\bar{r}_{1:n})}{\sum_{\ell \in \mathcal{L}^{\mathcal{D}_\tau}} p^\ell(\bar{r}_{1:n}) p^\ell(\theta \in E^\ell|\bar{r}_{1:n})}.
\end{equation}
This completes the proof.
 \end{proof}

 \subsection{Likelihood Unidentifiability for MDPs which contain non-goal recurrent states}
 \subsubsection{Proof of Lemma \ref{lem:loopthm}}\label{apd:nondecaylkhproof}
We first present some supporting results and definitions.
 \begin{lemma}
 Suppose Assumption \ref{ass:boeunique1} holds. Non-goal recurrent states exist if and only if improper policies exist.
\label{lem:nongoalrecurstate}
 \end{lemma}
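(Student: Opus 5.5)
We prove the two directions separately; both come down to the definitions of proper policy and of a recurrent non-goal state, together with standard finite Markov chain facts.

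\emph{Improper policies give a recurrent non-goal state.} Let $\mu$ be an improper deterministic stationary policy. By definition there is $s_0$ with $\lim_{t\to\infty} p^\mu(S_t=s^g\mid S_0=s_0)<1$; note this limit exists because $s^g$ is absorbing under Assumption \ref{ass:boeunique1}, so $t\mapsto p^\mu(S_t=s^g\mid S_0=s_0)$ is nondecreasing. Consider the finite-state Markov chain with kernel $p^S(\cdot\mid s,\mu(s))$. Every finite chain, from any starting state, is absorbed with probability one into the union of its closed communicating (recurrent) classes. The class $\{s^g\}$ is one of them. If it were the only one, the chain would reach $s^g$ almost surely from $s_0$, which is a contradiction. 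Hence there is a closed recurrent class $\mathcal{C}$ with $s^g\notin\mathcal{C}$. Any $s^r\in\mathcal{C}$ then satisfies $s^r\neq s^g$ and $p^\mu(S_t=s^r \text{ for some } t\geq 1\mid S_0=s^r)=1$, which is the definition of a recurrent non-goal state under $\mu$. This is essentially the argument already used at the end of the proof of Lemma \ref{lem:cor:hausdorff}.

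\emph{A recurrent non-goal state gives an improper policy.} Suppose $s^r$ is a recurrent non-goal state under some deterministic $\mu$. Let $T_1,T_2,\dots$ be the successive return times to $s^r$. By the strong Markov property, each $T_k$ is finite almost surely. Therefore $S_t=s^r\neq s^g$ for infinitely many $t$ almost surely. Since $s^g$ is absorbing, reaching it at any finite time would prevent all later visits to $s^r$. So $p^\mu(S_t=s^g\mid S_0=s^r)=0$ for all $t$, and the limit is $0<1$. Thus $\mu$ is improper.

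The only delicate point is the first direction: we must invoke the decomposition of a finite chain into transient states and closed recurrent classes, and note that absorption into recurrent classes occurs almost surely. We will cite a standard Markov chain reference for this, as the paper does. Everything else is immediate from the definitions.
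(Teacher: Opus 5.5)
Your proposal is correct and follows essentially the same route as the paper. Both directions rest on the decomposition of the finite chain under $\mu$ into transient states and closed recurrent classes, and your ``improper $\Rightarrow$ recurrent non-goal state'' direction is the contrapositive of the paper's reverse implication (all non-goal states transient implies almost-sure absorption in $s^g$); for the other direction you use successive return times and the absorption of $s^g$ instead of the closed communicating class containing $s^r$, which is a cosmetic difference, and your check that the limit in the definition of properness exists by monotonicity adds a little care the paper omits.
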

 \begin{proof}
{\it (Forward implication.)} Suppose a non-goal recurrent state exists under a deterministic policy $\mu$. In other words, $p^{\mu}(S_t=s^r \text{ for infinitely many } t|S_0=s^r)=1$. For finite state space Markov chains \citep{markovchain}, $s^r$ must belong to some {\it closed set $\mathcal{C}$ of communicating states} that does not include the absorbing state $s^g$. Also, closure implies the chain cannot exit $\mathcal{C}$ once it enters it. Thus, the policy $\mu$ cannot satisfy the definition of a proper policy (when starting in $\mathcal{C}$).

{\it (Reverse implication.)} Conversely, suppose every state $s \in \mathcal{S} \setminus \{s^g\}$ is transient (i.e., non-recurrent) under any deterministic policy $\mu$, that is $p^{\mu}(S_t=s \text{ for infinitely many } t|S_0=s)=0$ \citep{markovchain}.
Consequently, from any starting state, the chain will eventually reach the (absorbing) goal state with probability $1$ \citep{markovchain}, implying that all policies are proper.
\end{proof}

For each (time index) $t \in \mathbb{Z}_{\geq 1}$, let $\tilde{\mu}_t:\mathcal{S} \rightarrow \mathcal{A}$ be a decision rule. Define $\tilde{\mu}: \mathbb{Z}_{\geq 1} \times \mathcal{S} \rightarrow \mathcal{A}$ to be $\tilde{\mu}(t,s)=\tilde{\mu}_t(s)$. Let $\widetilde{M}$ denote the set of all such {\it time-dependent}, or non-stationary, policies $\tilde{\mu}$; and for any $\tau \in \mathbb{Z}_{\geq 1}$, let
$$
p^{\tilde{\mu}}(s_{1:\tau}, a_{1:\tau}|S_0=s_0, A_0=a_0)= \prod_{t=1}^{\tau}[p^S(s_t|s_{t-1},a_{t-1}) \mathbbm{1}(a_t \in \tilde{\mu}(t,s_t))].$$
Various marginal and conditional probabilities of $p^{\tilde{\mu}}$ are also needed in the proof.

Let the set of states $s \in \mathcal{S}$ that can lead to $s^r$ be denoted $\mathcal{C}^r$, that is $s\in \mathcal{C}^r $ if and only if there exists $a \in \mathcal{A}_s$ and $\tilde{\mu} \in \widetilde{M}$ such that 
$p^{\tilde{\mu}}(S_t=s^r \text{ for some } t | S_0=s, A_0=a) >0$.

Recall the definition of $u \in [0,1]^{d_{\Theta}}$ in Equation \ref{eqn:maxprob_u}: $$u_i=\max_{\tilde{\mu} \in \widetilde{M}} p^{\tilde{\mu}}(S_t=s^r \text{ for some } t > 0 |(S_0,A_0) = \nu^{-1}(i)).$$
Extend the dimension of $u$ by defining $u_{d_\Theta+1} \equiv 0$. Finally, let
$$\mathcal{O} = \Big\{\theta \in \Theta \,\Big| \,\argmax_{a^\prime \in \mathcal{A}_s} \theta_{\nu(s,a^\prime)} \cap \argmax_{a^\prime \in \mathcal{A}_s} u_{\nu(s,a^\prime)} \neq \emptyset, \, s \in \mathcal{C}^r\Big\}.$$ For $\theta \in \mathcal{O}$, a policy that acts greedily according to $\theta$ is  $\mu^\theta(s) \in \argmax_{a^\prime \in \mathcal{A}_s}\theta_{\nu(s,a^\prime)}$; note that there is more than one greedy policy corresponding to $\theta$ if the maximising actions are not unique. Similarly, a policy that acts greedily according to $u$ is $\mu^{u}(s) \in \argmax_{a^\prime \in \mathcal{A}_s}u_{\nu(s,a^\prime)}$. The definition of $\mathcal{O}$ implies that, for any $\theta$, at least one of its greedy policies  $\mu^\theta$ is identical to some greedy policy $\mu^u$ of $u$.

For any $\theta \in \mathcal{O}$ and $c>0$, $\theta + cu$ satisfies the requirement to belong to $\mathcal{O}$.

\begin{lemma}
For any $\theta \in \mathcal{O}$, $s \in \mathcal{S}$ and $c\geq 0$,
\begin{equation}
\max_{a \in \mathcal{A}_s} (\theta + cu)_{\nu(s,a)} = \max_{a \in \mathcal{A}_s} \theta_{\nu(s,a)} + c \max_{a \in \mathcal{A}_s} u_{\nu(s,a)} \label{eqn:lem:step1}.
\end{equation}
\label{lem:theta_plus_c_u}
\end{lemma}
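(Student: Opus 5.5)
The plan is to exhibit a single action that attains both maxima simultaneously; the identity then follows from two one-line inequalities. Fix $\theta \in \mathcal{O}$, $s \in \mathcal{S}$ and $c \geq 0$. The upper bound holds for every action: for each $a \in \mathcal{A}_s$,
\[
(\theta+cu)_{\nu(s,a)} = \theta_{\nu(s,a)} + c\,u_{\nu(s,a)} \leq \max_{a^\prime \in \mathcal{A}_s}\theta_{\nu(s,a^\prime)} + c \max_{a^\prime\in\mathcal{A}_s} u_{\nu(s,a^\prime)},
\]
using $c\geq 0$. Hence the left-hand side of \eqref{eqn:lem:step1} is at most the right-hand side for every $s$.

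For the reverse inequality I will split into the cases $s \in \mathcal{C}^r$ and $s \notin \mathcal{C}^r$.

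\emph{Case $s \in \mathcal{C}^r$.} By the definition of $\mathcal{O}$, there is an action $a^\star \in \argmax_{a} \theta_{\nu(s,a)} \cap \argmax_a u_{\nu(s,a)}$. Evaluating at $a^\star$ gives
\[
\max_a(\theta+cu)_{\nu(s,a)} \geq \theta_{\nu(s,a^\star)} + c\,u_{\nu(s,a^\star)},
\]
which equals the right-hand side.

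\emph{Case $s \notin \mathcal{C}^r$.} Here I will argue that $u_{\nu(s,a)} = 0$ for every $a \in \mathcal{A}_s$. By definition of $\mathcal{C}^r$, for every $a$ and every $\tilde{\mu}$ the probability of ever reaching $s^r$ from $(s,a)$ is zero, so each $u_{\nu(s,a)}$ is a maximum of zeros. For $s = s^g$ we use the convention $u_{d_\Theta+1} \equiv 0$, with $\mathcal{A}_{s^g}=\{a^g\}$. Both sides of \eqref{eqn:lem:step1} then reduce to $\max_a \theta_{\nu(s,a)}$, so equality is trivial.

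There is one point to check carefully: that the definitions of $u$ and $\mathcal{C}^r$ use the same event, namely "$S_t = s^r$ for some $t \geq 1$", so that $s \notin \mathcal{C}^r$ really forces $u_{\nu(s,\cdot)} \equiv 0$. This is the only potential obstacle. Once it holds, and since $c\ge 0$ was used only in the upper bound, the lemma follows.
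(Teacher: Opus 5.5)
Your proof is correct and follows the paper's argument. For $s\in\mathcal{C}^r$, the common maximiser supplied by $\mathcal{O}$ attains both maxima; the paper shows directly, using $c\ge 0$, that it maximises $\theta+cu$, which is equivalent to your upper/lower-bound split. For $s\notin\mathcal{C}^r$, all $u_{\nu(s,\cdot)}$ vanish.

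The point you flag is harmless. Whatever range of $t$ the definition of $\mathcal{C}^r$ uses, the event $\{S_t=s^r \text{ for some } t\ge 1\}$ defining $u$ is contained in it. So $s\notin\mathcal{C}^r$ forces $u_{\nu(s,a)}=0$ for every $a\in\mathcal{A}_s$.
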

\begin{proof}
{\it (Case 1.)} For $s \in \mathcal{C}^r$, let $a \in \argmax_{a^\prime \in \mathcal{A}_s} \theta_{\nu(s,a^\prime)} \cap \argmax_{a^\prime \in \mathcal{A}_s} u_{\nu(s,a^\prime)}$.

For any other $a^\prime \in \mathcal{A}_s$, where $a^\prime \neq a$, we have 
$$u_{\nu(s,a)} - u_{\nu(s,a^\prime)} \geq 0 \geq \theta_{\nu(s,a^\prime)} - \theta_{\nu(s,a)} \implies \,\, c( u_{\nu(s,a)} - u_{\nu(s,a^\prime)}) \geq \theta_{\nu(s,a^\prime)} - \theta_{\nu(s,a)},
$$ for  $c \geq 0$. This implies 
$$(\theta+cu)_{\nu(s,a)} - (\theta + cu)_{\nu(s,a^\prime)}  \geq 0,$$
and therefore, $a$ maximises $(\theta + cu)_{\nu(s,a)}$ and 
$$\max_{a^\prime \in \mathcal{A}_s} (\theta + cu)_{\nu(s,a^\prime)} = (\theta + cu)_{\nu(s,a)}  = \max_{a^\prime \in \mathcal{A}_s} \theta_{\nu(s,a^\prime)} + c \max_{a^\prime \in \mathcal{A}_s} u_{\nu(s,a^\prime)}.$$
Hence, Equation \ref{eqn:lem:step1} holds.

{\it (Case 2.)} For $s \notin \mathcal{C}^r$, by definition, we have $u_{\nu(s,a)} = 0$  for all $a \in \mathcal{A}_s$. Hence, 
Equation \ref{eqn:lem:step1} also holds.
\end{proof}

\begin{proof}[of Lemma \ref{lem:loopthm}]
We verify the given choices for $u$ and the subset of $\theta$ lead to the stated invariance of the likelihood $\hat{L}_{\epsilon}$. Note that, while it is possible to further tailor these definitions to the specific data set, the definitions presented below are applicable to all possible data sets $\mathcal{D}_\tau$ for simplicity.\newline

The likelihood function is
$$ \hat{L}_\epsilon(\theta|\mathcal{D}_\tau) = \prod_{(s,a) \in \mathcal{D}_\tau^{\mathcal{S},\mathcal{A}}} \mathcal{N}\Big(\bar{r}_{s,a};\theta_{\nu(s,a)} - \sum_{s^\prime \in \mathcal{S}} p^S(s^\prime|s,a) \max_{a^\prime \in \mathcal{A}_{s^\prime}} \theta_{\nu(s^\prime,a^\prime)}, \epsilon^2\Big).$$
Below, we show the following result: for any $c \geq 0$, $\theta \in \mathcal{O}$, 
\begin{equation}
(\theta + cu)_{\nu(s,a)} - \sum_{s^\prime \in \mathcal{S}} p^S(s^\prime|s,a) \max_{a^\prime \in \mathcal{A}_{s^\prime}} (\theta + cu)_{\nu(s^\prime, a^\prime)} = \theta_{\nu(s,a)} - \sum_{s^\prime \in \mathcal{S}} p^S(s^\prime|s,a) \max_{a^\prime \in \mathcal{A}_{s^\prime}} \theta_{\nu(s^\prime, a^\prime)}.
\label{eqn:lemma:loopthm_alg_2}
\end{equation}
Therefore, 
\begin{align*}
    \hat{L}_\epsilon(\theta|\mathcal{D}_\tau) =& \prod_{(s,a) \in \mathcal{D}_\tau^{\mathcal{S},\mathcal{A}}} \mathcal{N}\Big(\bar{r}_{s,a};(\theta + cu)_{\nu(s,a)} - \sum_{s^\prime \in \mathcal{S}} p^S(s^\prime|s,a) \max_{a^\prime \in \mathcal{A}_{s^\prime}} (\theta + cu)_{\nu(s^\prime, a^\prime)}, \epsilon^2 \Big) \\
    =& \hat{L}_\epsilon(\theta+cu|\mathcal{D}_\tau).
\end{align*}
\newline By Lemma \ref{lem:theta_plus_c_u}, showing Equation \ref{eqn:lemma:loopthm_alg_2} is equivalent to showing 
$$u_{\nu(s,a)} = \sum_{s^\prime \in \mathcal{S}} p^S(s^\prime|s,a) \max_{a^\prime \in \mathcal{A}_{s^\prime}} u_{\nu(s^\prime, a^\prime)}$$ for all $s \in \mathcal{S}, a \in \mathcal{A}_s$. To show this new identity, note that 
\begin{align*}
u_{\nu(s,a)} =& \max_{\tilde{\mu} \in \widetilde{M}} p^{\tilde{\mu}} (S_t=s^r \text{ for some } t \geq 1 | S_0=s, A_0=a) \\
=& \max_{\tilde{\mu} \in \widetilde{M}} \sum_{s^\prime \in \mathcal{S}} p^{\tilde{\mu}}(S_t=s^r \text{ for some } t \geq 1 | S_1=s^\prime, A_1 = \tilde{\mu}_1(s^\prime)) p^S(s^\prime|s,a) \\
=& \sum_{s^\prime \in \mathcal{S}} \max_{a^\prime \in \mathcal{A}_{s^\prime}} \max_{\substack{\tilde{\mu} \in \widetilde{M}\\ \tilde{\mu}_1(s^\prime) = a^\prime}} p^{\tilde{\mu}}(S_t=s^r \text{ for some } t \geq 1 | S_1=s^\prime, A_1 = a^\prime) p^S(s^\prime|s,a) \\
=& \sum_{s^\prime \in \mathcal{S}} \max_{a^\prime \in \mathcal{A}_{s^\prime}} \max_{\substack{\tilde{\mu} \in \widetilde{M}\\ \tilde{\mu}_1(s^\prime) = a^\prime}} \big(\mathbbm{1}(s^r = s^\prime) + (1-\mathbbm{1}(s^r = s^\prime))\\
&\quad \times p^{\tilde{\mu}}(S_t=s^r \text{ for some } t \geq 2 | S_1=s^\prime, A_1 =a^\prime)\big) p^S(s^\prime|s,a) \\
=& \sum_{s^\prime \in \mathcal{S}} \Big(\mathbbm{1}(s^r = s^\prime) + (1-\mathbbm{1}(s^r = s^\prime)) \max_{a^\prime \in \mathcal{A}_{s^\prime}} u_{\nu(s^\prime, a^\prime)}\Big) p^S(s^\prime|s,a).
\end{align*}
Since $\mathbbm{1}(s^r = s^\prime) \max\limits_{a^\prime \in \mathcal{A}_{s^\prime}} u_{\nu(s^\prime, a^\prime)} = \begin{cases} 1& \text{if } s^\prime = s^r \\ 0 & \text{otherwise} \end{cases}$,  we have the result.
\end{proof}

\subsubsection{Proof of Lemma \ref{lem:propernou}}\label{apd:lem:propernou}

\begin{proof}
    Recall that 
    $$\hat{L}_\epsilon(\theta|\mathcal{D}^{\mathrm{full}}) = \prod_{(s,a,\bar{r}_{s,a}) \in \mathcal{D}_\tau^{\mathrm{full}}} \mathcal{N}\Big(\bar{r}_{s,a};\theta_{\nu(s,a)} - \sum_{s^\prime \in \mathcal{S}} p^S(s^\prime|s,a) \max_{a^\prime \in \mathcal{A}_{s^\prime}} \theta_{\nu(s^\prime,a^\prime)}, \epsilon^2\Big).$$

    For any $s \in \mathcal{S}$, let 
    $$\mu^\theta(s) \in \Big\{a \in \mathcal{A}_s \, \Big| \, a \in \argmax_{a^\prime \in \mathcal{A}_{s}}Q_\theta(s,a^\prime) \text{ and } u_{\nu(s,a)} \geq u_{\nu(s,\bar{a})} \, \forall \, \bar{a} \in \argmax_{a^\prime \in \mathcal{A}_s} Q_\theta(s,a^\prime)\Big\}.$$
    Therefore, $\mu^\theta$ is a greedy policy induced by $\theta$.

    Let $u_{\nu(s^g,a^g)}:= 0$. Suppose $\hat{L}_\epsilon(\theta |\mathcal{D}^{\mathrm{full}}) = \hat{L}_\epsilon(\theta+cu|\mathcal{D}^{\mathrm{full}})$ for all $c>0$.  Then, for any $s \in \mathcal{S} \setminus \{s^g\}$ and $a \in \mathcal{A}_s$, the invariance implies that

    \begin{align}
    Q_\theta(s,a) - \sum_{s^\prime \in \mathcal{S} \setminus \{s^g\}} &p^S(s^\prime|s,a) \max_{a^\prime \in \mathcal{A}_{s^\prime}} Q_\theta(s^\prime,a^\prime) \nonumber \\
    =& Q_\theta(s,a) + cu_{\nu(s,a)} -\sum_{s^\prime \in \mathcal{S} \setminus \{s^g\}} p^S(s^\prime|s,a) \max_{a^\prime \in \mathcal{A}_{s^\prime}} (Q_\theta(s^\prime,a^\prime)+cu_{\nu(s^\prime,a^\prime)}). \label{eqn:lem_llhinv}
\end{align}

Pick a sufficiently small $c>0$, such that for any $s \in \mathcal{S} \setminus \{ s^g\}$, $\max_{a \in \mathcal{A}_{s}} (Q_\theta(s,a)+cu_{\nu(s,a)}) = Q_\theta(s,\mu^\theta(s)) + cu_{\nu(s,\mu^\theta(s))}$. Then, rearranging Equation \ref{eqn:lem_llhinv} yields
\begin{equation}
u_{\nu(s,a)} = \sum_{s^\prime \in \mathcal{S} \setminus \{s^g\}} p^S(s^\prime|s,a) u_{\nu(s^\prime,\mu^\theta(s^\prime))}, \quad a \in \mathcal{A}_s, \label{eqn:lem:propernou_eq1}
\end{equation}
and furthermore,
\begin{equation}
u_{\nu(s,\mu^\theta(s))} = \sum_{s^\prime \in \mathcal{S} \setminus \{s^g\}} p^S(s^\prime|s,\mu^\theta(s)) u_{\nu(s^\prime,\mu^\theta(s^\prime))}. \label{eqn:lem:propernou_eq2}
\end{equation}

Let $P \in \mathbb{R}^{|\mathcal{S}| \times |\mathcal{S}|}$ and $\tilde{u} \in \mathbb{R}^{|\mathcal{S}|}$ such that, for a state space labelled as $\mathcal{S}=\{s^1,\dots,s^K\}$ with $s^K=s^g$ and $K=|\mathcal{S}|$, $P_{ij}:=p^S(s^j|s^i,\mu^\theta(s^i))$ and $\tilde{u}_i:=u_{\nu(s^i,\mu^\theta(s^i))}$ for $i,j \in \{1,\dots,K\}$. By Equation \ref{eqn:lem:propernou_eq2}, we have
$$\tilde{u} = P \tilde{u}=P^n\tilde{u},\quad n \in \mathbb{Z}_{\geq 0}.$$
Since $\mu^\theta$ is proper, $\lim_{n \rightarrow \infty}P^n_{ij}=\mathbbm{1}(j=K)$. Thus, $u_{\nu(s,\mu^\theta(s))}=0$ for all $s \in \mathcal{S} \setminus \{s^g\}$. This, together with Equation \ref{eqn:lem:propernou_eq1}, implies that $u_{\nu(s,a)} = 0$ for all $s \in \mathcal{S} \setminus \{s^g\}$, $a \in \mathcal{A}_s$.
\end{proof}

 \subsubsection{Theorem \ref{thm:unboundedlh}}\label{apd:unboundedlhproof}
 
The idea of the proof of $(i)$ is to show that there exists a hypercube in $\mathcal{O}^\phi$, and we can translate the hypercube in the direction of $u$ so that the translation remains in $\mathcal{O}^\phi$. Subsequently, we can construct an infinite number of disjoint hypercubes so that the approximated likelihood function shares the same positive lower bound within each of them. On the other hand, to prove $(ii)$, the idea is to rewrite the integral of the approximated likelihood as a Gaussian integral with respect to $\theta$ when $\theta \in \mathcal{O}^\mu$, and show that the covariance matrix of the Gaussian integral is positive definite if $\mu$ is proper. Note that the same proof can be tailored to show that if $\mu$ is improper, the approximated likelihood is unbounded. However, in $(i)$, we use a slightly different but more general approach that works for a broader class of approximation kernels in the construction of the approximated likelihood.

\begin{proof}[of Theorem \ref{thm:unboundedlh} $(i)$]    
Firstly, pick $\bar{\theta}\in \mathcal{O}^\phi$ such that $\bar{\theta}_{\nu(s,a)} \neq \bar{\theta}_{\nu(s,a^\prime)}$ $\forall s \in \mathcal{S}$, $\forall a,a^\prime \in \mathcal{A}_s$ where $a \neq a^\prime$. This is possible because $\Theta = \mathbb{R}^{d_\Theta}$.

Next, let $$2c:=\min\limits_{\substack{s \in \mathcal{S}\\ a,a^\prime \in \mathcal{A}_s\\ a \neq a^\prime}} \bar{\theta}_{\nu(s,a)} - \bar{\theta}_{\nu(s,a^\prime)} > 0$$
and define the hypercube
$$B_c^{\bar{\theta}} := \{\theta \in \Theta \mid \bar{\theta}_{\nu(s,a)} - c \leq \theta_{\nu(s,a)} \leq \bar{\theta}_{\nu(s,a)} \,\, \forall s\in \mathcal{S}, a \in \mathcal{A}_s\}.$$
    
We now show that $B_c^{\bar{\theta}+ku} \subseteq \mathcal{O}^\phi$ for any $k \geq 0$. Let $\theta \in B_c^{\bar{\theta}+ku}$ for some $k \geq 0$. Then, for any $s \in \mathcal{S}$, $a \in \mathcal{A}_s$,
        $$\bar{\theta}_{\nu(s,a)} + ku_{\nu(s,a)} - c  \leq  \theta_{\nu(s,a)} \leq  \bar{\theta}_{\nu(s,a)} + ku_{\nu(s,a)}.$$
Hence, by the definition of $B_c^{\bar{\theta}+ku}$ and $\phi$, 
$$\theta_{\nu(s,\phi(s))} - \theta_{\nu(s,a)} \geq (\bar{\theta}_{\nu(s,\phi(s))} - \bar{\theta}_{\nu(s,a)}) + k (u_{\nu(s,\phi(s))} - u_{\nu(s,a)}) - c \geq 2c + 0 - c > 0.$$
Thus, $\theta \in \mathcal{O}^\phi$.

Next, as $B_c^{\bar{\theta}+ku}$ is compact and $\hat{L}_\epsilon(\cdot|\mathcal{D}_\tau)$ is continuous, it follows from the extreme value theorem that $\hat{L}_\epsilon$ is bounded within  $B_c^{\bar{\theta}+ku}$ below from $0$ for any $k \geq 0$. Let $\ell^*=\inf_{\theta \in B_c^{\bar{\theta}}} \hat{L}_\epsilon(\theta|\mathcal{D}_\tau) > 0$. 

We now show two properties of $B_c^{\bar{\theta}+ku}$: (1) $\forall j,j^\prime \in \mathbb{Z}_{\geq 0}$ such that $j\neq j^\prime$, $B_c^{\bar{\theta}+(2jc)u} \cap B_c^{\bar{\theta}+(2j^\prime c)u} = \emptyset$; (2) $\min_{\theta \in B_c^{\bar{\theta} + ku}} \hat{L}_\epsilon(\theta|\mathcal{D}_\tau) = \ell^*$ $\forall k \geq 0$.

To show (1), if $\theta \in B_c^{\bar{\theta} + 2jcu}$ and $\theta^\prime \in B_c^{\bar{\theta} + 2j^\prime c u}$  and w.l.o.g. $j < j^\prime$, since $u_{\nu(s^r,\phi(s^r))} = 1$, we have
\begin{align*}
\theta_{\nu(s^r,\phi(s^r))} &\leq \bar{\theta}_{\nu(s^r,\phi(s^r))} + 2jcu_{\nu(s^r,\phi(s^r))} \\
&=  \bar{\theta}_{\nu(s^r,\phi(s^r))}+2jc  \\
&< \bar{\theta}_{\nu(s^r,\phi(s^r))} +(2(j+1))cu_{\nu(s^r,\phi(s^r))} - c \\
&\leq \theta^\prime_{\nu(s^r,\phi(s^r))}.
\end{align*}
Therefore $ B_c^{\bar{\theta} + 2jcu} \cap  B_c^{\bar{\theta} + 2j^\prime cu}=\emptyset$.

To show (2), if $\theta \in B_c^{\bar{\theta}}  \subseteq \mathcal{O}^\phi$, then $\theta + ku \in B_c^{\bar{\theta}+ku}$ and $\hat{L}_\epsilon(\theta + ku|\mathcal{D}_\tau) = \hat{L}_\epsilon(\theta|\mathcal{D}_\tau)$ by Lemma \ref{lem:loopthm}, implying that $\min_{\theta \in B_c^{\bar{\theta}+ku}} \hat{L}_\epsilon(\theta|\mathcal{D}_\tau) \leq \min_{\theta \in B^{\bar{\theta}}} \hat{L}_\epsilon(\theta|\mathcal{D}_\tau)$. Conversely, if $\theta^\prime \in B_c^{\bar{\theta}+ku}$, then $\theta^\prime - ku \in B^{\bar{\theta}}_c \subseteq \mathcal{O}^\phi$ and $\hat{L}_\epsilon(\theta^\prime - ku|\mathcal{D}_\tau) = \hat{L}_\epsilon(\theta^\prime|\mathcal{D}_\tau)$, concluding that $\min_{\theta \in B_c^{\bar{\theta}+ku}} \hat{L}_\epsilon(\theta|\mathcal{D}_\tau) \geq \min_{\theta \in B^{\bar{\theta}}} \hat{L}_\epsilon(\theta|\mathcal{D}_\tau)$. Therefore $\min_{\theta \in B_c^{\bar{\theta} + ku}} \hat{L}_\epsilon(\theta|\mathcal{D}_\tau) = \ell^*$.

Finally, it is clear that $B_c^{\bar{\theta}+2ju} \subseteq \mathcal{O}^\phi$ for any $j \in \mathbb{Z}_{\geq 0}$. Therefore, for any $J \in \mathbb{Z}_{\geq 0}$,
$$\int_{\mathcal{O}^\phi} \hat{L}_\epsilon(\theta|\mathcal{D}_\tau)\mathrm{d} \theta \geq \int_{\bigcup_{j=0}^J B_c^{\bar{\theta} + 2ju}} \hat{L}_\epsilon(\theta|\mathcal{D}_\tau) \mathrm{d} \theta \geq J\ell^*$$
by property (1) and (2). Thus, the RHS diverges as $J \rightarrow \infty$.
\end{proof}

\begin{proof}[of Theorem \ref{thm:unboundedlh} $(ii)$]
Given that $\theta \in \mathcal{O}^\mu$, we rewrite the likelihood function in ``standard'' Gaussian form as follows:
\begin{align*}
    &\hat{L}_\epsilon(\theta|\mathcal{D}^{\mathrm{full}}) \\
    &= \prod_{(s,a) \in \mathcal{D}_\tau^{\mathcal{S},\mathcal{A}}} \mathcal{N}\Bigg(\bar{r}_{s,a};\theta_{\nu(s,a)}  - \sum_{s^\prime \in \mathcal{S} \setminus \{s^g\}} p^S(s^\prime|s,a) \max_{a^\prime \in \mathcal{A}_{s^\prime}} \theta_{\nu(s^\prime, a^\prime)}, \epsilon^2\Bigg)\\
    &= \prod_{(s,a) \in \mathcal{D}_\tau^{\mathcal{S},\mathcal{A}}} \frac{1}{\sqrt{2\pi\epsilon^2}}\exp\Bigg(-\frac{1}{2\epsilon^2}\Bigg(\bar{r}_{s,a} - \Big(\theta_{\nu(s,a)} - \sum_{s^\prime \in \mathcal{S} \setminus \{s^g\}} p^S(s^\prime|s,a) \max_{a^\prime \in \mathcal{A}_{s^\prime}} \theta_{\nu(s^\prime, a^\prime)}\Big)\Bigg)^2\Bigg)\\
    &= \beta  \exp\Bigg(-\frac{1}{2\epsilon^2} (\theta-\theta_0)^\top A(\theta-\theta_0)\Bigg) 
\end{align*}
for some constant $\theta_0 \in \mathbb{R}^{d_\Theta}$, $\beta>0$, and $A=\sum_{j=1}^{d_\Theta} b_j b_j^\top \in \mathbb{R}^{d_\Theta \times d_\Theta}$. Here, $b_j \in \mathbb{R}^{d_\Theta}$ and for any $i \in \{1,\dots,d_\Theta\}$, satisfying
$$(b_j)_i = \mathbbm{1}(j=i) - \sum_{s^\prime \in \mathcal{S}\setminus \{s^g\}} \mathbbm{1} (\nu(s^\prime, \mu(s^\prime)) = i) p^S(s^\prime|\nu^{-1}(j)).$$

Next, we show that the matrix $A$ is positive definite. Define $w_i \in \mathbb{R}^{d_\Theta}$ such that $(w_{i})_j=(b_j)_i$ for all $i,j \in \{1,\dots,d_\Theta\}$. Let $\alpha \in \mathbb{R}^{d_\Theta}$. Then,
\begin{align*}
    \sum_{i=1}^{d_\Theta}\alpha_i w_i = 0 &\iff \sum_{i=1}^{d_\Theta} \alpha_i (b_{j})_i = 0 \quad \forall \, j\in \{1,\dots,d_{\Theta}\}\\
    &\iff \sum_{i=1}^{d_\Theta} \alpha_i \Bigg(\mathbbm{1}(j=i) - \sum_{s^\prime \in \mathcal{S} \setminus \{s^g\}} \mathbbm{1} (\nu(s^\prime, \mu(s^\prime)) = i) p^S(s^\prime|\nu^{-1}(j))\Bigg) = 0 \quad \forall \, j\\
    &\iff \alpha_j -\sum_{s^\prime \in \mathcal{S} \setminus \{s^g\}} \alpha_{\nu(s^\prime,\mu(s^\prime))} p^S(s^\prime|\nu^{-1}(j))=0 \quad \forall \, j.
\end{align*}
By setting $Q(s,a):= \alpha_{\nu(s,a)}$ in Lemma \ref{lem:bellmanfirstpart}, it follows that $\alpha=0$  is the unique solution to $\sum_{i=1}^{d_\Theta}\alpha_i w_i = 0$. Therefore, $\{w_i\}_{i=1}^{d_\Theta}$ is linearly independent, and so is  $\{b_j\}_{j=1}^{d_\Theta}$  (to see this, a $d_\Theta \times d_\Theta$ matrix with row vectors $\{w_i\}_{i=1}^{d_\Theta}$ is invertible if and only if its column vectors span $\mathbb{R}^{d_\Theta}$.). Thus, for any $x \in \mathbb{R}^{d_\Theta}$, $x^\top Ax=\sum_{j=1}^{d_\Theta} x^\top b_j b_j^\top x = \sum_{j=1}^{d_\Theta} (b_j^\top x)^2 \geq 0$. In addition, because $\{b_{j}\}_{j=1}^{d_\Theta}$ is linearly independent, if $x \neq 0$, $\exists \, j \in \{1,\dots,d_\Theta\}$ such that $b_j^\top x \neq 0$, implying that $x^\top Ax > 0$. Thus, $A$ is positive definite.

Therefore,
$$\int_{\mathcal{O}^\mu} \hat{L}_\epsilon(\theta|\mathcal{D}^{\mathrm{full}})\mathrm{d} \theta \leq \beta \int_{\mathbb{R}^{d_\Theta}} \exp\Bigg(-\frac{1}{2\epsilon^2} (\theta-\theta_0)^\top A (\theta-\theta_0)\Bigg)\mathrm{d} \theta = \beta \sqrt{ \frac{(2\pi)^{d_\Theta}}{\det{A}}} < \infty.$$
This completes the proof.

(To remark, the proof can be straightforwardly extended to any data sets $\mathcal{D}_\tau \supseteq \mathcal{D}^{\mathrm{full}}$, by noting that the resulting matrix $A$ can be decomposed as a sum of the positive definite $\sum_{j=1}^{d_\Theta} b_j b_j^\top$ and a positive semi-definite matrix.)
\end{proof}

\section{Further details and supplementary numerical examples}\label{apd:sec:numerical}
\subsection{Details for the MDP environments and computation}\,\\

\textbf{\texttt{deep sea} and its variants:} Following the convention in \citet{bsuite, bootstraposband}, to prevent any systematic bias in the algorithms, the two actions of each state are randomly assigned \textit{left} and \textit{right} at the beginning of each run and fixed for all episodes. In all our experiments, the rewards are deterministic and are based on the corresponding states and the directions \textit{left} and \textit{right} selected, and thus are independent of the randomised action mapping. For numerical studies with stochastic state transitions (see Appendix \ref{apd:sec:additionalnumerial}), following \citet{bsuite}, the reward depends only on the current state and the selected action and is independent of the next state. 

For \texttt{deep sea swirl}, the set-up can potentially be generalised to a depth grid as follows: for every third row beginning from the second row from the top, redirect two (or multiples of two) of the connections to connect to their neighbouring states on the same row instead of the states below; in addition, augment the connections on the next corresponding rows to ensure that each state admits two actions (for convenient implementation) and that all states remain connected.

For any admissible deterministic stationary policy $\mu$ of a given MDP, let
$$\mathcal{E}^\mu:= \{\mu^\prime :\mathcal{S} \rightarrow \mathcal{A} \mid \mu^\prime \text{ is admissible and deterministic stationary and } p^\mu \equiv p^{\mu^\prime}\},$$ be the collection of policies, i.e., an equivalence class, that induces the same joint probability distribution over the trajectories generated during an episode of the MDP as $\mu$. For MDPs with deterministic transitions and a degenerate initial state distribution, it is sufficient to compute the posterior probability only over the equivalence classes because all policies within each class admit identical routes. Thus, in Figure \ref{fig:deepsea_policy_cvg} and similar figures in Appendix \ref{apd:sec:additionalnumerial} that plot the path optimal probabilities, each line represents the path optimal probability of selecting any policy within a particular class.

\textbf{\textit{Bayes-BR}:} 
We evaluate the posterior distribution characterising the optimality of the equivalence classes of (deterministic and stationary) policies using Equation \ref{eq:ts_mdp}. To do this, for each class, we compute $\hat{p}_\epsilon(\theta \in E|\mathcal{D}_{\tau})$ in Equation \ref{eqn:gaussianintegralformula} by setting $E:=\bigcup_{\mu^\prime \in \mathcal{E}^\mu}\{\theta \in \Theta \mid \theta_{\nu(s,\mu^\prime(s))}\geq \theta_{\nu(s,a)}, s\in \mathcal{S}, a \in \mathcal{A}_s\}$, and iterate over all classes. In practice, this integral can be simplified further by applying a linear transformation to the variable $\theta$ so that the inequality constraints become rectangular boundaries. Similarly, to find the marginal optimal probability of an action $a \in \mathcal{A}_s$ of state $s \in \mathcal{S} \setminus \{s^g\}$, set $E:=\{\theta \in \Theta \mid \theta_{\nu(s,a)}\geq \theta_{\nu(s,a^\prime)}, s\in \mathcal{S}, a^\prime \in \mathcal{A}_s\}$, followed by a linear transformation of the parameter. The denominator can either be found by self-normalisation (by computing the entire probability mass function) or by computing the Gaussian integral over every $E^{\ell}$ and $\ell \in \mathcal{L}^{\mathcal{D}_\tau}$ to obtain the normalising constant directly.

\begin{sloppypar}
To evaluate the Gaussian integrals, for any examples with depth-$3$ or depth-$4$, the conventional numerical estimation of Gaussian cumulative distribution functions is used---\texttt{scipy.stats.multivariate\_normal.cdf} (version 1.16.3 in Python)---with default accuracy settings. For experiments with depth-$5$, a Sobol sequence of size $2^{15}$ is used to construct a quasi Monte-Carlo estimate \citep{qmc_caflisch_1998}, following the implementation of \texttt{scipy.stats.qmc.MultivariateNormalQMC} version 1.16.3 in Python. This allows for a more efficient estimation of the Gaussian integrals \citep{scipy}.
\end{sloppypar}

Note that the number of policy classes increases exponentially with the depth of \texttt{deep sea} in deterministic transition settings, and with the number of states in stochastic transition settings. This combinatorial complexity is inherited by \textit{Bayes-BR} due to its reliance on method $(A)$ in Algorithm \ref{alg:onlinedegen1}. It is therefore of particular interest to investigate the application of more scalable Monte Carlo methodologies, such as HMC.

\textbf{\textit{Bayes-BR-HMC}:} 
In each episode, $2000$ iterations of Hamiltonian Monte Carlo are performed; see, e.g., \citet{hmcneal} for the algorithm. The hyper-parameter tuning scheme proceeds as follows: a target acceptance probability is set, for which we use $0.7$ throughout, and the initial step-size is manually calibrated to ensure convergence toward this target during the initial episode; see Table \ref{tab:hmc_stepsize}.

\begin{table}[h]
    \centering
    \caption{HMC step-size used for each tolerance $\epsilon$.}
    \label{tab:hmc_stepsize}
    \begin{tabular}{|c|c|c|c|c|c|c|}
        \hline
         $\epsilon$ &  $0.01$ & $0.1$ & $0.2$ & $0.5$ & $1.0$ & $5.0$\\
        \hline
        step-size & $0.0001$ & $0.005$ & $0.01$ & $0.03$ & $0.05$ & $0.1$   \\
        \hline
    \end{tabular}
\end{table}

Each HMC run begins with a warm-up phase consisting of $1000$ iterations, split into $10$ windows of $100$ iterations each. Each iteration is executed with $10$ leapfrog steps. The average acceptance probability is computed for each warm-up window. If the average acceptance probability exceeds the target, the step-size is increased by a factor of $1.3$ for the next window; conversely, if it is below the target, the step-size is decreased by a factor of $0.7$ for the next window. During the warm-up phase of each episode, once the acceptance probability crosses the target threshold within a warm-up window, the decrease factor is refined and set to $0.8$ and the increase factor to $1.1$, respectively, until the warm-up phase ends.

Following the warm-up, the last step-size that yields an acceptance probability above the target is fixed for the subsequent $1000$ iterations, of which the final sample is used to construct the policy. This step-size is carried forward to initialise the HMC sampler for the posterior of the subsequent episode. Additionally, a diagonal preconditioning matrix is estimated by computing the inverse empirical variance for each dimension of the last $500$ samples, to be used in the HMC sampler for the subsequent episode.

\textbf{\textit{Bayes-TD}-based methods:} As each likelihood of the sequence associated with \textit{Bayes-TD} is simply a linear Gaussian distribution with respect to $\theta$, the posterior is also a Gaussian, where its mean and covariance can be computed via standard Gaussian conjugacy results (see, e.g., Appendix \ref{apd:proofpsrl}). This applies similarly to \textit{Bayes-TD-Max}, where, at episode $i$, the posterior mean $(m^{i-1})$ over $\max_{a^\prime \in \mathcal{A}_{s^\prime}} \theta_{\nu(s^\prime,a^\prime)}$ is approximated by Monte Carlo samples from the posterior obtained in the previous episode. $100000$ and $10000$ Monte Carlo samples are used in the deterministic transition setting and the stochastic transition setting (see Appendix \ref{apd:sec:additionalnumerial}), respectively. For \textit{Bayes-TD-En}, $100$ and $10$ mixtures are used in the deterministic transition setting and the stochastic transition setting, respectively. Through standard conjugacy results, \textit{Bayes-TD-En} has a mixture of Gaussian posterior distribution due to its mixture of Gaussian likelihood, where the mixing weights are proportional to the density of the marginal (Gaussian) distribution of the temporal difference target, i.e., the sum of the reward and the estimated maximum value term.

\subsection{Additional numerical studies}\label{apd:sec:additionalnumerial}

\textbf{More on \textit{HMC}.} We re-run \textit{HMC} for the deterministic depth-$5$ \texttt{deep sea} example for the $\epsilon=0.01$ and $\epsilon =0.1$ cases with the following settings: During a warm-up phase of $10$ windows, the step-size for the next window is reduced by a factor of $0.5$ (instead of $0.7$) before the $70\%$ target acceptance rate is crossed, and by $0.7$ (instead of $0.8$) after it has been crossed, whenever the acceptance probability falls below $70\%$ in a warm-up window. This allows the step-sizes between consecutive episodes to vary more in response to the potentially more abrupt changes in the posterior landscape for these two tolerance values. The results are illustrated in Figure \ref{fig:deepsea_hmc_add}, which, in comparison to Figures \ref{fig:deepsea_policy_cvg} and \ref{fig:deepsea_hmc_acc}, shows no improvements in cumulative regrets despite a higher average acceptance probability for the $\epsilon=0.01$ case.

\textbf{More on improper policies.} We evaluate the performance of the algorithms on the depth-$5$ \texttt{deep sea swirl} environment with deterministic transitions, where improper policies exist. Figure \ref{fig:deepseaswirl_s10_policy_cvg} demonstrates that at a prior variance of $\sigma^2=10^2$ and a tolerance $\epsilon \geq 0.1$, the optimal trajectory has a near-zero probability of being selected. Instead, our more detailed analysis shows that the posterior mass concentrates heavily on the two routes that trap the agent in an infinite loop within the second row. This empirical observation precisely aligns with the theoretical discussions in Section \ref{sec:unidentifiability_improper}. Since the reward for travelling horizontally within this loop is merely $-0.09$ (compared to $-0.03$ and $0.03$ for travelling right-down and left-down, respectively), a high prior variance coupled with a high tolerance allows the improper policy region of the likelihood to dominate (due to the likelihood invariance). Consequently, the posterior places a disproportionately large mass on this region of the parameter space, causing exploration to become trapped within these loops. In contrast, Figure \ref{fig:deepseaswirl_s1_policy_cvg} shows that reducing the prior variance to $\sigma^2=1$ mitigates this pathology for small $\epsilon$. Under a tighter prior, the optimal and near-optimal policies are selected with high probability, while the posterior probabilities of the improper policies diminish comparatively. This demonstrates that the prior effectively restricts the contribution of the likelihood invariance to the posterior landscape, thus reducing the posterior mass assigned to the improper policies. Note that, as shown in Figure \ref{fig:deepseaswirl_policy_cvg}, due to a narrow margin in expected cumulative rewards between the optimal and near-optimal policies (where the minimum difference is merely $0.06$), a tolerance of $\epsilon=0.01$ is insufficient to uniquely isolate the optimal policy.

Another notable observation is that while \textit{Bayes-TD} exhibited the worst performance among the TD-based variants in the standard \texttt{deep sea} environment, the converse is true in the \texttt{deep sea swirl} when $\sigma^2=10^2$ as illustrated in Figures \ref{fig:deepseaswirl_s10_cumreg} and \ref{fig:deepseaswirl_s10_expl_prog}, despite prematurely halting exploration. However, at $\sigma^2=1$, where the effect of improper policies diminishes, the relative performance of the TD-based methods becomes more comparable to our observations from the standard \texttt{deep sea} environment; see Figures \ref{fig:deepseaswirl_s1_cumreg} and \ref{fig:deepseaswirl_s1_expl_prog}.

=\textbf{More on multi-modality.} We evaluate the algorithms on the depth-$5$ \texttt{deep sea pyramid} with deterministic transitions. Figure \ref{fig:deepseapyramid_policy_cvg} illustrates that under \textit{Bayes-BR}, all eight equivalence classes of optimal policies (optimal trajectories) maintain approximately equal marginal posterior probabilities across all tested values of the tolerance $\epsilon$. However, the cumulative regret comparison in Figure \ref{fig:deepseapyramid_cumreg} presents a nuanced exploration-exploitation trade-off. While \textit{Bayes-BR} incurs a lower overall cumulative regret than \textit{Bayes-TD-En} and \textit{Bayes-TD-Max} for small $\epsilon$, the regret curve for \textit{Bayes-TD} flattens earlier than that of \textit{Bayes-BR} when $\epsilon=0.01$. A closer examination of Figures \ref{fig:deepseapyramid_expl_prog} and \ref{fig:deepseapyramid_rank} reveals that \textit{Bayes-TD} rapidly concentrated its posterior mass on a subset of the optimal trajectory, rather than exhaustively exploring the state-action space. Consequently, exploration halts prematurely, and the \textit{Bayes-TD} algorithm fails to discover the remaining optimal trajectories within the $100$ episodes.

\textbf{Stochastic transitions.} Figure \ref{fig:deepsea_sto_cumreg} confirms that our algorithm, \textit{Bayes-BR}, maintains robust empirical performance within the depth-$4$
\texttt{deep sea} environment with stochastic transitions. Specifically, in this environment, an agent has only a $0.729$ probability of successfully reaching the goal state, which yields a reward of $1$, even when consistently selecting the optimal move-right action.

\begin{figure}[h!]
    \centering
    \BeginAccSupp{ActualText={Two subplots showing the running statistics of the HMC rerun for the depth-5 deep sea environment with an adjusted fine-tuning hyperparameter over 100 episodes.}}
    \begin{subfigure}[b]{0.49\textwidth}
        \centering
        \begin{minipage}[c][4cm][c]{\linewidth}
            \centering
            \includegraphics[width=\linewidth, height=3cm, keepaspectratio]{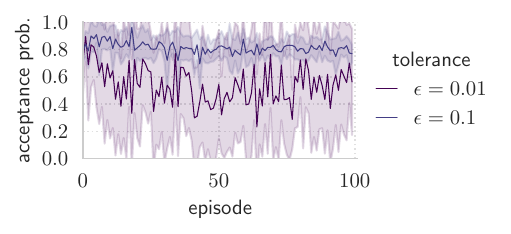}
        \end{minipage}
        \caption{HMC acceptance probability}
        \label{fig:deepsea_hmc_acc_add}
    \end{subfigure}
    \hfill
    \begin{subfigure}[b]{0.49\textwidth}
        \centering
        \begin{minipage}[c][4cm][c]{\linewidth}
            \centering
            \includegraphics[width=\linewidth, height=4cm, keepaspectratio]{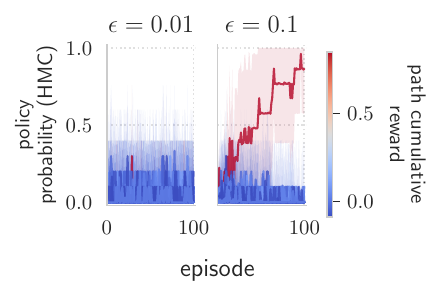}
        \end{minipage}
        \caption{Evolution of the posterior over policies}
        \label{fig:deepsea_convg_hmc_add}
    \end{subfigure}
    \EndAccSupp{}
    \caption{The left figure shows the HMC acceptance rates when the adjusted fine-tuning hyper-parameters are used for the deterministic depth-$5$ \texttt{deep sea} example for $\epsilon=0.01$ and $\epsilon=0.1$. The right plot shows the corresponding evolution of the posterior mass function for the optimal equivalence class of policies. All results are averaged over $10$ independent runs. Shaded areas show one standard deviation computed using these runs.}
    \label{fig:deepsea_hmc_add}
\end{figure}

\begin{figure}[h!]
    \centering
    \BeginAccSupp{ActualText={Two sets (top and bottom) of line graphs over 100 episodes comparing BR and HMC for each of the tolerances 0.01, 0.1, 0.2, 0.5, 1 and 5 in the depth-5 deep sea swirl environment, where the top set is for prior variance 100 and the bottom set is for prior variance 1. Each plot shows the convergence of the posterior probability for the optimal policy as well as the corresponding decay of suboptimal policies.}}
    \begin{subfigure}{1.0\textwidth}
    \centering
    \includegraphics[width=1\linewidth]{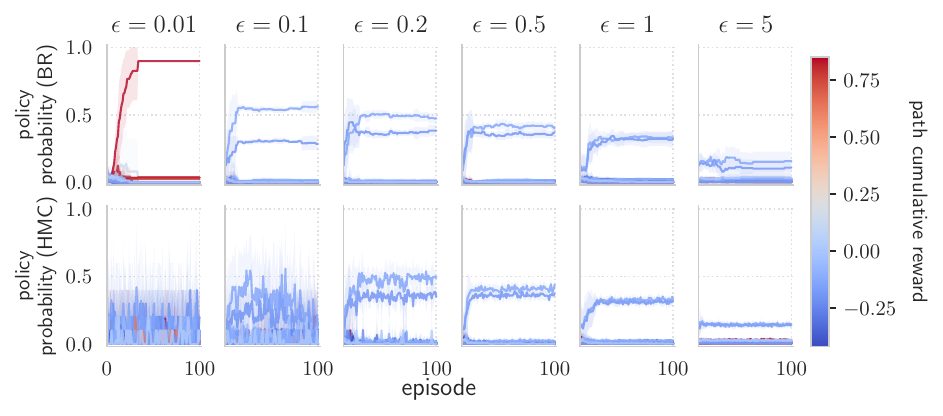}
    \caption{$\sigma^2=10^2$}
    \label{fig:deepseaswirl_s10_policy_cvg}
    \end{subfigure}
    \begin{subfigure}{1.0\textwidth}
    \centering
    \includegraphics[width=1\linewidth]{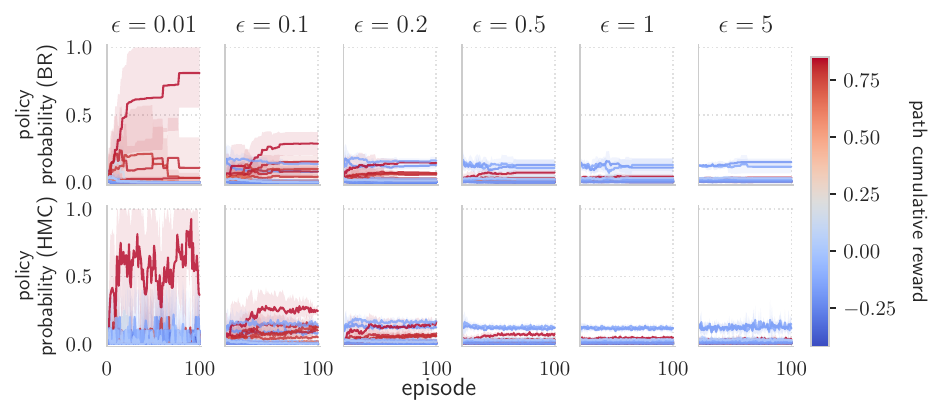}
    \caption{$\sigma^2=1$}
    \label{fig:deepseaswirl_s1_policy_cvg}
    \end{subfigure}
    \EndAccSupp{}
\caption{A comparison between \textit{Bayes-BR} and \textit{Bayes-BR-HMC} on the depth-$5$ {\tt deep sea swirl} environment with different prior variances $\sigma^2$. The evolution of the posterior over the equivalence classes of policies (Equation \ref{eq:ts_mdp}) with episode number is shown for different tolerance values $\epsilon$, after averaging over $10$ independent runs. The colour shows the cumulative reward achieved by each class. Shaded areas show one standard deviation computed with the $10$ independent runs.}
\label{fig:deepseaswirl_policy_cvg}
\end{figure}

\begin{figure}[h!]
    \centering
    \BeginAccSupp{ActualText={Two sets (top and bottom) of cumulative regret curves over 100 episodes in a depth-5 deep sea swirl environment, comparing BR against HMC, TD, TD-En and TD-Max for each of the tolerances 0.01, 0.1, 0.2, 0.5, 1 and 5, where the top set is for prior variance 100 and the bottom set is for prior variance 1.}}
    \begin{subfigure}{1.0\textwidth}
    \centering
    \includegraphics[width=1\linewidth]{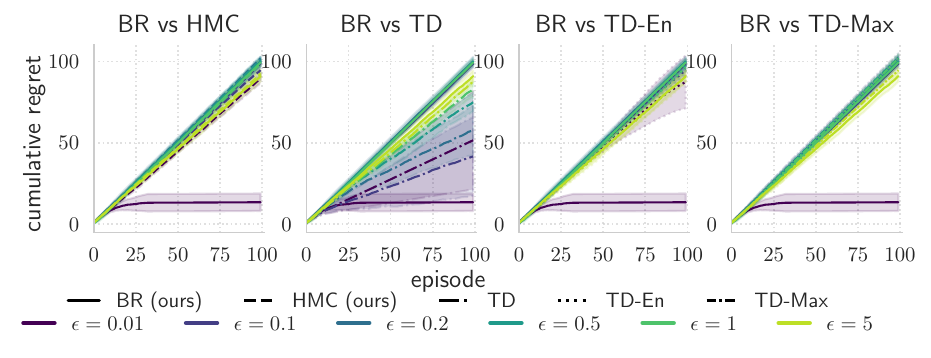}
    \caption{$\sigma^2=10^2$.}
    \label{fig:deepseaswirl_s10_cumreg}
    \end{subfigure}
    \begin{subfigure}{1.0\textwidth}
    \centering
    \includegraphics[width=1\linewidth]{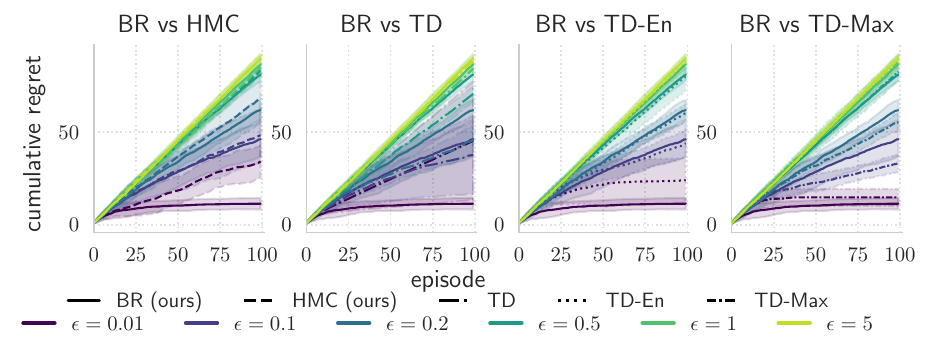}
    \caption{ $\sigma^2=1$}
    \label{fig:deepseaswirl_s1_cumreg}
    \end{subfigure}
    \EndAccSupp{}
\caption{The progress of the regret as it accumulates with each new episode, for the depth-$5$ \texttt{deep sea swirl} problem, for different tolerance values $\epsilon$ and different prior variances $\sigma^2$. Results are averaged over $10$ independent runs. Shaded areas indicate one standard deviation computed with the $10$ independent runs.}
\label{fig:deepseaswirl_cumreg}
\end{figure}

\begin{figure}[h!]
    \centering
    \BeginAccSupp{ActualText={Two sets (top and bottom) of subplots (left and right) for the depth-5 deep sea swirl environment over 100 episodes: the left plots track the exploration progress of BR, HMC, TD, TD-En and TD-Max for tolerances 0.01, 0.1, 0.2, 0.5, 1 and 5; the right plots show the corresponding HMC acceptance probabilities for each tolerance level. The top set is for prior variance 100 and the bottom set is for prior variance 1.}}
    \begin{subfigure}[b]{0.7\textwidth}
        \centering
        \includegraphics[width=\linewidth]{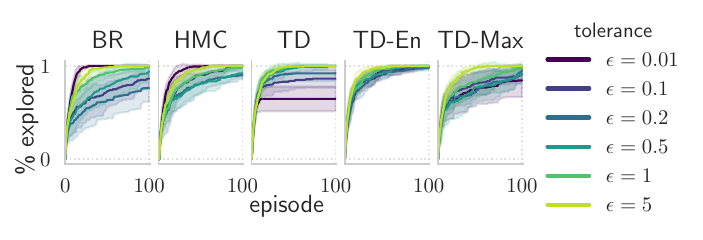}
        \caption{Percentage of state-action pairs visited at least once ($\sigma^2=10^2$)}
        \label{fig:deepseaswirl_s10_expl_prog}
    \end{subfigure}
    \hfill
    \begin{subfigure}[b]{0.29\textwidth}
        \centering
        \includegraphics[width=\linewidth]{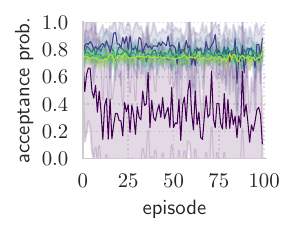}
        \caption{HMC acceptance probability ($\sigma^2=10^2$)}
        \label{fig:deepseaswirl_s10_hmc_acc}
    \end{subfigure}

    \vspace{1.5em}
    
    \begin{subfigure}[b]{0.7\textwidth}
        \centering
        \includegraphics[width=\linewidth]{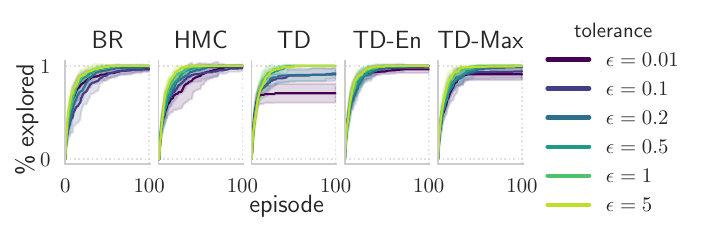}
        \caption{Percentage of state-action pairs visited at least once ($\sigma^2=1$)}
        \label{fig:deepseaswirl_s1_expl_prog}
    \end{subfigure}
    \hfill
    \begin{subfigure}[b]{0.29\textwidth}
        \centering
        \includegraphics[width=\linewidth]{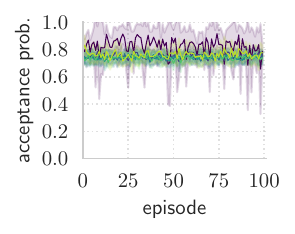}
        \caption{HMC acceptance probability ($\sigma^2=1$)}
        \label{fig:deepseaswirl_s1_hmc_acc}
    \end{subfigure}
    \EndAccSupp{}

    \caption{Results for the depth-$5$ \texttt{deep sea swirl} environment with different prior variances $\sigma^2$. All results are averaged over $10$ independent runs. Shaded areas show one standard deviation computed using these runs.}
    \label{fig:deepseaswirl_cumreg_and_acc}
    
\end{figure}

\begin{figure}[h!]
    \centering
    \BeginAccSupp{ActualText={Line graphs over 100 episodes comparing BR and HMC for each of the tolerances 0.01, 0.1, 0.2, 0.5, 1 and 5 in the depth-5 deep sea pyramid environment. The plots show the convergence of the posterior probability for the optimal policy as well as the corresponding decay of suboptimal policies.}}
    \includegraphics[width=1\linewidth]{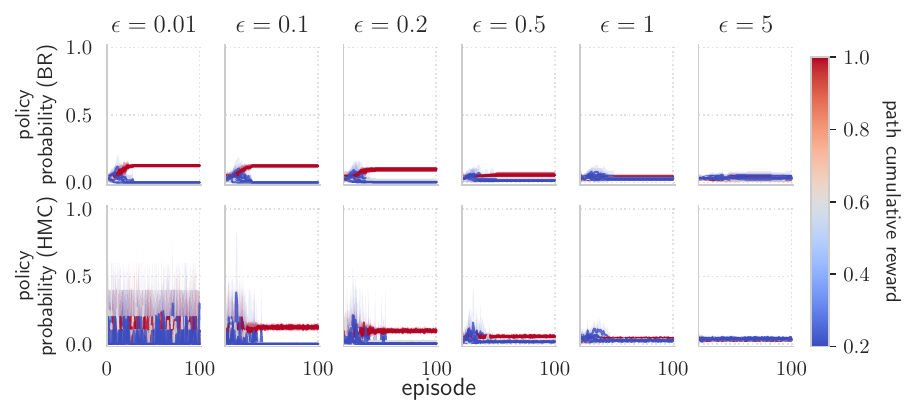}
    \EndAccSupp{}
    \caption{A comparison between \textit{Bayes-BR} and \textit{Bayes-BR-HMC} on the depth-$5$ {\tt deep sea pyramid} environment. The evolution of the posterior over the equivalence classes of policies (Equation \ref{eq:ts_mdp}) with episode number is shown for different tolerance values $\epsilon$, after averaging over $10$ independent runs. The colour shows the cumulative reward achieved by each class. Shaded areas show one standard deviation computed with the $10$ independent runs.}
    \label{fig:deepseapyramid_policy_cvg}
\end{figure}

\begin{figure}[h!]
    \centering
    \BeginAccSupp{ActualText={Cumulative regret curves over 100 episodes in a depth-5 deep sea pyramid environment, comparing BR against HMC, TD, TD-En and TD-Max for each of the tolerances 0.01, 0.1, 0.2, 0.5, 1 and 5.}}
    \includegraphics[width=1\linewidth]{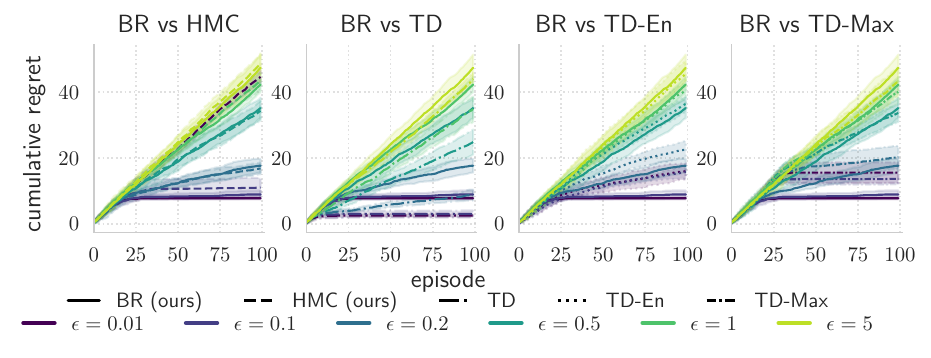}
    \EndAccSupp{}
    \caption{The progress of the regret as it accumulates with each new episode, for the depth-$5$ \texttt{deep sea pyramid} problem, for different tolerance values $\epsilon$. Results are averaged over $10$ independent runs. Shaded areas indicate one standard deviation computed with the $10$ independent runs.}
    \label{fig:deepseapyramid_cumreg}
\end{figure}

\begin{figure}[h!]
    \centering
    \BeginAccSupp{ActualText={Two subplots for the depth-5 deep sea pyramid environment over 100 episodes: the left tracks the exploration progress of BR, HMC, TD, TD-En and TD-Max for tolerances 0.01, 0.1, 0.2, 0.5, 1 and 5; the right plots the corresponding HMC acceptance probabilities for each tolerance level.}}
    \begin{subfigure}[b]{0.7\textwidth}
        \centering
        \includegraphics[width=\linewidth]{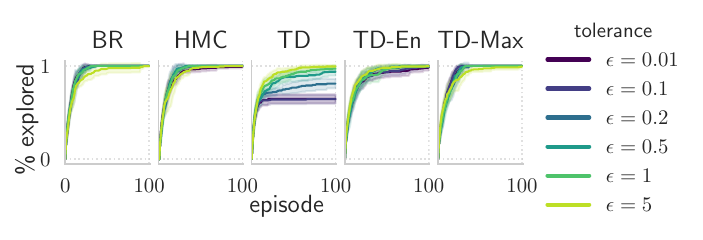}
        \caption{Percentage of the environment's state-action pairs visited at least once.}
        \label{fig:deepseapyramid_expl_prog}
    \end{subfigure}
    \hfill
    \begin{subfigure}[b]{0.29\textwidth}
        \centering
        \includegraphics[width=\linewidth]{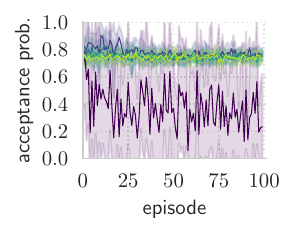}
        \caption{HMC acceptance probability}
        \label{fig:deepseapyramid_hmc_acc}
    \end{subfigure}
    \EndAccSupp{}
    \caption{Results for the depth-$5$ \texttt{deep sea pyramid} environment. All results are averaged over $10$ independent runs. Shaded areas show one standard deviation computed using these runs.}
    \label{fig:deepseapyramid_cumreg_and_acc}
\end{figure}

\begin{figure}[h!]
    \centering
    \BeginAccSupp{ActualText={Line graphs comparing the posterior probability of choosing each of the eight optimal classes of policies at episode 100 between BR, HMC, TD, TD-En and TD-Max for the deep sea pyramid environment, plotted for each of the tolerances 0.01, 0.1, 0.2, 0.5, 1 and 5.}}
    \includegraphics[width=1\linewidth]{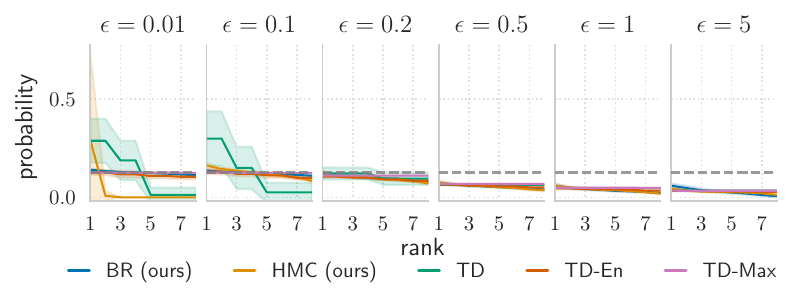}
    \EndAccSupp{}
    \caption{Ranked posterior probabilities of the eight optimal classes of policies for the depth-$5$ \texttt{deep sea pyramid} problem at episode $100$. The dashed line indicates a uniform distribution over the optimal policy classes, which is preferred. Results are averaged over $10$ independent runs. To account for permutation invariance, the probabilities from each run are sorted in descending order before averaging. Shaded areas indicate one standard deviation computed with the $10$ independent runs.}
    \label{fig:deepseapyramid_rank}
\end{figure}

\begin{figure}[h!]
    \centering
    \BeginAccSupp{ActualText={Cumulative regret curves over 100 episodes in a depth-4 stochastic deep sea environment, comparing BR against HMC, TD, TD-En and TD-Max for each of the tolerances 0.01, 0.1, 0.2, 0.5, 1 and 5.}}
    \includegraphics[width=1\linewidth]{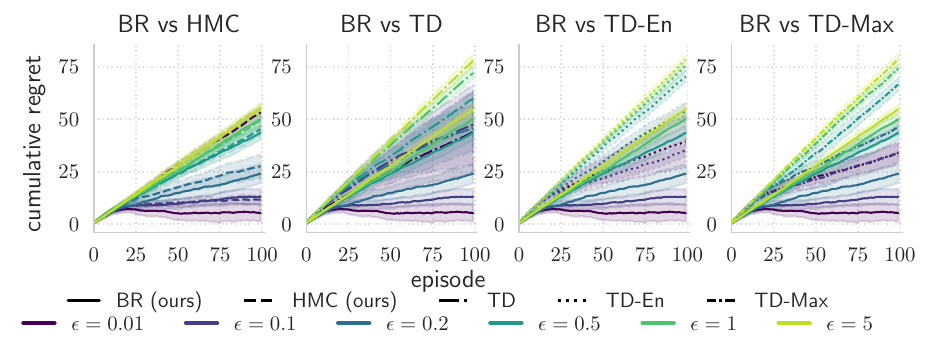}
    \EndAccSupp{}
    \caption{The progress of the regret as it accumulates with each new episode, for the depth-$4$ \texttt{deep sea} problem with stochastic transitions ($0.9$ probability of transitioning according to the selected action at all but the bottom row of states) and deterministic rewards, for different tolerance values $\epsilon$. Results are averaged over $10$ independent runs. Shaded areas indicate one standard deviation computed with the $10$ independent runs.}
    \label{fig:deepsea_sto_cumreg}
\end{figure}